\definecolor{mydarkblue}{rgb}{0,0.3,0.5}
\definecolor{lightblue}{rgb}{0.9, 0.95, 1.0}
\newtheorem{assumption}{Assumption}
\DeclareMathAlphabet{\mathbsf}{OT1}{cmss}{bx}{n}% bold sans serif
\DeclareMathAlphabet{\mathssf}{OT1}{cmss}{m}{sl}% slanted sans serif
\crefname{lemma}{lemma}{lemmas}
\Crefname{lemma}{Lemma}{Lemmas}
\crefname{thm}{theorem}{theorems}
\Crefname{thm}{Theorem}{Theorems}
\crefname{prop}{proposition}{propositions}
\Crefname{prop}{Proposition}{Propositions}
\crefname{defn}{definition}{definitions}
\Crefname{defn}{Definition}{Definitions}
\crefname{equation}{equation}{equations}
\Crefname{equation}{Equation}{Equations}
\Crefname{section}{Section}{Sections}
\Crefname{appendix}{Appendix}{Appendices}
\crefname{figure}{figure}{figures}
\Crefname{figure}{Figure}{Figures}
\crefname{algorithm}{algorithm}{algorithms}
\Crefname{algorithm}{Algorithm}{Algorithms}
\crefname{assumption}{assumption}{assumptions}
\Crefname{assumption}{Assumption}{Assumptions}
\DeclareMathOperator*{\argmin}{arg\,min}
\begin{document}

\title{On Provable Length and Compositional Generalization}

\author{\name Kartik Ahuja \email kartikahuja@meta.com \\
       \addr FAIR (Meta)\\
       \AND
       \name Amin Mansouri \email amin.mansouri@mila.quebec \\
       \addr EPFL and Mila-Quebec AI Institute\\}

\editor{}

\maketitle

\begin{abstract}%   <- trailing '%' for backward compatibility of .sty file
Out-of-distribution generalization capabilities of sequence-to-sequence models can be studied from the lens of two crucial forms of generalization: length generalization --  the ability to generalize to longer sequences than ones seen during training, and compositional generalization: the ability to generalize to token combinations not seen during training. In this work, we provide first provable guarantees on length and compositional generalization for common sequence-to-sequence models -- deep sets, transformers, state space models, and recurrent neural nets -- trained to minimize the prediction error.  We show that \emph{limited capacity} versions of these different architectures achieve  both length and compositional generalization provided the training distribution is sufficiently diverse. In the first part, we study structured limited capacity variants of different architectures and arrive at the generalization guarantees with limited diversity requirements on the training distribution. In the second part, we study limited capacity variants with less structural assumptions and arrive at generalization guarantees but with more diversity requirements on the training distribution. Further, we also show that chain-of-thought supervision enables length generalization in higher capacity counterparts of the different architectures we study.
\end{abstract}

\begin{keywords}
 Length generalization, compositional generalization, chain-of-thought
\end{keywords}

\section{ Introduction}

Large language models (LLMs), such as the GPT models \citep{achiam2023gpt} and the Llama models \citep{touvron2023llama}, have led to a paradigm shift in the development of future artificial intelligence (AI) systems. The accounts of their successes \citep{bubeck2023sparks, gunasekar2023textbooks} as well as their failures, particularly in reasoning and planning \citep{bubeck2023sparks, stechly2023gpt, valmeekam2023can}, continue to rise. The successes and failures of these models have sparked a debate about whether they actually learn general algorithms or if their success is mainly due to memorization and a superficial form of generalization \citep{dziri2024faith}.

The ability of a model to perform well across different distribution shifts highlights that it learns general algorithms. For models with fixed-dimensional inputs, considerable effort has led to methods with provable out-of-distribution (OOD) generalization guarantees \citep{rojas2018invariant,  chaudhuri2023does, wiedemer2023compositional, eastwood2024spuriosity}. For sequence-to-sequence models,  a large body of empirical works has investigated OOD generalization \citep{anil2022exploring, jelassi2023length}, but we lack efforts to study provable OOD generalization guarantees for these models. These provable guarantees provide a stepping stone towards explaining the success of the existing paradigm and also shine a light on where the existing paradigm fails.

OOD generalization capabilities of sequence-to-sequence models can be studied from the lens of two forms of generalization: length generalization --  the ability to generalize to longer sequences than ones seen during training, and compositional generalization -- the ability to generalize to token combinations not seen during training.  Although transformers \citep{vaswani2017attention} are the go-to sequence-to-sequence models for many applications, recently alternative architectures based on state-space models, as noted by \citet{gu2021efficiently}, \citet{orvieto2023resurrecting}, and \citet{gu2023mamba}, have shown a lot of promise. This motivates us to study a range of natural sequence-to-sequence hypothesis classes, including deep sets \citep{zaheer2017deep}, transformers, state space models (SSMs) and recurrent neural networks (RNNs). We divide the study into two parts: a) hypothesis classes with structural assumptions and b) general hypothesis classes with less structural assumptions. We summarize our main contributions as follows.

\paragraph{\textbf{Hypothesis classes with structural assumptions}}

\begin{itemize}
\item  For limited capacity versions of deep sets, transformers, SSMs, and RNNs with structural assumptions,  we find that a standard expected risk minimization-based learner provably achieves length and compositional generalization provided the training distribution satisfies certain diversity conditions.

\item Across different architectures, we find that the learned representations are linearly related to the representations generated by the true labeling function, which is also termed \emph{linear identification} \citep{khemakhem2020variational, roeder2021linear}.

 \item Through a range of experiments, we show the success in both forms of generalization, matching the predictions of the theory and even going beyond.
 
 \item For high capacity versions of deep sets, transformers, SSMs, and RNNs, we also show that chain-of-thought (CoT) supervision enables length generalization.
\end{itemize}

\paragraph{\textbf{General hypothesis classes with less structural assumptions}}

\begin{itemize}
\item For general hypothesis classes, we only focus on length generalization guarantees. We start with finite hypothesis classes and show that standard expected risk minimization based learner provably achieves length generalization provided the training sequences are sufficiently long.  

\item We prove that for hypothesis classes that are Lipschitz continuous with a Lipschitz constant that is independent of sequence length, approximate length generalization is achievable, provided training sequences are sufficiently long. We show that under certain parametric constraints, multi-block deep sets, transformers, RNNs, and SSMs satisfy the above stated Lipschitz condition.  To achieve length generalization, we use a constrained learner that seeks a model that simultaneously satisfies $\epsilon$ optimality at all sequence lengths in training. This learner closely resembles collaborative learning models from \citep{blum2017collaborative} and is also similar to group-DRO \citep{sagawa2019distributionally}.

\item  The results for general hypothesis classes capture more expressive classes than structured hypothesis classes, but they require access to more diverse training distributions. Although the constrained learner is learned via a procedure that is computationally not tractable, it suggests that there are other learning procedures possibly based on group-DRO that are both computationally tractable and better than expected risk minimization at length generalization. We believe this is an exciting future work.
\end{itemize}

  To the best of our knowledge, our provable guarantees for length and compositional generalization for sequence-to-sequence models are the first in the literature. The following sections are organized as follows.  In Section~\ref{sec: related_works}, we discuss the body of related works on length generalization and compositional generalization. In Section~\ref{sec: lgcg_framework}, we provide the basic framework, the definitions of both the length generalization and the compositional generalization, and the learning objective. In Section~\ref{sec: shc}, we operate with structured hypothesis classes and develop length and compositional generalization guarantees for different architectures -- deep sets, transformers, SSMs and RNNs. We also carried out experiments to verify the predictions of the theory. We further extend our results to incorporate CoT supervision. In Section~\ref{sec: ghc}, we develop results for hypothesis classes with fewer structural assumptions. Finally, in Section~\ref{sec: discussion}, we discuss the the limitations and important potential future works.

\section{Related Works} 
\label{sec: related_works}

\paragraph{Length generalization} 

Over the years, many important empirical insights on length generalization have come to the fore.  \cite{shaw2018self} discovered the drawbacks of absolute positional embeddings and suggested relative positional embeddings as an alternative. Subsequent empirical analysis, notably by \cite{anil2022exploring} and \cite{jelassi2023length}, explored length generalization in different settings for transformer-based models. Key findings revealed that larger model sizes do not necessarily enhance generalization and the effectiveness of relative positional embeddings appeared to depend on the task. In \cite{kazemnejad2024impact}, the authors conducted a comprehensive study of different positional embeddings and provided evidence to show that explicit use of positional encodings is perhaps not essential. In \cite{deletang2022neural}, the authors conducted experiments on tasks divided based on their placement in the Chomsky hierarchy and showed the importance of structured memory (stack, tape) in length generalization.  In a recent work, \cite{zhou2023algorithms} proposed the RASP conjecture, which delineates the tasks where transformers excel or fall short in length generalization, emphasizing the need for simplicity of tasks for the transformer and data diversity.  In this work, we take a step back and develop a first-principles-based approach to study length generalization for a range of architectures.  Our work is inspired by the experimental findings of \cite{zhou2023algorithms}.  Although \cite{zhou2023algorithms} provides empirical evidence for the conjecture, our work formalizes and proves length generalization for a range of architectures. 

On the theoretical side of length generalization, in \cite{abbe2023generalization}, the authors showed an implicit bias of neural network training toward minimum-degree interpolators. This bias was used to explain the failures of length generalization in the parity task from \cite{anil2022exploring}. In \cite{xiao2023conditions}, the authors use directed acyclic graphs (DAGs) to formulate the computation in reasoning tasks and characterize the conditions under which there exist functions that permit length generalization. Our results crucially differ; we show a range of conditions under which models learned via standard expected risk minimization achieve length and compositional generalization.  In  \cite{hou2024universal}, the authors  propose an interesting scratch pad strategy inspired by the operation of Turing machines. They call this strategy Turing programs. The scratch pad emulates the operation of a Turing machine. The authors argue that there exists a short RASP program ($O(n)$ length) that can simulate the operation of a Turing machine, which does not generate repeated $n$ grams, for a sufficiently long number of steps ($O(\exp(n))$).

\paragraph{Compositional generalization}

Compositionality has long been seen as a key piece to the puzzle of  human-level intelligence  \citep{fodor1988connectionism, hinton1990mapping, plate1991holographic, montague1970pragmatics}.  
 Compositionality is a broad umbrella term associated with several aspects \citep{hupkes2020compositionality}. In this work, we focus on systematicity, which evaluates a model's capability to understand known parts and combine them in new contexts.  The breadth of research on compositional generalization, encompassing studies like \cite{lake2018generalization, loula2018rearranging, gordon2019permutation, hupkes2020compositionality,  kim2020cogs, xu2022compositional,  arora2023theory}, is too expansive to address comprehensively here, refer to these surveys \citep{lin2023survey, sinha2024survey} for details.

In recent years, several works have taken first steps towards theoretical foundations of compositionality. 
% We borrow our formal definitions of compositionality from. 
We leverage the mathematical definition of compositionality from \cite{wiedemer2023compositional}, which focuses on generalization to the Cartesian product of the support of individual features. In \cite{dong2022first}, the authors analyze the conditions that provably guarantee generalization to the Cartesian product of the support of individual training features. \cite{dong2022first} studied additive models, i.e., labeling function is additive over individual features.  In \citep{wiedemer2023provable}, the authors focus on a more general model class than \cite{dong2022first}, where the labeling function is of the form $f(x_1,\cdots, x_n) = C(\phi_1(x_1),\cdots, \phi_n(x_n))$.  However, to guarantee compositional generalization, \cite{wiedemer2023compositional} require that the learner know the exact function $C$ that is used to generate the data. In our work, we do not make such an assumption; our data generation is dictated by the architecture in question, e.g. RNN, and we constrain the dimension of its hidden state. \cite{lachapelle2023additive, brady2023provably} extend these precursor results from \cite{dong2022first} from the supervised setting to the unsupervised setting. In particular, \cite{lachapelle2023additive, brady2023provably} are inspired by the success of object-centric models and show that additive decoder-based autoencoders achieve compositional generalization. 
In \cite{schug2023discovering}, the authors study compositionality in the context of meta-learning. Each task parameter in their setting is specified via a linear combination of some basis module parameters. They construct an approach that achieves provable compositional guarantees and outperforms meta-learning approaches such as MAML and ANIL.

\section{Length and Compositional Generalization: Definitions and Framework} 
\label{sec: lgcg_framework}

We are given a data set consisting of a sequence of inputs $\{x_1, \cdots, x_t\}$ and a corresponding sequence of labels $\{y_1, \cdots, y_t\}$, where each $x_i \in \mathbb{R}^{n}$ and $y_i \in \mathbb{R}^{m}$.   Observe that this formulation includes both standard downstream tasks such as arithmetic tasks, e.g., $y_i = \sum_{j=1}^{i} x_j$, $y_i = \Pi_{j=1}^{i}x_j$ etc.,  as well as next-token prediction task, where  $\{y_1, \cdots, y_t\}=\{x_2, \cdots, x_{t+1}\}$.  We denote a sequence $\{s_1, \cdots, s_t\}$ as $s_{\leq t}$, $X_k$ is random variable for token at $k^{th}$ position and its realization is $x_k$.   Consider a sequence $\{x_j\}_{j=1}^{\infty}$, which is sampled from $\mathbb{P}_{X}$, and a subsequence of this sequence $x_{\leq t}=\{x_j\}_{j=1}^{t}$, whose distribution is denoted as $\mathbb{P}_{X_{\leq t}}$. The label $y_t=f(x_{\leq t})$, where $f$ is the labeling function. The tuple of base distribution and the labeling function is denoted as $\mathcal{P} = \Big\{\mathbb{P}_{X}, f\Big\}$ and the tuple of base distribution up to length $t$ is denoted as $\mathcal{P}(t) = \Big\{\mathbb{P}_{X_{\leq t}}, f\Big\}$.  The support of the $k^{th}$ token $X_k$ in the sequence sampled from $\mathbb{P}_X$ is denoted $\mathsf{supp}(X_k)$. Given training sequences of length $T$ from $\mathcal{P}(T)$,  we are tasked to learn a model from the dataset that takes a sequence $x_{\leq t}$ as input and predicts the true label $y_t$ as well as possible. If the model succeeds to predict well on sequences that are longer than maximum training length $T$, then it is said to achieve length generalization (a more formal definition follows later). Further, if the model succeeds to predict well on sequences comprising of combination of tokens that are never seen under training distribution, then it is said to achieve compositional generalization (a more formal definition follows later.).  We next study both of these generalization forms.

\paragraph{Learning via expected risk minimization} Consider a map $h$ that accepts sequences of $n$-dimensional inputs to generate a $m$-dimensional output.  We measure the loss of predictions of $h$, i.e., $h(x_{\leq t})$,  against true labels as $\ell\big(h(x_{\leq t}), y_t\big)$, where $y_t$ is the true label for sequence $x_{\leq t}$. In what follows, we use the $\ell_2$ loss. Given sequences sampled from $\mathcal{P}(T)$, the expected risk in all instances of time up to the maximum length $T$ is defined as $R(h; T) \coloneq \sum_{t=1}^{T}\mathbb{E}\big[\ell(h(x_{\leq t}), y_t)\big]$.
The learner aims to find an $h^{*}$ that solves

\begin{equation}
   \mathcal{H}^{*}  = \argmin_{h \in \mathcal{H}} R(h; T), 
    \label{eqn: risk_min}
\end{equation}

where $\mathcal{H}$ is the hypothesis class of models and $\mathcal{H}^{*}$ is the set of all the solutions to equation \eqref{eqn: risk_min}. We seek to understand the properties of solutions to \eqref{eqn: risk_min} through the lens of the following questions. 
\begin{mdframed}
    When do common sequence-to-sequence models $\mathcal{H}$ succeed at length \& compositional generalization and when do they fail?  
\end{mdframed}

\begin{definition} \textbf{Length Generalization:} 
Consider the setting where a model is trained on sequences $(x_{\leq t}, y_{\leq t})$ of length up to $T$ drawn from $\mathcal{P}(T)$. If the model achieves zero error on sequences $(x_{\leq t}, y_{\leq t})$ of length up to $\tilde{T}$ drawn from $\mathcal{P}(\tilde{T}), \forall\; \tilde{T}\geq 1$,  then it achieves length generalization w.r.t. $\mathcal{P}$.
\end{definition}

 In the above definition of length generalization, we simply ask if the model is generalized to longer sequences.
We drop the phrase w.r.t $\mathcal{P}$ hereafter to avoid repetition. We now define a test distribution that evaluates compositional generalization capabilities. We consider sequences of fixed length $T$. Define a uniform distribution $\mathbb{Q}_{X_{\leq T}}$ such that the support of $\mathbb{Q}_{X_{\leq T}}$ is equal to the Cartesian product of the support of each token $X_k$ of $\mathbb{P}_{X}$, we write this joint support as $\Pi_{j=1}^{T} \mathsf{supp}(X_j)$. In this case as well, the labeling function continues to be $f$. Hence, we obtain the tuple $\mathcal{Q}(T)=\{\mathbb{Q}_{X_{\leq T}}, f\}$.

\begin{figure}
    \centering
    \includegraphics[width=0.8\textwidth, trim=0in 2.5in 0 3in ]{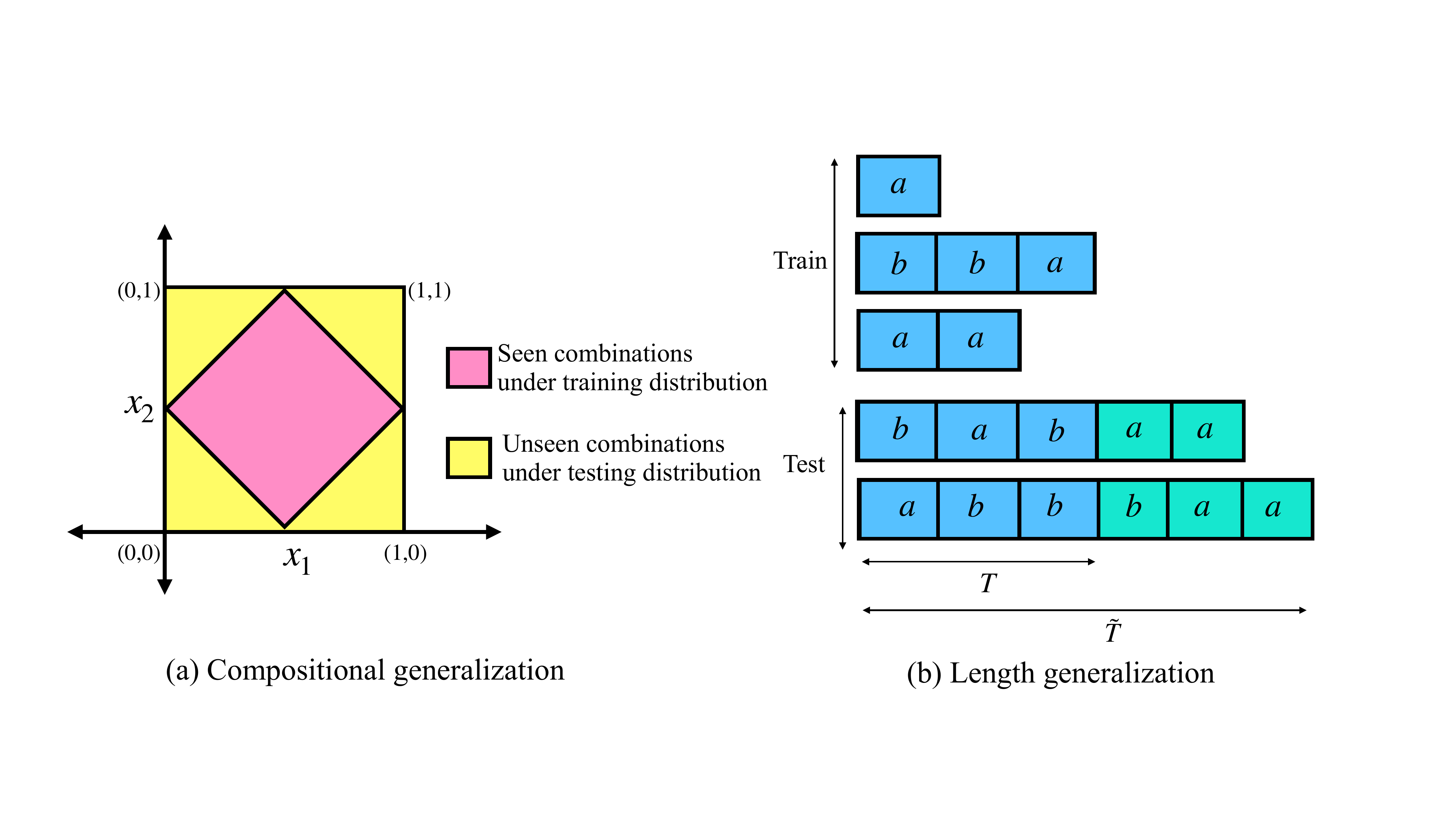}
    \caption{Illustrating support of train vs test distribution for (a) compositional generalization and (b) length generalization.}
    \label{fig1}
\end{figure}

\begin{definition} \textbf{Compositional Generalization:} 
Consider the setting where a model is trained on sequences $(x_{\leq t}, y_{\leq t})$ of length up to $T$ drawn from $\mathcal{P}(T)$. If the model achieves zero error on  sequences $(x_{\leq t}, y_{\leq t})$ of length up to  $T$ drawn from $\mathcal{Q}(T)$,  then it achieves compositional generalization.
\end{definition}

This definition of compositionality above is based on \cite{wiedemer2023compositional, brady2023provably}.  In this definition, we seek generalization to new combinations of seen tokens.

  \textit{Illustrative example} We teach the model multiplication on sequences of length $2$ , where each $x_j$ is a scalar, $y_i=\Pi_{j=1}^{i} x_{j}$. Say the support of the entire sequence drawn from $\mathbb{P}_{X}$ is $\big\{x \; \big| \; \|(x_1,x_2)-\frac{1}{2}\boldsymbol{1}\|_1\leq \frac{1}{2}, \; x_k \in [0,1], \; \forall k \geq 3 \big\}$. The support of the training distribution $\mathbb{P}_{X_{\leq 2}}$ is $\big\{x \; \big| \|(x_1,x_2)-\frac{1}{2}\boldsymbol{1}\|_1\leq \frac{1}{2}\big\}$ shown in the pink region in Figure~\ref{fig1}a. In Figure~\ref{fig1}a,  we illustrate compositional generalization, the model is trained on the pink region and asked to generalize to the yellow region.  Furthermore, if the model continues to correctly multiply on longer sequence lengths in $\mathbb{P}_{X_{\leq \tilde{T}}}$ for $\tilde{T}\geq T$, then it achieves length generalization (Figure~\ref{fig1}b).  

  \paragraph{Contrast with existing works on distribution shifts:}  Both notions of compositional generalization and length generalization introduced above involve testing on distributions whose support is not contained in the training distributions. The long line of work on distribution shifts \citep{sugiyama2007covariate, david2010impossibility, ben2014domain, rojas2018invariant, arjovsky2019invariant, ahuja2021invariance} assumes that the support of test is contained in the support of the train distribution. 
In recent years, there has been the development of a theory for distribution shifts under support mismatch \citep{dong2022first, abbe2023generalization, wiedemer2023compositional,netanyahu2023learning, shen2023engression}. Our work is closer to the latter line of work, but it comes with its own \emph{technical challenges}, which involve building new proofs different from the above line of work, as we  study a new family of models, i.e., sequence-to-sequence models, and a new form of generalization, i.e., length generalization. There is an important remark that is worth mentioning here. In line with most works that study out-of-distribution generalization \cite{dong2022first, wiedemer2023compositional, arjovsky2019invariant}, we also assume access to infinite data from the training distribution. This is a reasonable assumption as it isolates the study of generalization behaviors of ideal estimators beyond the training distribution from the impact of imperfect estimation arising due to access to finite samples from the training distribution.

\paragraph{RASP-L conjecture} \cite{zhou2023algorithms} propose a conjecture backed by empirical evidence, which delineates the conditions that suffice for length generalization for transformers. 
The conjecture places three requirements -- a) realizability: the task of interest is realizable on the transformer, b) simplicity: the task can be expressed as a short program in RASP-L language, c) diversity: the training data is sufficiently diverse such that there is no shorter program that achieves in-distribution generalization but not OOD generalization. We leverage assumptions similar to a) and b). We assume realizability, which means labeling function $f$ is in the hypothesis class $\mathcal{H}$. As to simplicity, we consider hypothesis class $\mathcal{H}$ with limited capacity. In terms of limiting the capacity, we divide our analysis into two parts. First,  we study structured hypothesis classes, where we make assumptions for example, on the number of blocks in the transformer. Next, we study less structured hypothesis classes, where we make assumptions that allow for bounded covering of the class.  We find c) from \citep{zhou2023algorithms}, which states that no shorter program achieves the generalization in-distribution but not the generalization out-of-distribution, to be a strong assumption. We will also make assumptions on data diversity, but our assumptions will directly place constraints on more natural primitives, such as the support of the training distributions and the length of the training sequences.

% In our setting, we do not invoke it and instead, we require that the support of test distribution is not larger than the Cartesian product of the marginal distribution of the tokens. We now move to proving simplified versions of this conjecture for different architectures. 

\section{Structured Hypothesis Classes}
\label{sec: shc}
In this section, we study different sequence-to-sequence hypothesis classes under certain structural assumptions. 

\subsection{Deep sets}
\label{sec: deep sets}
Deep sets are a natural first choice of architecture to study here. These take sets as inputs, and thus handle inputs of arbitrary lengths. These were introduced in \cite{zaheer2017deep}. Informally stated, \cite{zaheer2017deep} show that a large family of permutation-invariant functions can be decomposed as $\rho \big(\sum_{x\in \mathcal{X}} \phi(x)\big)$. Consider examples of the sum operator or the multiplication operator, which take $\{x_1, x_2, \cdots, x_k\}$ as input and return the sum $y = \sum_{j=1}^{k} x_{j}$ or the product $y=\Pi_{j=1}^{k} x_j$. These operations are permutation invariant and can be expressed using the decomposition above.  For the sum operator $\rho$ and $\phi$ are identity and for the multiplication operator $\rho = \exp$ and $\phi=\log$.  Consider another example from language. We construct a bag of words sentiment classifier, where $\{x_1,x_2,\cdots, x_i\}$ is the set of words that appear in the sentence, $\phi(x_j)$ is the feature embedding for word $j$. $\sum_{j\leq i}\phi(x_j)$ is the representation of the entire sentence which is passed to the final layer $\rho$ that generates the sentiment label. In what follows, we aim to understand when such a classifier generalizes to sentences beyond training lengths and to new sentences comprised of unseen word combinations. 

Before we state the first result, we start with a lemma which is used in several of the results that follow. 
In all the results that follow, we work with standard topology in $\mathbb{R}^{nt}$, where $n$ is dimension of each token and $t$ is the sequence length. We remind the reader of the definition of a regular closed set --  if a set is equal to the closure of its interior, then it is said to be a regular closed set.  In all the results that follow, we either work with continuous random variables for which the Radon-Nikodym derivative of $X_{\leq t}$ is absolutely continuous w.r.t Lebesgue measure $\forall t$ or we work with discrete random variables for which the Radon-Nikodym derivative of $X_{\leq t}$ is absolutely continuous w.r.t counting measure $\forall t$. 

\begin{assumption}
Each function in the hypothesis class $\mathcal{H}$ takes a sequence $\{x_1,\cdots, x_i\}$ as input and outputs $h(x_1,\cdots, x_i) =  \omega \Big(\sum_{j\leq i} \psi(x_j) \Big)$, where $\omega$ is a single layer perceptron with a continuously differentiable bijective activation (e.g., sigmoid) and $\psi$ is a map that is differentiable. 
\label{assm: perm_inv}
\end{assumption}

A simple mathematical example of a function from the above family when $\psi(x_j) = [x_j, x_j^2, x_j^3]$ is a polynomial map of degree $3$ and each $x_j$ is a scalar  -- $\sigma\big(a \sum_{j \leq i} x_j + b \sum_{j\leq i} x_j^2 + c\sum_{j \leq i} x_j^3\big)$.  In the assumption that follows, we assume that the support of the sequences is regular closed. 

\begin{assumption}
\label{assm: reg_closed}
    The joint support $\mathsf{supp}(X_{\leq i})$ is non-empty and regular closed for all $i\leq T$. 
\end{assumption}

In most of our results in the main body, we invoke Assumption~\ref{assm: reg_closed}. This assumption is satisfied in many cases if the tokens are continuous random variables, but it is not satisfied for discrete random variables. Several of our key results can be extended to the setting with discrete tokens, we provide the key non-trivial extensions (Theorem~\ref{thm4:disc}) in the Appendix.

\paragraph{Linear identification} Each architecture that we study in this work relies on a hidden representation that is passed on to a non-linearity to generate the label. Under the realizability condition for deep sets, the labeling function takes the form $f(\mathcal{X}) = \rho (\sum_{x\in \mathcal{X}} \phi(x))$, where $\phi(x)$ is the hidden representation. If the learned deep set is denoted by $\omega (\sum_{x\in \mathcal{X}} \psi(x))$, then the learned hidden representation is $\psi(x)$. If $\psi(x) = A \phi(x)$, then the learned representation is said to \emph{linearly identify} the true data generating representation $\phi(x)$.  We borrow this definition from the representation identification literature \citep{khemakhem2020variational, roeder2021linear}.

\begin{lemma}
\label{lemma1}
Let $\mathcal{X}\subseteq \mathbb{R}^{n}$. If $f:\mathcal{X} \rightarrow \mathbb{R}^{m}$ and $g:\mathcal{X}\rightarrow \mathbb{R}^{m}$ are continuously differentiable functions that satisfy $f(x)=g(x)$ almost everywhere in $\mathcal{X}$, where $\mathcal{X}$ is a regular non-empty closed set, then $f(x)=g(x), \forall x \in \mathcal{X}$  and $\nabla f(x) = \nabla g(x), \forall x \in \mathcal{X}$, where $\nabla$ is the Jacobian w.r.t $x$. 
\end{lemma}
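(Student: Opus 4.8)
The plan is to exploit two features of a regular closed set $\mathcal{X}$: that its interior $\mathrm{int}(\mathcal{X})$ is dense in $\mathcal{X}$ (this is exactly the content of $\mathcal{X} = \overline{\mathrm{int}(\mathcal{X})}$), and that every nonempty open subset of $\mathbb{R}^{n}$ carries strictly positive Lebesgue measure. First I would upgrade the almost-everywhere agreement to pointwise agreement on the interior, then propagate it to the boundary by continuity, and finally repeat the continuity argument for the Jacobians.

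For the first step, since $f$ and $g$ are continuous the difference $f-g$ is continuous, so the set $N = \{x \in \mathrm{int}(\mathcal{X}) : f(x) \neq g(x)\}$ is open in $\mathbb{R}^{n}$. By hypothesis $f=g$ almost everywhere, so $N$ has Lebesgue measure zero; but a nonempty open subset of $\mathbb{R}^{n}$ has strictly positive measure, forcing $N = \emptyset$. Hence $f(x)=g(x)$ for every $x \in \mathrm{int}(\mathcal{X})$. Next, using regularity, $\mathrm{int}(\mathcal{X})$ is dense in $\mathcal{X}$: fixing any $x \in \mathcal{X}$ and a sequence $x_k \in \mathrm{int}(\mathcal{X})$ with $x_k \to x$, I have $f(x_k)=g(x_k)$ for all $k$, and passing to the limit by continuity of $f$ and $g$ yields $f(x)=g(x)$. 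This establishes $f=g$ on all of $\mathcal{X}$.

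For the Jacobians, on the \emph{open} set $\mathrm{int}(\mathcal{X})$ the two functions coincide, so two functions agreeing on an open neighborhood of a point have the same derivative there, giving $\nabla f(x)=\nabla g(x)$ for all $x \in \mathrm{int}(\mathcal{X})$. Since $f$ and $g$ are continuously differentiable, $\nabla f$ and $\nabla g$ are continuous on $\mathcal{X}$, so applying the density-plus-continuity argument of the previous step entrywise extends the identity $\nabla f = \nabla g$ from the dense interior to its closure $\mathcal{X}$.

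The two delicate points are the measure-theoretic observation that almost-everywhere equality alone does not force pointwise equality, yet on an \emph{open} set it does precisely because open sets have positive measure; and the treatment of boundary points for the Jacobian, where differentiability must be read in the sense admitting a continuous Jacobian on $\mathcal{X}$. The regular-closed hypothesis is exactly what is needed in both the value and derivative arguments: it guarantees the interior is dense so that continuity can transport the pointwise identities from $\mathrm{int}(\mathcal{X})$ to the boundary. I expect the main obstacle to be nothing computational but rather stating this density/continuity transfer cleanly, since without regularity the interior could fail to be dense and the conclusion could break at isolated boundary components.
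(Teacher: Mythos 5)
Your proof is correct and follows essentially the same route as the paper's: first establish $f=g$ on the interior, then transport the identity (and, after differentiating on the open interior, the Jacobian identity $\nabla f = \nabla g$) to all of $\mathcal{X}$ using density of the interior in the regular closed set together with continuity of $f$, $g$, $\nabla f$, $\nabla g$. The only difference is in execution of the interior step: the paper locates agreement points in shrinking balls around a hypothetical disagreement point and passes to the limit by continuity, whereas you argue directly that the disagreement set is open and Lebesgue-null, hence empty---a cleaner phrasing of the same underlying idea.
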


The proof is provided in Appendix Section~\ref{sec: proofoflemma1}. 

\begin{theorem}
\label{thm1} If $\mathcal{H}$ follows Assumption~\ref{assm: perm_inv}, the realizability condition holds, i.e., $f\in \mathcal{H}$,  $\mathsf{supp}(X_j)=[0,1]^{n},\; \forall j \geq 1$, and the regular closedness condition in Assumption~\ref{assm: reg_closed} holds, then the model trained to minimize the risk in \eqref{eqn: risk_min} with $\ell_2$ loss generalizes to all sequences in the hypercube $[0,1]^{nt}, \;\forall t \geq 1$ and thus achieves length and compositional generalization. 
\end{theorem}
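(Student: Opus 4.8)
We want to show that a learned deep set $h^\star(x_{\le t}) = \omega\!\bigl(\sum_{j\le i}\psi(x_j)\bigr)$ that minimizes the risk \eqref{eqn: risk_min} agrees with the labeling function $f(x_{\le t}) = \rho\!\bigl(\sum_{j\le i}\phi(x_j)\bigr)$ everywhere on $[0,1]^{nt}$, not just on the training support. Since the $\ell_2$ loss is nonnegative and $f\in\mathcal{H}$ by realizability, $f$ itself attains zero risk, so any minimizer $h^\star$ also has zero risk. Hence $h^\star(x_{\le t}) = f(x_{\le t})$ almost everywhere on $\mathsf{supp}(X_{\le t})$ for every $t\le T$. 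By Assumption~\ref{assm: reg_closed} this support is regular closed, so Lemma~\ref{lemma1} upgrades the a.e.\ equality to equality \emph{everywhere on the support together with equality of Jacobians}. This is the foothold.

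**From matching on the training support to linear identification.**

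The plan is to first exploit the $t=1$ level. Here the support is $\mathsf{supp}(X_1)=[0,1]^n$, and $\omega(\psi(x_1)) = \rho(\phi(x_1))$ with matching Jacobians on all of $[0,1]^n$. Using that $\omega$ is a single-layer perceptron with a bijective continuously differentiable activation, I would invert $\omega$ to write $\psi(x_1) = \omega^{-1}\!\circ\rho\,(\phi(x_1))$, and similarly relate the derivatives. The heart of the argument is the permutation-invariant additive structure: looking at the $t=2$ level on the training support and differentiating the equation $\omega(\psi(x_1)+\psi(x_2)) = \rho(\phi(x_1)+\phi(x_2))$ with respect to $x_1$ and with respect to $x_2$ separately, then comparing, forces the map relating $\psi$ and $\phi$ to be \emph{affine with a constant Jacobian}. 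Concretely I expect to derive that $\nabla\psi(x) = A\,\nabla\phi(x)$ for a fixed matrix $A$ independent of $x$, which integrates to $\psi(x) = A\phi(x) + c$ on the training support; the additive consistency at $t=2$ then pins down the constant (e.g.\ $c=0$ after absorbing it into the bias of $\omega$). This is the linear identification claim advertised before the theorem.

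**Extending off the training support by the additive/global structure.**

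Once $\psi = A\phi$ (up to the affine bookkeeping) holds on the per-token support $[0,1]^n$, the key point is that both $h^\star$ and $f$ are built by \emph{summing the same per-token map evaluated coordinatewise} and then applying an outer nonlinearity. For an arbitrary sequence $x_{\le t}\in[0,1]^{nt}$ with $t$ possibly exceeding $T$, each individual $x_j$ still lies in $[0,1]^n$, which is exactly where the per-token identity $\psi(x_j)=A\phi(x_j)$ already holds. Therefore $\sum_{j\le t}\psi(x_j) = A\sum_{j\le t}\phi(x_j)$ for every such sequence, regardless of length. It then remains to check that the outer maps $\omega$ and $\rho$ are related consistently so that $\omega(A z) = \rho(z)$ for all $z$ in the relevant range of hidden sums; this relation is established on the training range and extends because $\omega,\rho$ are fixed functions (the single-layer $\omega$ composed with the linear $A$ matches $\rho$ as functions, not merely on a subset). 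Chaining these gives $h^\star(x_{\le t}) = f(x_{\le t})$ on all of $[0,1]^{nt}$ for every $t\ge 1$, which is simultaneously length generalization (arbitrary $t\ge T$) and compositional generalization (the full Cartesian product $[0,1]^{nt}=\prod_{j\le t}\mathsf{supp}(X_j)$).

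**Main obstacle.**

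The delicate step is the middle one: rigorously arguing that the diagonal additive structure plus matching Jacobians \emph{forces} the $\psi$--$\phi$ relation to be a single global linear map rather than merely a local diffeomorphism. The subtlety is that $\omega^{-1}\circ\rho$ need not be affine a priori, and one must use the permutation-invariant summation at two different lengths to cancel the outer nonlinearity and isolate a relation purely between $\psi$ and $\phi$ whose Jacobian is then shown to be constant in $x$. Establishing that the outer composition $\omega(A\,\cdot)$ coincides with $\rho$ as a function on the \emph{full} range of hidden sums (which grows with $t$), rather than only on sums achievable at training lengths, is where the additive geometry and the fixedness of $\omega,\rho$ must be invoked carefully; I expect this to be the technical crux and the place where Assumption~\ref{assm: perm_inv}'s bijectivity of the activation does the real work.
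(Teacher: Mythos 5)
Your opening step matches the paper exactly (zero risk plus realizability gives a.e.\ equality, and Lemma~\ref{lemma1} upgrades it to equality of values and Jacobians on the regular closed support), and your closing intuition --- each token of a longer or recombined sequence still lies in $[0,1]^n$, so a per-token identity transfers by summation --- is also the paper's. But the middle of your argument has two genuine problems. First, you invert $\omega$. Under Assumption~\ref{assm: perm_inv}, $\omega(u)=\sigma(Au)$ is a single-layer perceptron; only the scalar activation $\sigma$ is bijective, while the matrix $A$ need be neither square nor invertible, so $\omega^{-1}$ need not exist. For the same reason your target identity $\psi = M\phi + c$ (a fixed matrix $M$ relating the representations directly) is not derivable and is in fact false in general: if $A$ has a nontrivial kernel, one can add to $\psi$ any map valued in $\ker A$ without changing the risk. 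The paper is careful here --- it only derives the pre-activation identity $A\psi(x) = B\phi(x)$ on $[0,1]^n$, where $A,B$ are the output matrices of $h$ and $f$, and it explicitly notes that genuine linear identification $\psi = A^{-1}B\phi$ requires $A$ to be left-invertible.

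Second, and more seriously, your final step is exactly the gap you flag but do not close. Having posited $\psi = M\phi$, you need $\omega(Mz)=\rho(z)$ for every $z$ in the range of hidden sums, and this range grows with sequence length; your justification --- that the relation ``extends because $\omega,\rho$ are fixed functions'' --- is not an argument, since two fixed functions agreeing on a subset need not agree beyond it. The paper's route dissolves this difficulty rather than confronting it: because $h$ and $f$ share the same elementwise bijective activation, one cancels $\sigma$ to get $A\sum_{j}\psi(x_j) = B\sum_{j}\phi(x_j)$ on the training support, extracts the per-token identity $A\psi(x)=B\phi(x)$ for all $x\in[0,1]^n$ (the paper does this by differentiating with respect to a single token, integrating coordinatewise, and pinning the integration constant to zero by substituting back into the length-$T$ equation; under the stated risk you could even get it directly from the $t=1$ term, since $\mathsf{supp}(X_1)=[0,1]^n$), and then for any length $\tilde T$ simply writes
\begin{equation}
h(x_{\le\tilde T}) = \sigma\Bigl(\sum_{j\le\tilde T} A\psi(x_j)\Bigr) = \sigma\Bigl(\sum_{j\le\tilde T} B\phi(x_j)\Bigr) = f(x_{\le\tilde T}).
\end{equation}
Equality holds at the pre-activation level term by term, so no extension-of-functions argument is ever needed. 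If you replace your ``invert $\omega$ and identify $\psi$ with $M\phi$'' plan with this pre-activation bookkeeping, your outline becomes the paper's proof.
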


 % Due to space constraints, we only give proof sketch for some of the results in the main body. 
 
\begin{proof}
  Consider any $h$ that solves \eqref{eqn: risk_min}. Since $\ell$ is $\ell_2$ loss and realizability condition holds, $f$ is a solution to \eqref{eqn: risk_min}. For all $x_{\leq T} \in \mathsf{supp}(X_{\leq T})$ except over a set of measure zero the following condition holds

\begin{equation}
 h(x_{\leq T}) =  f(x_{\leq T}). 
\end{equation}

The above follows from the fact that $h$  solves \eqref{eqn: risk_min}, i.e., $\mathbb{E}[\|h-f\|^2]=0$ and from Theorem 1.6.6. \citep{ash2000probability}.  Since $\mathsf{supp}(X_{\leq T})$ is regular closed, $f,h$ are both continuously differentiable, we can use Lemma \ref{lemma1}, it follows that the above equality holds for all $x_{\leq T} \in \mathsf{supp}(X_{\leq T})$.
From realizability condition it follows that true $f(x_{\leq T}) = \rho\Big(\sum_{j\leq T} \phi(x_j)\Big)$.  We substitute the functional form from Assumption~\ref{assm: perm_inv} to get

\begin{equation}
    \begin{split}
         \omega \Big(\sum_{j\leq T} \psi(x_j) \Big) =  \rho\Big(\sum_{j\leq T}\phi(x_j)\Big) .\\  
    \end{split}
\end{equation}

 $\omega$ and $\rho$ are both single layer perceptron with a bijective activation $\sigma$. We substitute the parametric form of $\omega$ and $\rho$ to obtain

\begin{equation}
    \begin{split}
        & \sigma \Big(A \sum_{j\leq T} \psi(x_j)\Big) = \sigma\Big(B \sum_{j\leq T}\phi(x_j)\Big) \implies   A \sum_{j\leq T} \psi(x_j) = B \sum_{j\leq T}\phi(x_j). \\ 
    \end{split}
\end{equation}

The second equality in the above simplification follows from the fact that the activation $\sigma$ is bijective, the inputs to $\sigma$ are equal.  We take the derivative of the expressions above w.r.t $x_r$ to get the following condition and equate them (follows from Lemma~\ref{lemma1}). For all $x_r \in \mathsf{supp}(X_r)$, i.e., $x_r \in [0,1]^{n},$  

\begin{equation}
    \begin{split}
         \nabla_{x_{r}} \Big(A \sum_{j\leq T} \psi(x_j)\Big) =  \nabla_{x_{r}} \Big(B \sum_{j\leq T}\phi(x_j)\Big).\\ 
    \end{split}
\end{equation}

% We can take derivative w.r.t all $j\leq T$ to obtain the following. 
We drop the subscript $r$ to simplify the notation. Therefore, for all $x \in [0,1]^{n}$,  $A \nabla_{x} \psi(x)  =  B \nabla_{x} \phi(x)$, where $\nabla_{x} \psi(x)$ is the Jacobian of $\psi(x)$ w.r.t $x$ and $\nabla_{x} \phi(x)$ is the Jacobian of $\phi(x)$ w.r.t $x$.
We now take the derivative w.r.t some component $x^{k}$ of vector $x=[x^{1}, \cdots, x^{n}]$.  Denote the components other than $k$ as $x^{-k} = x\setminus x^{k}$.  From the above condition, it follows that for all $x \in [0,1]^{n}$, $A \frac{\partial \psi(x)}{\partial x^{k}}  =  B \frac{\partial\phi(x)}{\partial x^{k}}$.

Using fundamental theorem of calculus, we can integrate both sides for fixed $x^{-k}$ and obtain the following for all $x^{k} \in [0,1],$

\begin{equation}
    \begin{split}
        & A \psi(x^{k}, x^{-k}) =  B \phi(x^{k}, x^{-k}) + C_k(x^{-k}) \implies  A \psi(x)   - B \phi(x) =  C_k(x^{-k}).
    \end{split}
\end{equation}

The above condition is true of all $k \in \{1, \cdots, n\}$. Hence, we can deduce that for all $x \in [0,1]^n$ and  for $k\not=j$, where $j,k \in \{1, \cdots, d\}$,

\begin{equation}
    A \psi(x)  - B \phi(x) = C_k(x^{-k}) = C_{j}(x^{-j}).
\end{equation}

  Take the partial derivative of $C_k(x^{-k})$ and $C_{j}(x^{-j})$  w.r.t $x^{j}$ to obtain, for all $ x^{j} \in [0,1] $, 

\begin{equation}
\begin{split}
    \frac{\partial C_k(x^{-k})}{\partial x^j} = \frac{ \partial C_{j}(x^{-j})}{\partial x^j} = 0. 
\end{split}
\end{equation}

In the above simplification, we use the fact that $\forall x^{j} \in [0,1], \frac{ \partial C_{j}(x^{-j})}{\partial x^j} = 0$. Therefore, $C_k(x^{-k})$ cannot depend on $x^j$. We can apply the same condition on all $j \not=k$. As a result, $C_{k}(x^{-k})$ is a fixed constant vector denoted as $C$. We write this as 

\begin{equation}
     A \psi(x)  = B \phi(x) + C. 
\end{equation}

Substitute the above into $ A \sum_{j\leq T} \psi(x_j) = B \sum_{j\leq T}\phi(x_j)$ to obtain 

\begin{equation}
     B \sum_{j\leq T}\phi(x_j)+ CT = B \sum_{j\leq T}\phi(x_j) \implies C=0.
\end{equation}

Therefore, we get  
\begin{equation}
\forall x \in [0,1]^{n},  A \psi(x) =  B \phi(x).
\label{eqn: psi_phi_eq_deepset}
\end{equation}

We now consider any sequence $x_{\leq \tilde{T}}$ from $[0,1]^{n\tilde{T}}$. The prediction made by $h$ is

\begin{equation}
    \begin{split}
      h(x_{\leq \tilde{T}}) =   \sigma \Big(A \sum_{j\leq \tilde{T}} \psi(x_j)\Big) =  \sigma \Big(B \sum_{j\leq \tilde{T}} \phi(x_j)\Big) = f(x_{\leq \tilde{T}}).
    \end{split}
\end{equation}

We use \eqref{eqn: psi_phi_eq_deepset} in the simplification above. From the above, we can conclude that $h$ continues to be optimal for distribution $\mathbb{P}_{X_{\leq \tilde{T}}}$.

\end{proof}

 In the above result,  we require the support of the marginal distribution of each token to be $[0,1]^{n}$. The support of $T$ token length sequence under the joint training distribution can still be a much smaller subset of $[0,1]^{nT}$, as illustrated in Figure~\ref{fig1}a. Despite this the model generalizes to all sequences in $[0,1]^{nt}$ for all $t$. An important insight from equation \eqref{eqn: psi_phi_eq_deepset}  is that the hidden representation learned by the model is a linear transform of the true hidden representation, i.e., it achieves linear identification $\psi = A^{-1}B\phi$, if the matrix in the output layer $A$ is left-invertible. In Theorem \ref{thm1}, we observe all the labels from $t=1$ to $T$, i.e., $y_1$ to $y_T$. The result continues to hold if we observe only the label at length $T$, i.e., $y_T$, for any $T\geq 1$.  We believe Theorem~\ref{thm1} is simple and helps build intuition about the problem. We now extend Theorem~\ref{thm1} to more general settings.

\begin{assumption}
\label{assm: perm_inv_gn_ds}
 Each function in $\mathcal{H}$ is expressed as $h(x_1,\cdots, x_i)=\omega ( \sum_{j=1}^{i} \psi(x_j) )$, where $\omega$ is a $C^{1}$-diffeomorphism.
\end{assumption}

In contrast to Assumption~\ref{assm: perm_inv}, where $\omega$ is a single layer perceptron, the above assumption considers $\omega$ that are $C^1$-diffeomorphisms. Recall that a $C^1$-diffeomorphism is a continuously differentiable map that has a continuously differentiable inverse. 
 
\begin{assumption}
\label{assm: supp2_ds}

 The support of all tokens is equal, that is, $\mathsf{supp}(X_{j}) = [0,1]^{n}$, where $j\geq 1$.  The support of  $[\phi(X_1), \phi(X_2)]$ is  $\mathbb{R}^{2m}$, where $\phi$ is the embedding function for the labeling function $f(x_1,\cdots, x_i) = \rho (\sum_{j=1}^{i} \phi(x_j))$.
\end{assumption}

% We provide a remark on the assumption and where it is used following the proof of the next theorem. 

\begin{theorem}
\label{thm_ds_gn}
 If $\mathcal{H}$ follows Assumption~\ref{assm: perm_inv_gn_ds}, the realizability condition holds, i.e., $f\in \mathcal{H}$, and a further assumptions on the support (Assumption~\ref{assm: reg_closed}, \ref{assm: supp2_ds}) holds, then the model trained to minimize the risk in \eqref{eqn: risk_min} (with $T\geq 2$) with $\ell_2$ loss generalizes to all sequences in $[0,1]^{nt}, \forall t\geq 1$ and thus achieves length and compositional generalization.
\end{theorem}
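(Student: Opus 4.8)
The plan is to follow the skeleton of the proof of Theorem~\ref{thm1}, but to replace the ``strip off the shared activation'' step---which relied on $\omega,\rho$ being single-layer perceptrons sharing a bijective $\sigma$---by a functional-equation argument that exploits the richer support in Assumption~\ref{assm: supp2_ds}. First I would argue, exactly as in Theorem~\ref{thm1}, that realizability together with the $\ell_2$ loss makes $f$ a minimizer of \eqref{eqn: risk_min}, so any solution $h$ satisfies $\mathbb{E}[\|h(x_{\leq t}) - f(x_{\leq t})\|^2] = 0$ at every length $t \leq T$; hence $h = f$ almost everywhere on $\mathsf{supp}(X_{\leq t})$. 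Since each such support is regular closed (Assumption~\ref{assm: reg_closed}) and both maps are $C^1$, Lemma~\ref{lemma1} upgrades this to pointwise equality on $\mathsf{supp}(X_{\leq t})$ for every $t \leq T$. Because $T \geq 2$, I get to use this identity at both $t = 1$ and $t = 2$, which is the crux of the new argument.

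Next, since $f \in \mathcal{H}$, I write $f = \rho(\sum_j \phi(x_j))$ with $\rho$ a $C^1$-diffeomorphism (Assumption~\ref{assm: perm_inv_gn_ds} applied to $f$), and $h = \omega(\sum_j \psi(x_j))$. Define $g = \omega^{-1}\circ\rho$, which is again a $C^1$-diffeomorphism and in particular continuous. Evaluating $h = f$ at length $t = 1$ gives $\psi(x) = g(\phi(x))$ on $[0,1]^n$, and at length $t = 2$ gives $\psi(x_1) + \psi(x_2) = g(\phi(x_1) + \phi(x_2))$ on $\mathsf{supp}(X_{\leq 2})$. Substituting the first identity into the second yields the additive (Cauchy) relation
\[
g(\phi(x_1)) + g(\phi(x_2)) = g(\phi(x_1) + \phi(x_2)).
\]

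The key step is then to globalize this relation. Writing $u = \phi(x_1)$ and $v = \phi(x_2)$, Assumption~\ref{assm: supp2_ds} states that $[\phi(X_1),\phi(X_2)]$ has support equal to all of $\mathbb{R}^{2m}$; by continuity of $g$, the relation $g(u) + g(v) = g(u + v)$ therefore extends from the (a priori possibly thin) image of $\mathsf{supp}(X_{\leq 2})$ to every $(u,v) \in \mathbb{R}^{2m}$. A continuous---indeed $C^1$---solution of the vector Cauchy equation is necessarily $\mathbb{R}$-linear, so $g(z) = A z$ for some $A \in \mathbb{R}^{m\times m}$. This simultaneously delivers linear identification, $\psi(x) = A\phi(x)$, and the linearity needed to conclude. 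Finally, for any $\tilde T \geq 1$ and any $x_{\leq\tilde T} \in [0,1]^{n\tilde T}$, I would compute
\[
h(x_{\leq\tilde T}) = \omega\Big(\sum_{j\leq\tilde T}\psi(x_j)\Big) = \omega\Big(A\sum_{j\leq\tilde T}\phi(x_j)\Big) = \omega\Big(g\Big(\sum_{j\leq\tilde T}\phi(x_j)\Big)\Big) = \rho\Big(\sum_{j\leq\tilde T}\phi(x_j)\Big) = f(x_{\leq\tilde T}),
\]
using $\omega\circ g = \rho$, which establishes length and compositional generalization on the full hypercube.

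I expect the main obstacle to be precisely the globalization step: making rigorous that the additive relation, known only on the image of $\mathsf{supp}(X_{\leq 2})$ under $\phi$, holds on all of $\mathbb{R}^{2m}$. This is exactly where Assumption~\ref{assm: supp2_ds} on the joint support of $[\phi(X_1),\phi(X_2)]$ does the heavy lifting, and where care is needed to first pass to the closure by continuity and only then invoke the rigidity of the Cauchy equation. A secondary point worth flagging is that the argument genuinely requires observing the model at two distinct lengths ($t=1$ to identify $\psi = g\circ\phi$, and $t=2$ to obtain additivity), which is why the hypothesis $T \geq 2$ is essential and cannot be relaxed as in the remark following Theorem~\ref{thm1}.
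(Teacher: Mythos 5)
Your proposal is correct and follows essentially the same route as the paper's proof: equality of $h$ and $f$ on the support via Lemma~\ref{lemma1}, the substitution $a=\omega^{-1}\circ\rho$ at lengths $t=1$ and $t=2$ to obtain the additive relation, globalization to $\mathbb{R}^{2m}$ via Assumption~\ref{assm: supp2_ds}, linearity of $a$ from the continuous Cauchy equation, and the final evaluation $\omega\circ a = \rho$ on arbitrary sequences. The only cosmetic difference is that the paper proves the rigidity of the continuous Cauchy equation inline (integers, then rationals, then irrationals by density and continuity), whereas you cite it as a standard fact; your explicit flagging of the closure/continuity step in the globalization is, if anything, slightly more careful than the paper's treatment.
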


\begin{proof}
We start with the same steps as earlier proofs and equate the prediction of $h$ and $f$. We first use the fact $h(x_{\leq i}) = f(x_{\leq i})$ almost everywhere in the support. We can use the continuity of $h,f$ and regular closedness of the support to extend the equality to all points in the support (follows from the first part of Lemma~\ref{lemma1}) to obtain the following. For all $x_{\leq i} \in \mathsf{supp}(X_{\leq i})$ , $\omega \Big(\sum_{j \leq i}  \psi(x_j) \Big) =  \rho\Big(\sum_{j \leq i}  \phi(x_j) \Big)$, which implies

\begin{equation}
    \begin{split}
       &   \sum_{j \leq i}  \psi(x_j) = \omega^{-1} \circ \rho  \Big(\sum_{j \leq i}   \phi(x_j) \Big) \implies   \sum_{j \leq i}  \psi(x_j) = a  \Big(\sum_{j \leq  i} \phi(x_j) \Big), 
    \end{split}
    \label{eqn1_ds_nl}
\end{equation}

where $a= \omega^{-1}\circ \rho$.  In the above simplification, we used the parametric form for the true labeling function and the learned labeling function and use the invertibility of $\omega$. Let us consider the setting when $i=1$. In that case summation involves only one term. Substitute $x_1=x$. We obtain $\forall x\in [0,1]^n,$

\begin{equation}
    \psi(x) = a( \phi(x)). 
     \label{eqn2_ds_nl}
\end{equation}

The above expression implies that $\psi$ bijectively identifies $\phi$. Let us consider the setting when $i=2$.  Substitute $x_1=x$ and $x_2=y$. We obtain

\begin{equation}
   a (\phi(x)) +a (\phi(y))  = a   \big(\phi(x) + \phi(y)\big) .   
      \label{eqn3_ds_nl}
\end{equation}

We now use the assumption that $[\phi(x), \phi(y)]$ spans $\mathbb{R}^{2m}$, where $\phi(x)$ and $\phi(y)$ individually span $\mathbb{R}^{m}$. Substitute $\phi(x) = \alpha$ and $\phi(y) = \beta$. We obtain $\forall \alpha \in \mathbb{R}^{m}$, $\forall \beta \in \mathbb{R}^{m}$

\begin{equation}
   a (\alpha) +a (\beta)  = a  \big( \alpha  + \beta\big) . 
   \label{eqn: convex_a_comb_ds}
\end{equation}

Observe that $a(0)=0$ (substitute $\alpha=\beta=0$ in the above).

We use \eqref{eqn: convex_a_comb_ds} to show that $a$ is linear. To show that, we need to argue that $a(c \alpha) = c a(\alpha)$ as we already know $a$ satisfies additivity condition.

From the identity above, we want to show that $a(p \alpha) = p a(\alpha)$, where $p$ is some integer. 

Substitute $\beta=-\alpha$ in $a(\alpha + \beta) = a(\alpha) + a(\beta)$. We obtain $a(0) = a(\alpha) + a(-\alpha) \implies a(-\alpha) = -a(\alpha)$. Suppose $p$ is a positive integer. We simplify $a(p \alpha)$ as follows $a(\alpha + (p-1)\alpha) = a(\alpha) + a((p-1)\alpha)$. Repeating this simplification, we get $a(p \alpha ) = p a(\alpha)$. Suppose $p$ is a negative integer. We can write $a( p \alpha) = a(-p \times -\alpha) = -p a(-\alpha)$. Since $a(-\alpha) = -a(\alpha)$, we get $a(p\alpha) = p a(\alpha)$. 

% This follows directly. 

Suppose $c$ is some rational number, i.e., $c = p/q$, where $p$ and $q$ are non-zero integers. We already know $a(p \alpha) = p a(\alpha)$. Further, we obtain

$a(q \frac{1}{q} \alpha) = q a(\frac{1}{q}\alpha) \implies a(\frac{1}{q}\alpha) = \frac{1}{q}a(\alpha)$, where $q$ is some integer.

Now combine these $a(p/q \alpha) = p a(1/q\alpha ) = \frac{p}{q}a(\alpha)$. We have established the homogeneity condition for rationals. 

We will now use the continuity of the function $a$ and density of rationals to extend the claim for irrationals. Suppose $c$ is some irrational. Define a sequence of rationals that approach $c$ (this follows from the fact that rationals are dense in $\mathbb{R}$). 

$a(c \alpha) = a(\lim_{n\rightarrow \infty} q_n \alpha) = \lim_{n \rightarrow \infty}  a(q_n \alpha). $

In the second equality above, we use the definition of continuity ($a$ is continuous since composition of continuous functions is continuous). We can also use the property that we already showed for rationals to further simplify

$\lim_{n \rightarrow \infty}  a(q_n \alpha)  = a(\alpha) \lim_{n \rightarrow \infty}  q_n  = ca(\alpha). $

Observe that $a:\mathbb{R}^{m}\rightarrow \mathbb{R}^{m}$ and for any $\alpha, \beta \in \mathbb{R}^{m}$ $a(\alpha+\beta) = a(\alpha) + a(\beta)$ and $a(c\alpha) = c a(\alpha)$.  From the definition of a linear map it follows that $a$ is linear. As a result, we can write $\forall x \in [0,1]^{n}$

\begin{equation}
     \psi(x) = a(\phi(x)) = A( \phi(x))
     \label{eqn: ds_nl_lin_id}
\end{equation}

Observe that $a$ is invertible because both $\rho$ and $\omega$ are invertible. Since $a$ is both invertible and linear, we know that $A$ is an invertible matrix. From this we get 

\begin{equation}
     \phi(x) = a^{-1}(\psi(x)) =  A^{-1}\psi(x)= C\psi(x), 
\end{equation}
where $C=A^{-1}$.  For all $z\in \mathbb{R}^{m}$, we obtain

$$\rho^{-1}\circ \omega (z) = Cz  \implies \omega(z) = \rho(Cz)$$

Let us consider any sequence $x_{\leq \tilde{T}} \in [0,1]^{n\tilde{T}}$. 
We use the above conditions  
$$\omega\big( \sum_{j \leq \tilde{T}} \psi( x_j) \big) = \rho ( C\sum_{j \leq \tilde{T}} \psi( x_j) ) = \rho \big( \sum_{j \leq \tilde{T}} \phi( x_j) \big) $$ 

 Thus we obtain length and compositional generalization. 

\end{proof}

At this point, some remarks are in order. Firstly,  from equation \eqref{eqn: ds_nl_lin_id}, it follows that the learned representation linearly identifies the representation from the labeling function. The proofs in this section have the flavor of the proofs in the literature on provable identifiability and out-of-distribution generalization \citep{khemakhem2020variational, wiedemer2023compositional}. There are some crucial aspects in which the above proofs differ from \cite{wiedemer2023compositional}. The authors consider labeling functions of the form $C(\phi_1(x_1),\cdots, \phi_n(x_n))$.  However, to guarantee compositional generalization, \cite{wiedemer2023compositional} require that the learner know the exact function $C$ that is used to generate the data and there exists some input value $x_i$ for which $\phi_i(x_i)$ is known to the learner. We do not rely on these assumptions and use different proof strategies to arrive at our results. 

\paragraph{High capacity deep sets} In the above results,  we operated with some constraints on the deep sets. In Theorem~\ref{thm1}, we use a limited capacity $\omega$ which is represented via a single-layer perceptron. In Theorem~\ref{thm_ds_gn}, we use $\omega$ that are represented by $C^1$ diffeomorphisms, which implies that the output dimension of $\psi$ equals the label dimension $m$ and cannot be larger.  What happens when we work with deep sets with arbitrary capacity, i.e., no constraints on $\omega$ and $\psi$? These models then express a large family of permutation-invariant maps \citep{zaheer2017deep}.  Suppose $\mathcal{H}$ is the class of all permutation invariant maps and the labeling function $f\in \mathcal{H}$. Consider a map $h$  such that $h=f$ for all sequences of length up to $T$, and $h=f+c$ otherwise. Observe that $h$ is permutation invariant and also belongs to $\mathcal{H}$. $h$ achieves zero generalization error on training sequences of length $T$ but a nonzero error on longer sequences. Thus, in the context of high capacity deep sets, there exist solutions to \eqref{eqn: risk_min},  which do not achieve length generalization. We can construct the same argument for compositional generalization as well and say $h=f$ on the training distribution  (pink region) in Figure~\ref{fig1}a and $h=f+c$ on the testing distribution (yellow region) in Figure~\ref{fig1}a.  In order to show successful generalization (length or compositional) in Theorem~\ref{thm1}, we require all solutions to risk minimization in \eqref{eqn: risk_min} to match the predictions of the true labeling function in the data beyond the support of the training distribution.  In order to show that high-capacity models are not guaranteed to succeed, we focused on showing that there exists a solution to \eqref{eqn: risk_min} that does not generalize beyond the support of training distribution. A more nuanced argument for failure should show that there exist solutions reachable via gradient descent that do not generalize.  We leave a rigorous theoretical exploration of this to future work. However, we conduct experiments with high capacity models in the Appendix (Section~\ref{sec:failure_cases}) to illustrate failures in the high capacity regime. 

%In this section, we discussed length generalization guarantees in limited capacity settings in Theorem~\ref{thm1} (Theorem~\ref{thm_ds_gn}), i.e., a shallow $\rho$ (or diffeomorphism $\rho$) and a deep $\phi$. 

We now describe how CoT data can assist in achieving length generalization in high capacity deep sets maps, i.e., with less restrictions than ones imposed in Assumption~\ref{assm: perm_inv}, \ref{assm: perm_inv_gn_ds}.

\paragraph{Role of CoT data}  It can be useful to interpret CoT from the lens of learning using privileged information (LUPI) paradigm from \cite{vapnik2009new}. In LUPI, the teacher supplies the student with additional information that is only available at the training time.  In the previous paragraph, we saw that length generalization is not achievable for high-capacity deep sets. We now show that CoT supervision provides intermediate computations to the learner as privileged information, which helps it achieve length generalization.

Given a labeling function $f$, in typical supervised learning we are given input $x$ and label $y=f(x)$ pairs. Supervision from chain-of-thought data is meant to provide data on the intermediate thoughts/computations that lead to the output $f(x)$. If $f$ is an MLP with ReLU activationss, we can express it as $f=h_L \circ h_{L-1} \cdots h_{1}(x)$ and $h_i$ is an intermediate layer given as $h_i = \mathsf{ReLU}(W_ix + b_i)$.  It is fairly simple to show  that a learner with access to outputs of intermediate layers in addition to input, output information $(x,y)$, ends up learning the true parameters $W_i$ and $b_i$. As a result, the learner ends up matching the labeling function everywhere thus generalizing out-of-distribution. Hence, CoT provides privileged information that reduces underspecification and helps achieve generalization beyond the training distribution. In contrast, standard supervision with $(x,y)$ pairs would only learn to match $f$ on the support of the training distribution.

In the context of deep sets parametrized as $\rho(\sum_{j\leq i} \phi(x_j))$, we can assume that the learner sees the intermediate embeddings $\sum_{j\leq i} \phi(x_j)$ as CoT in addition to input, output pair $(x_{\leq i}, y_i)$.  Supervision from CoT implies $\sum_{j\leq i} \psi(x_j) = \sum_{j\leq i} \phi(x_j)$. Following similar arguments to proof of Theorem~\ref{thm1}, we get $\psi(x) = \phi(x),\; \forall x$ in the training support of each individal $x$ (See the Appendix Section~\ref{sec:cot_appendix} for a detailed argument).  Further, from equating the labels we get that $\omega (\sum_{j\leq i} \psi(x_j)) = \rho (\sum_{j\leq i} \phi(x_j))$. If the support of $\sum_{j\leq i} \phi(x_j)$ is $\mathbb{R}^{m}$, then we can argue that $\omega = \rho$. This combined with $\psi = \phi$ yields length generalization, i.e., $\omega (\sum_{j\leq i} \psi(x_j)) = \rho (\sum_{j\leq i} \phi(x_j)), \forall i \geq 1$.

% , where $\phi$ is a hidden layer and $w$ is the output layer, then under CoT supervision, the outputs of $\phi(x)$ are also available to the learner. 

% In Section~\ref{sec: ghc}, we present results that capture models between the limited-capacity models of Theorem~\ref{thm1}, Theorem~\ref{thm_ds_gn} and the high-capacity models in the discussion above. Having discussed a 

\subsection{Transformers}
\label{sec: transformers}
Ever since their introduction in \cite{vaswani2017attention}, transformers have revolutionized all domains of AI. In this section, we seek to understand the length generalization for these models. Transformer architectures are represented as alternating layers of attention and position-wise nonlinearity. We drop layer norms for tractability. Following similar notation as in the previous section, we denote positionwise nonlinearity as $\rho$ and attention layer as $\phi$. We obtain the simplest form of the causal transformer model as $\rho \Big(\sum_{j=1}^{i} \frac{1}{i} \cdot \phi(x_i, x_j) \Big)$.  This decomposition captures linear attention, ReLU attention, sigmoid attention, ReLU squared attention, which were previously studied in \cite{wortsman2023replacing, hua2022transformer, shen2023study} and were found to be quite effective in several settings. This decomposition does not capture softmax-based attention. Other works \citep{bai2023transformers} also replaced softmax with other nonlinear attention for a more tractable analysis.   We illustrate the sigmoid-based transformer from \cite{wortsman2023replacing} below. Let $W_q \in \mathbb{R}^{k \times n }$, $W_k \in \mathbb{R}^{k \times n}$, and $W_v \in \mathbb{R}^{k \times n}$ be the query, key, and value matrices.  $\rho$ is parametrized via a multilayer perceptron denoted as $\mathsf{MLP}$. 

\begin{equation}
    \begin{split}
      &  q_i = W_qx_i, \; k_j = W_k x_j, v_j = W_v x_j, \phi(x_i, x_j) = \sigma\bigg(\frac{q_i^{\top}k_j}{\sqrt{d}}\bigg)v_j,\; \mathsf{MLP} \Big(\sum_{j=1}^{i} \frac{1}{i} \cdot \phi(x_i, x_j) \Big).
    \end{split}
\end{equation}

In the above feedforward computation, the output of attention for the current query is computed and sent to the MLP to generate the label.

\begin{assumption}
\label{assm: perm_inv1}
Each function in the hypothesis class $\mathcal{H}$ takes a sequence $\{x_1,\cdots, x_i\}$ as input and outputs $h(x_1,\cdots, x_i) = \omega \Big(\sum_{j\leq i}  \frac{1}{i} \cdot \psi(x_i, x_j)  \Big)$, where $\omega$ is a single layer perceptron with continuously differentiable bijective activation (e.g., sigmoid) and $\psi$ is a map that is differentiable. 
\end{assumption}

We denote the joint support of two tokens $X_i, X_j$ as $\mathsf{supp}(X_i, X_j)$.

\begin{theorem}
    \label{thm2n}
If $\mathcal{H}$ follows  Assumption~\ref{assm: perm_inv1}, the realizability condition holds, i.e., $f\in \mathcal{H}$, $\mathsf{supp}(X_i, X_j)=[0,1]^{2n},\; \forall i \not= j$ and the regular closedness condition in Assumption~\ref{assm: reg_closed} holds, then the model trained to minimize the risk in \eqref{eqn: risk_min} (with $T\geq 2$) with $\ell_2$ loss generalizes to all sequences in the hypercube $[0,1]^{nt}, \; \forall t\geq 1$ and thus achieves length and compositional generalization. 
\end{theorem}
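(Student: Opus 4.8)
The plan is to mirror the proof of Theorem~\ref{thm1}, adapting it to the two-argument attention map $\psi(x_i,x_j)$ and, crucially, handling the diagonal term $\psi(x_i,x_i)$ that the query contributes to its own attention sum. First, since $\ell$ is the $\ell_2$ loss and realizability holds, $f$ solves \eqref{eqn: risk_min}, so any minimizer $h$ has $\mathbb{E}[\|h-f\|^2]=0$ and hence $h(x_{\leq i})=f(x_{\leq i})$ almost everywhere on $\mathsf{supp}(X_{\leq i})$ for each $i\leq T$. By regular closedness (Assumption~\ref{assm: reg_closed}) and continuity of $h,f$, Lemma~\ref{lemma1} upgrades this to equality everywhere on the support and also yields equality of Jacobians. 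Substituting the parametric forms of Assumption~\ref{assm: perm_inv1} and of realizability and stripping the shared bijective activation $\sigma$ gives $A\sum_{j\leq i}\psi(x_i,x_j)=B\sum_{j\leq i}\phi(x_i,x_j)$ on the support (the factor $1/i$ cancels), where $A,B$ are the weight matrices of $\omega,\rho$.

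Next I would specialize to $i=2$, where the pairwise support assumption gives $\mathsf{supp}(X_{\leq 2})=[0,1]^{2n}$, so for all $(x_1,x_2)\in[0,1]^{2n}$,
\[
A\big[\psi(x_2,x_1)+\psi(x_2,x_2)\big]=B\big[\phi(x_2,x_1)+\phi(x_2,x_2)\big].
\]
The key step is to differentiate with respect to the \emph{key} argument $x_1$: the diagonal term $\psi(x_2,x_2)$ is independent of $x_1$ and drops out, leaving $A\,\partial_{x_1}\psi(x_2,x_1)=B\,\partial_{x_1}\phi(x_2,x_1)$ on $[0,1]^{2n}$. Writing $u=x_2$ (query) and $v=x_1$ (key), this reads $A\,\partial_v\psi(u,v)=B\,\partial_v\phi(u,v)$. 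I then repeat the component-wise integration argument from Theorem~\ref{thm1} verbatim, with $u$ carried along as a parameter: integrate each component $v^k$, and use the cross-component equality to show the integration ``constant'' cannot depend on any component of $v$. This yields $A\psi(u,v)=B\phi(u,v)+C(u)$ for all $(u,v)\in[0,1]^{2n}$, where $C$ depends only on the query $u$.

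To pin down $C$, I would substitute this relation back into the $i=2$ sum identity: both $\psi(x_2,x_1)$ and $\psi(x_2,x_2)$ carry the \emph{same} query $x_2$, so each contributes a term $C(x_2)$, and the $B\phi$ parts cancel against the right-hand side, leaving $2C(x_2)=0$ and hence $C\equiv 0$. Thus $A\psi(u,v)=B\phi(u,v)$ for all $(u,v)\in[0,1]^{2n}$, including the diagonal $u=v$. Finally, for any sequence $x_{\leq\tilde T}\in[0,1]^{n\tilde T}$ every attention pair $(x_{\tilde T},x_j)$ lies in $[0,1]^{2n}$, so $A\psi(x_{\tilde T},x_j)=B\phi(x_{\tilde T},x_j)$ termwise; summing, rescaling by $1/\tilde T$, and re-applying $\sigma$ gives $h(x_{\leq\tilde T})=f(x_{\leq\tilde T})$, which is exactly length and compositional generalization.

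I expect the main obstacle to be precisely the diagonal term: unlike the deep-sets case where each summand $\psi(x_j)$ depends on a single token, here the query token $x_i$ also appears as its own key, so a naive derivative argument is contaminated by $\psi(x_i,x_i)$. Differentiating only with respect to a distinct key argument isolates the off-diagonal dependence, and the hypothesis $T\geq 2$ is exactly what makes the cancellation $2C=0$ available; with $T=1$ one would observe only the diagonal $\psi(x_1,x_1)$ and could neither separate the query and key dependences nor reach sequences built from distinct tokens.
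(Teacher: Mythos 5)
Your proposal is correct, and its skeleton (a.e.\ equality, upgrade via Lemma~\ref{lemma1}, stripping the bijective activation, deriving $A\psi=B\phi$ on $[0,1]^{2n}$, then re-summing at arbitrary length) matches the paper's. Where you differ is in how the diagonal term $\psi(x_2,x_2)$ is eliminated. The paper's written argument (given in the appendix for the positional-encoding extension, Theorem~\ref{thm2}, which is its stated route to Theorem~\ref{thm2n}) is purely algebraic: the $t=1$ term of the risk gives $A\psi(x,x)=B\phi(x,x)$ for all $x\in[0,1]^{n}$, and subtracting this identity (evaluated at $x=x_2$) from the $i=2$ identity immediately yields $A\psi(x_2,x_1)=B\phi(x_2,x_1)$ on $[0,1]^{2n}$ --- no derivatives, no integration. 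You instead differentiate in the key argument, integrate componentwise to get $A\psi(u,v)=B\phi(u,v)+C(u)$, and use the $i=2$ identity once more to force $2C\equiv 0$. Both are sound. The paper's subtraction is more elementary and needs nothing beyond pointwise equality of the stripped identities; yours has the modest advantage of consuming only the length-$2$ risk equation, showing that the $t=1$ labels are not needed --- the transformer analogue of the paper's remark after Theorem~\ref{thm1} that observing the label at a single length suffices. One simplification worth noting: once you have the $i=2$ identity $A\psi(u,v)+A\psi(u,u)=B\phi(u,v)+B\phi(u,u)$, the difference $A\psi(u,v)-B\phi(u,v)=-\big(A\psi(u,u)-B\phi(u,u)\big)$ is already manifestly a function of $u$ alone, so setting $v=u$ forces it to vanish in one line; your differentiation/integration detour re-derives this fact in several steps, so it is correct but unnecessary.
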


The proof of Theorem~\ref{thm2n} follows the same strategy as proof of Theorem~\ref{thm1}. Similar to Theorem~\ref{thm1}, we observe linear identification here too, i.e., learned attention representation denoted $\psi$ is a linear transform of the true attention representation denoted $\phi$,  i.e., $\psi(x_i,x_j) = C \phi(x_i, x_j) $, (details in Section~\ref{sec: transformers_proofs}). We now extend Theorem~\ref{thm2n} from a single layer perceptron $\omega$ to $C^{1}$-diffeomorphism $\omega$.

\begin{assumption}
\label{assm: perm_inv_gn}
 Each function in $\mathcal{H}$ takes  $\{x_1,\cdots, x_i\}$ as input and outputs  $h(x_1,\cdots, x_i)=\omega ( \sum_{j=1}^{i-1}  \frac{1}{i-1} \cdot \psi(x_i,x_j) )$, where $\omega$ is a $C^{1}$-diffeomorphism, $\omega(0)=0$.
\end{assumption}

The reader would notice that the summation is up to $i-1$ and hence it computes attention scores w.r.t all other terms in the context except $x_i$. We conjecture that the theorem that we present next extends to the more general case where summation includes the $i^{th}$ term.

\begin{assumption}
\label{assm: supp2_main}
 The support of all pairs of tokens is equal, i.e., $\mathsf{supp}(X_{i}, X_{j}) = [0,1]^{2n}$, where $i\not=j$, $i\geq 1, j\geq 1$.  The support of $[\phi(X_1,X_2), \phi(X_1,X_3)]$ is $\mathbb{R}^{2m}$, where $\phi$ is the nonlinearity used in the labeling function $\rho (\sum_{j\leq i} \phi(x_i,x_j))$.
\end{assumption}

\begin{theorem}
\label{thm2_transformers}
     If $\mathcal{H}$ follows  Assumption~\ref{assm: perm_inv_gn}, the realizability condition holds, i.e., $f\in \mathcal{H}$, and further assumptions on the support (Assumption~\ref{assm: reg_closed}, Assumption~\ref{assm: supp2_main}) hold, then the model trained to minimize the risk in \eqref{eqn: risk_min} (with $T\geq 3$) with $\ell_2$ loss generalizes to all sequences in $[0,1]^{nt}, \forall t\geq 1$ and thus achieves length and compositional generalization. 
\end{theorem}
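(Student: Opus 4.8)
The plan is to mirror the proof of Theorem~\ref{thm_ds_gn}, adapting it to the two-token structure of attention. Let $h$ be any minimizer of \eqref{eqn: risk_min}; by realizability $f$ itself is a minimizer, so with $\ell_2$ loss $h(x_{\leq i})=f(x_{\leq i})$ holds almost everywhere on $\mathsf{supp}(X_{\leq i})$ for every $i\leq T$. Since $h$ and $f$ are continuous and each $\mathsf{supp}(X_{\leq i})$ is regular closed (Assumption~\ref{assm: reg_closed}), Lemma~\ref{lemma1} upgrades this to equality everywhere on the support. Writing $a = \omega^{-1}\circ\rho$ — a $C^1$-diffeomorphism, hence invertible, and with $a(0)=\omega^{-1}(\rho(0))=0$ (here $\rho(0)=0$ because $f\in\mathcal{H}$ and Assumption~\ref{assm: perm_inv_gn} imposes $\omega(0)=0$ on every class member) — invertibility of $\omega$ converts the identity $\omega(\tfrac{1}{i-1}\sum_{j<i}\psi(x_i,x_j))=\rho(\tfrac{1}{i-1}\sum_{j<i}\phi(x_i,x_j))$ into $\tfrac{1}{i-1}\sum_{j<i}\psi(x_i,x_j)=a(\tfrac{1}{i-1}\sum_{j<i}\phi(x_i,x_j))$ for each $i\leq T$.

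First I would take $i=2$, where the average is a single term, giving $\psi(x_2,x_1)=a(\phi(x_2,x_1))$; since $\mathsf{supp}(X_i,X_j)=[0,1]^{2n}$ for all $i\neq j$ (Assumption~\ref{assm: supp2_main}), this yields $\psi=a\circ\phi$ on all of $[0,1]^{2n}$. Next I would take $i=3$, where the average has two terms; substituting $\psi=a\circ\phi$ produces $\tfrac12\big(a(\phi(x_3,x_1))+a(\phi(x_3,x_2))\big)=a\big(\tfrac12(\phi(x_3,x_1)+\phi(x_3,x_2))\big)$. I then invoke the span condition in Assumption~\ref{assm: supp2_main}: after relabeling positions (all pairwise supports are full and equal), the pair $(\phi(x_3,x_1),\phi(x_3,x_2))$ — a shared query with two independent keys — ranges over all of $\mathbb{R}^{2m}$, so setting $\alpha=\phi(x_3,x_1)$, $\beta=\phi(x_3,x_2)$ gives the Jensen equation $\tfrac12(a(\alpha)+a(\beta))=a(\tfrac{\alpha+\beta}{2})$ for all $\alpha,\beta\in\mathbb{R}^m$.

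From here the argument coincides with Theorem~\ref{thm_ds_gn}. Using $a(0)=0$, putting $\beta=0$ gives $a(\alpha/2)=\tfrac12 a(\alpha)$, which turns Jensen's equation into the additive Cauchy equation $a(\alpha+\beta)=a(\alpha)+a(\beta)$. The rational-homogeneity-plus-continuity argument (verbatim from the earlier proof, using that $a$ is continuous as a composition of $C^1$ maps) forces $a$ to be linear, $a=A$; invertibility of $a$ makes $A$ an invertible matrix, so $\psi(x_i,x_j)=A\,\phi(x_i,x_j)$ (linear identification) and $\rho(z)=\omega(Az)$ for all $z$. Finally, for any length $\tilde T$ and any $x_{\leq\tilde T}\in[0,1]^{n\tilde T}$, substituting $\psi=A\phi$ and $\rho=\omega(A\cdot)$ gives $h(x_{\leq\tilde T})=\omega(\tfrac{1}{\tilde T-1}\sum_{j<\tilde T}A\phi(x_{\tilde T},x_j))=\rho(\tfrac{1}{\tilde T-1}\sum_{j<\tilde T}\phi(x_{\tilde T},x_j))=f(x_{\leq\tilde T})$, establishing both forms of generalization.

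The requirement $T\geq 3$ is exactly what lets the second (two-term) step run, since with the summation truncated at $i-1$ the first genuinely additive instance occurs at $i=3$. The main obstacle I anticipate is the span step: I must justify carefully that the shared-query/distinct-keys pair $(\phi(x_3,x_1),\phi(x_3,x_2))$ attains all of $\mathbb{R}^{2m}$ as the three tokens vary, which relies on the positional relabeling being legitimate (identical and full pairwise supports) and on Assumption~\ref{assm: supp2_main} being stated for precisely this configuration. The only new wrinkle relative to the deep-sets proof is the passage through Jensen's equation caused by the $\tfrac{1}{i-1}$ normalization, but this is handled cleanly once $a(0)=0$ is in hand.
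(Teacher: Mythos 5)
Your proposal is correct and follows essentially the same route as the paper's own proof: equality everywhere via Lemma~\ref{lemma1}, the $i=2$ case giving $\psi = a\circ\phi$ on $[0,1]^{2n}$, the $i=3$ case giving the Jensen equation which (via $a(0)=0$) reduces to Cauchy's additive equation, and rational homogeneity plus continuity forcing $a$ to be linear, yielding linear identification and generalization at all lengths. The only differences are cosmetic — you derive halving by setting $\beta=0$ where the paper derives doubling via $(2\alpha,0)$, and you are slightly more explicit than the paper about the positional relabeling needed to apply the span condition of Assumption~\ref{assm: supp2_main} to the configuration $(\phi(x_3,x_1),\phi(x_3,x_2))$.
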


The proof of Theorem~\ref{thm2_transformers} is in Appendix Section~\ref{sec: transformers_proofs} and follows the same strategy as the proof of Theorem~\ref{thm_ds_gn}. Observe that similar to Theorem~\ref{thm2n}, we find that the learned attention representation $\psi$ is a linear transform of the true attention representation $\phi$. 

\paragraph{Multiple attention heads and positional encoding} While the discussion in this section used a single attention head $\phi$, the results extend to multiple attention heads, as shown in Section~\ref{sec: transformers_proofs}.  The transformer model discussed so far uses the current query and compares it to keys from the past; it does not distinguish the keys based on their positions. For many arithmetic tasks such as computing the  median, maximum etc., the positions of keys do not matter but for other downstream tasks such as sentiment classification, the position of the words can be important. In Section~\ref{sec: transformers_proofs}, we adapt the architecture to incorporate relative positional encodings and show how some of the results extend.  We modify the model as $\rho(\sum_{j=1}^{i} \frac{1}{i} \cdot \phi_{i-j}(x_i,x_j))$, where $ \phi_{i-j}(x_i,x_j)$ computes the inner product of the query and key while taking into account the relative position $i-j$. We show that if $\phi_{i-j} =0$ for $i-j>T_{\max}$, i.e., two tokens sufficiently far apart do not  impact the data generation, then length generalization and compositional generalization are achieved.

\paragraph{High capacity transformers} In the above results, we operated with constraints on transformers, which limit their capacity. Similarly to the setting of deep sets, observe that Assumption~\ref{assm: perm_inv1} constrains $\omega$ to a single layer perceptron, Assumption~\ref{assm: perm_inv_gn} constrains $\omega$ to $C^1$ -diffeomorphisms.  What happens if we work with transformers without constraint on $\omega$ and $\psi$? If $\psi(x,y)=\psi(\tilde{x}, y), \forall x\not=\tilde{x}$, then the decomposition for the causal transformer $\omega \Big(\sum_{j=1}^{i} \frac{1}{i} \cdot  \psi(x_i, x_j) \Big)$ becomes $\omega \Big(\sum_{j=1}^{i} \frac{1}{i} \cdot \psi(x_j) \Big)$, which is very similar to deep sets. In such a case, we can adapt arguments similar to those of arbitrary capacity deep sets and argue that there exist solutions to \eqref{eqn: risk_min} that do not achieve length and compositional generalization.

\paragraph{Role of CoT data} Similar to the previous section, we can show that CoT supervision helps achieve length generalization even for higher capacity transformers than ones considered in Theorem~\ref{thm2n}, \ref{thm2_transformers}. Consider the transformer family parametrized as $\rho(\frac{1}{i}\sum_{j\leq i} \phi(x_i, x_j))$, we can assume that the learner sees the output of attention $\sum_{j\leq i} \phi(x_i, x_j)$ as CoT in addition to input, output pair $(x_{\leq i}, y_i)$.  Supervision from CoT implies $\sum_{j\leq i} \psi(x_i,x_j) = \sum_{j\leq i} \phi(x_i,x_j)$. Following similar arguments of proof of Theorem~\ref{thm2n}, we get $\psi = \phi,\; \forall x$ in the training support of each individal $x$. Further, using the full support conditions similar to the role of CoT in deep sets described earlier, we can arrive at length generalization.

% In this section, we discussed models with limited capacity (Theorem~\ref{thm2n}, Theorem~\ref{thm2_transformers}) and models with high capacity, as discussed above. In Section~\ref{sec: ghc}, we present results that capture models between limited-capacity models of Theorem~\ref{thm2n}, Theorem~\ref{thm2_transformers} and high-capacity models in the discussion above. We now move to state-space models and RNNs. 

\subsection{State space models}
\label{sec: ssm}
In recent years, state space models \cite{gu2021efficiently, orvieto2023resurrecting} have emerged as a promising competitor to transformers. In \citep{orvieto2023universality, orvieto2023resurrecting}, the authors used the lens of linear recurrent layer followed by position-wise nonlinearities as the main building block to understand these models. We illustrate the dynamics of these models to show the generation of $x_{\leq t}$ and $y_{\leq t}$ next.  Given the current input $x_t$, we combine it linearly with the hidden state of the past to obtain the current hidden state. The hidden state is inputted to $\rho$, which generates the label as follows.

\begin{equation}
	\begin{split}
		& h_1 = B x_1; \;\; \;h_2 = \Lambda h_1 + Bx_2; \; \cdots,  h_{t} = \Lambda h_{t-1} + B x_{t}, \; \\ 
		& y_1 = \rho(h_1); \; y_2 = \rho(h_2); \; \cdots \cdots , \;\;\;\;\; y_{t} = \rho(h_t),
	\end{split}
	\label{eqn: ssm}
\end{equation}
where $h_t \in \mathbb{R}^{k}$ is hidden state at point $t$, $\Lambda \in \mathbb{R}^{k \times k}, B \in \mathbb{R}^{ k \times n }$ and $\rho:\mathbb{R}^{k}\rightarrow \mathbb{R}^{m} $. We can succinctly write $h_{t} = \sum_{j=0}^{t-1}\Lambda^{j} B x_{t-j}$.

\begin{assumption}
	Each function in the hypothesis class $\mathcal{H}$ takes a sequence $\{x_1,\cdots, x_i\}$ as input and outputs $h(x_1, \cdots, x_{i}) = \omega \Big(\sum_{j=0}^{i-1}\Lambda^{j} B x_{i-j} \Big)$, where $\omega:\mathbb{R}^{k}\rightarrow \mathbb{R}^{m}$ is a  $C^{1}$-diffeomorphism,  $B$ and $\Lambda$ are square invertible. As a result, $m=k=n$. 
	\label{assm: ssm}
\end{assumption}

    \begin{assumption}
\label{assm: joint_int_main}

  The support of $X_1$ is $\mathbb{R}^{n}$. For some length $2 \leq i\leq T$ there exists $in$ sequences $x_{\leq i}$ such that their concatenation forms a $ni \times ni $ matrix of rank $ ni$.
    \label{assm: rank}
\end{assumption}

\begin{restatable}{theorem}{ssm}
	\label{thm: ssm} If  $\mathcal{H}$ follows  Assumption~\ref{assm: ssm}, and the realizability condition holds, i.e., $f\in \mathcal{H}$, and further conditions on the support, i.e., Assumption~\ref{assm: reg_closed}, Assumption~\ref{assm: joint_int_main} holds, then the model trained to minimize the risk in \eqref{eqn: risk_min} with $\ell_2$ loss $(T\geq 2)$ achieves length and compositional generalization. 
\end{restatable}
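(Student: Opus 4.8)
The plan is to follow the template of the proofs of Theorems~\ref{thm1} and \ref{thm_ds_gn}: reduce the optimality of a risk minimizer to a pointwise identity on the support, exploit the recurrent structure together with the support conditions to pin the learned parameters down to a linear change of basis, and then propagate that identity to every sequence length. Write the true labeling function as $f(x_{\leq i}) = \rho\big(\sum_{j=0}^{i-1}\Lambda_\star^{j} B_\star x_{i-j}\big)$ and a learner $h$ as $\omega\big(\sum_{j=0}^{i-1}\Lambda^{j} B x_{i-j}\big)$, where $\rho,\Lambda_\star,B_\star$ come from realizability and $\Lambda_\star,B_\star$ are invertible. Since the loss is $\ell_2$ and $f\in\mathcal{H}$, $f$ solves \eqref{eqn: risk_min}, so any solution $h$ satisfies $\mathbb{E}\big[\|h-f\|^2\big]=0$ at every $t\leq T$ and hence $h=f$ almost everywhere on $\mathsf{supp}(X_{\leq t})$ by Theorem 1.6.6 \citep{ash2000probability}. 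Using Assumption~\ref{assm: reg_closed}, continuity of $\omega,\rho$, and Lemma~\ref{lemma1}, this upgrades to equality everywhere on the support; inverting the diffeomorphism $\omega$ gives $\sum_{j=0}^{i-1}\Lambda^{j} B x_{i-j} = a\big(\sum_{j=0}^{i-1}\Lambda_\star^{j} B_\star x_{i-j}\big)$ with $a := \omega^{-1}\circ\rho$.

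The first key step is to show $a$ is linear, which here is immediate rather than requiring the Cauchy-equation argument of Theorem~\ref{thm_ds_gn}. Specializing to $i=1$, the identity reads $B x_1 = a(B_\star x_1)$ for all $x_1\in\mathsf{supp}(X_1)=\mathbb{R}^{n}$. Because $B_\star$ is invertible, $B_\star x_1$ ranges over all of $\mathbb{R}^{n}$; substituting $z=B_\star x_1$ yields $a(z)=B B_\star^{-1} z$ for every $z\in\mathbb{R}^{n}$. Thus $a$ is the linear map $z\mapsto M z$ with $M:=B B_\star^{-1}$ invertible, and equivalently $\rho(z)=\omega(M z)$ for all $z$.

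Next I would match the recurrent dynamics. For the length $2\leq i\leq T$ supplied by Assumption~\ref{assm: joint_int_main}, the identity $\sum_{j=0}^{i-1}\Lambda^{j} B x_{i-j}=M\sum_{j=0}^{i-1}\Lambda_\star^{j} B_\star x_{i-j}$ is linear in the concatenated input $x_{\leq i}\in\mathbb{R}^{ni}$ and, by the rank condition, holds on $ni$ sequences spanning $\mathbb{R}^{ni}$; two linear maps agreeing on a spanning set agree everywhere, so it holds for all $x_{\leq i}$. Equating the coefficient block of each token $x_{i-j}$ gives $\Lambda^{j}B=M\Lambda_\star^{j}B_\star$ for $j=0,\dots,i-1$. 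The case $j=0$ reconfirms $M=BB_\star^{-1}$, and $j=1$ (available since $i\geq 2$) gives $\Lambda B=M\Lambda_\star B_\star$, which combined with $B=MB_\star$ and invertibility yields the conjugacy relation $\Lambda M=M\Lambda_\star$. A short induction then shows $\Lambda^{j}M=M\Lambda_\star^{j}$, and hence $\Lambda^{j}B=M\Lambda_\star^{j}B_\star$, for \emph{all} $j\geq 0$.

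This last propagation to all powers is the crux that delivers length generalization: it extends a relation verified only on the finitely many powers $j\le i-1$ seen in training to every power. With it, for any length $\tilde T$ and any $x_{\leq\tilde T}$ we have $\sum_{j=0}^{\tilde T-1}\Lambda^{j}Bx_{\tilde T-j}=M\sum_{j=0}^{\tilde T-1}\Lambda_\star^{j}B_\star x_{\tilde T-j}$, so applying $\omega$ and using $\omega(M z)=\rho(z)$ gives $h(x_{\leq\tilde T})=\rho\big(\sum_{j=0}^{\tilde T-1}\Lambda_\star^{j}B_\star x_{\tilde T-j}\big)=f(x_{\leq\tilde T})$, i.e. agreement on sequences of every length, which is both length and compositional generalization. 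I expect the main obstacle to be exactly this propagation step: one must confirm that the invertibility of $B,B_\star,\Lambda,M$ is genuinely what converts the finitely many training constraints into the commuting relation $\Lambda M=M\Lambda_\star$, and that the rank condition in Assumption~\ref{assm: joint_int_main} is precisely what separates the per-token coefficient matrices so that relation can be read off in the first place.
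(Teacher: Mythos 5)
Your proposal is correct and follows essentially the same route as the paper's proof: linearity of $a=\omega^{-1}\circ\rho$ read off from the $i=1$ case using invertibility of $B_\star$ and $\mathsf{supp}(X_1)=\mathbb{R}^n$, the rank condition of Assumption~\ref{assm: joint_int_main} used to separate the per-token coefficient blocks $\Lambda^{j}B=M\Lambda_\star^{j}B_\star$ for $j\le i-1$, the conjugacy $\Lambda M=M\Lambda_\star$ extracted from the $j=0,1$ blocks, and induction to propagate the relation to all powers and hence all lengths. The differences from the paper are purely notational (your $M$ is the paper's $C$, and your spanning-set argument is the paper's $R\boldsymbol{X}^{+}=0\implies R=0$).
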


\begin{proof}
We start with the same steps as earlier proofs and equate the prediction of $h$ and $f$. We first use the fact $h(x_{\leq i}) = f(x_{\leq i}), \forall i \leq T$ almost everywhere in the support. We can use the continuity of $h,f$ and regular closedness of the support to extend the equality to all points in the support (from first part of Lemma~\ref{lemma1}) to obtain the following. For all $x_{\leq i} \in \mathsf{supp}(X_{\leq i})$. 
% We equate the predictions for $y_k$ and true $f(x_{\leq k})$
    \begin{equation}
        \begin{split}
           &f(x_{\leq i}) =  h(x_{\leq i})  \implies \rho( \sum_{j=0}^{i-1}\Lambda^{j} B x_{i-j} ) = \omega(\sum_{j=0}^{i-1}\tilde{\Lambda}^{j} \tilde{B} x_{i-j} ) \implies  \omega^{-1} \circ \rho( \sum_{j=0}^{i-1}\Lambda^{j} B x_{i-j}) =  \sum_{j=0}^{i-1}\tilde{\Lambda}^{j} \tilde{B} x_{i-j}   \\
           & \implies c(\sum_{j=0}^{i-1}\Lambda^{j} B x_{i-j}) =  \sum_{j=0}^{i-1}\tilde{\Lambda}^{j} \tilde{B} x_{i-j}
    \end{split}
    \label{eqn1: ssm_proof}
    \end{equation}

For $i=1$, $\forall x_1\in \mathbb{R}^{n}, c(Bx_1) = \tilde{B}x_{1}$. Substitute $Bx_1=x$, we obtain $\forall x\in \mathbb{R}^{n}, c(x) = \tilde{B}B^{-1}x = Cx$, where we use the fact that $Bx_1$ spans $\mathbb{R}^n$ as $B$ is invertible. 

From linearity of $c$, we obtain 

\begin{equation}
    \omega^{-1}\circ \rho (z) = Cz \implies \rho(z) = \omega(Cz), \forall z \in \mathbb{R}^{n}
    \label{eqn: ssm1}
\end{equation}

We use this linearity of $c$ to simplify 
        \begin{equation}
        \begin{split}
           &c(\sum_{j=0}^{i-1}\Lambda^{j} B x_{i-j}) =  \sum_{j=0}^{i-1}\tilde{\Lambda}^{j} \tilde{B} x_{i-j} \implies  C(\sum_{j=0}^{i-1}\Lambda^{j} B x_{i-j}) =  \sum_{j=0}^{i-1}\tilde{\Lambda}^{j} \tilde{B} x_{i-j} \implies \\ 
            & [CB, C\Lambda B, C \Lambda^2 B, \cdots, C \Lambda^{i-1} B] \begin{bmatrix} x_{i} \\ x_{i-2} \\ \vdots \\ x_{1} \end{bmatrix} - [\tilde{B}, \tilde{\Lambda}\tilde{B}, \tilde{\Lambda}^2 \tilde{B}, \cdots,  \tilde{\Lambda}^{i-1} \tilde{B}] \begin{bmatrix} x_{i} \\ x_{i-2} \\ \vdots \\ x_{1} \end{bmatrix} = 0 \implies \\ 
           & \Big[[CB, C\Lambda B, C \Lambda^2 B, \cdots, C \Lambda^{i-1} B]-[\tilde{B}, \tilde{\Lambda}\tilde{B},  \tilde{\Lambda}^2 \tilde{B}, \cdots, \tilde{\Lambda}^{i-1} \tilde{B}]\Big] \boldsymbol{X} = 0, \\ 
    \end{split}
    \end{equation}
    
where $\boldsymbol{X}= \begin{bmatrix} x_{i} \\ x_{i-1} \\ \vdots \\ x_{1} \end{bmatrix}$. 

Denote $R = \Big[[CB, C\Lambda B, C \Lambda^2 B, \cdots, C \Lambda^{i-1} B]-[\tilde{B}, \tilde{\Lambda}\tilde{B}, \tilde{\Lambda}^2 \tilde{B}, \cdots,  \tilde{\Lambda}^{i-1} \tilde{B}]\Big]$. 
We collect a set of points $\boldsymbol{X}^{+} = [\boldsymbol{X}^{(1)}, \cdots, \boldsymbol{X}^{(l)}]$ where $l\geq ni$ and rank of $\boldsymbol{X}^{+}=ni$ (from Assumption~\ref{assm: rank}).  Since the matrix $\boldsymbol{X}^{+}$ is full rank, we have 
$$R \boldsymbol{X}^{+} = 0 \implies R=0.$$

This yields 

\begin{equation}
    CB = \tilde{B}, C\Lambda B = \tilde{\Lambda}\tilde{B}, \cdots, C \Lambda^{i} B =  \tilde{\Lambda}^{i} \tilde{B}. 
\end{equation}

Observe that from the second equality, we get $\tilde{\Lambda} = C \Lambda C^{-1}$. Given the  parameters $(\Lambda, B)$, the set of parameters $( \tilde{\Lambda}, \tilde{B})$ that solve the first two equalities are  $\{\tilde{B} \text{ is an arbitrary  invertible}$  \newline $\text{matrix,} \;\tilde{\Lambda}=C \Lambda C^{-1},\; \;\text{where } C=\tilde{B}B^{-1}\}$.  Take any solution of the first two equalities and compute

\begin{equation}
\tilde{\Lambda}^{i} \tilde{B} = C \Lambda^{i} C^{-1} \tilde{B} =  C\Lambda^{i}B, \forall i \geq 1  
\label{eqn:ssm2}
\end{equation}

From \eqref{eqn:ssm2} and \eqref{eqn: ssm1}, we obtain that for all $x_{\leq i} \in \mathbb{R}^{n i}$
\begin{equation}
    h(x_{\leq i}) = \omega(\sum_{j=0}^{i-1}\tilde{\Lambda}^{j} \tilde{B} x_{i-j} ) = \omega(C\sum_{j=0}^{i-1}\Lambda^j B x_{i-j} ) = \rho (\sum_{j=0}^{i-1}\Lambda^j B x_{i-j} ) = f(x_{\leq i})
\end{equation}

This establishes both compositional and length generalization. Before we finish the proof, we make the observation on linear identification.  From equation \eqref{eqn:ssm2} we obtain that for all $x_{\leq i} \in \mathbb{R}^{n i}$  
\begin{equation}
    \sum_{j=0}^{i-1}\tilde{\Lambda}^{j} \tilde{B} x_{i-j}= C(\sum_{j=0}^{i-1}\Lambda^{j} B x_{i-j})
\end{equation}
Recall that $\sum_{j=0}^{i-1}\tilde{\Lambda}^{j} \tilde{B} x_{i-j}=\tilde{h}_j$ and $\sum_{j=0}^{i-1}\Lambda^{j} B x_{i-j}=h_j$. From this it follows that $\tilde{h}_j = Ch_j$, which proves that the learned hidden states are a linear transform of the hidden states underlying the labeling function.  This establishes linear identification.

\end{proof}

 Similar to previous theorems, the above proof shows how the hidden state estimated by the learned model linearly identifies the true hidden state $\tilde{h}_t$. 
 % We extend Theorem~\ref{thm: ssm} to discrete tokens in Theorem~\ref{thm3:disc}. 

\paragraph{High capacity SSMs} In the above result, we operated with certain constraints on SSMs, i.e., the input dimension, output dimension, and hidden state dimension are equal. These constraints limit their capacity.  What happens if we put no constraints on $\Lambda$, $B$, and $\omega$?   \cite{orvieto2023universality} showed that SSMs with appropriately large $\Lambda$ and $B$ matrices can approximate a sequence-to-sequence mapping up to some length with arbitrary precision. Consider the true labeling function $f$ and another function $h$, which is equal to $f$ for all sequences of length up to $T$ and $f+c$ for longer lengths.  If we use such arbitrary capacity SSMs as our hypothesis class, then this hypothesis class contains both $f$ and $h$. As a result, $h$ is a solution to \eqref{eqn: risk_min} and does not achieve length generalization. We can extend the same argument to compositional generalization as well.

\paragraph{Role of chain-of-thought data} Similar to the previous section, we can show that CoT supervision helps achieve length generalization for high capacity SSMs, i.e., going beyond the limited capacity Assumption~\ref{assm: ssm}.
In the context of SSMs, the hidden state $h_t$ serves as the intermediate supervision. By equating the true and learned hidden state, we obtain $\Lambda h_{t-1} + B x_t = \tilde{\Lambda} h_{t-1} + \tilde{B} x_t \implies (\Lambda -\tilde{\Lambda}) h_{t-1} + (\tilde{B}-B)x_t =0$. If this identity is true for a diverse set of  $[h_{t-1}, x_t]$ vectors, then we obtain that $\tilde{\Lambda} = \Lambda$, and $\tilde{B} = B$. From this we know that the recurrent dynamics of hidden state should be identical for all time steps. Since the labels are equal as well, we obtain $\rho(h_t) = \omega(h_t), \forall t\leq T$. If the support of $h_t$ across $t$ spans $\mathbb{R}^k$, then that ensures $\rho=\omega$. From these conditions, we can conclude that the predictions and the true labels match at all time steps. Further details can be found in Appendix Section~\ref{sec:cot_appendix}.

In this section, we discussed models with limited capacity (Theorem~\ref{thm: ssm}) and models with high capacity (with and without CoT). In Section~\ref{sec: ghc}, we present results that capture models that loosely speaking are in between the limited-capacity models of Theorem~\ref{thm: ssm} and high-capacity models from the discussion above.
\subsection{Vanilla recurrent neural networks}
\label{sec: vrnn}
Standard RNNs have a nonlinear recurrence unlike the linear recurrence studied in the previous section. We use the same notation as the previous section and only add an activation for non-linear recurrence. We illustrate the dynamics of data generation below.

\begin{equation}
    \begin{split}
       & h_1 = \sigma(B x_1); \;\; \;h_2 = \sigma(\Lambda h_1 + Bx_2); \; \cdots,  h_{T} = \sigma(\Lambda h_{T-1} + B x_{T}) \; \\ 
        & y_1 = \rho(h_1); \;\; \;\;\;\; y_2 = \rho(h_2); \; \cdots \cdots \cdots, \;\;\;\;\; y_{T} = \rho(h_T).
    \end{split}
    \label{eqn: vrnn}
\end{equation}

\begin{assumption}
Each function in the hypothesis class $\mathcal{H}$  is a vanilla RNN of the form \eqref{eqn: vrnn}, where the position-wise non-linearity that acts on hidden state is a single layer perceptron $\sigma \circ A$,  and $\Lambda, B$ govern the hidden state dynamics (\eqref{eqn: vrnn}). $A, \Lambda, B$ are square invertible matrices, and $\sigma$ is the sigmoid activation. 
\label{assm: vrnn}
\end{assumption}

\begin{restatable}{theorem}{vrnn}
\label{thm:vrnn}
    If  $\mathcal{H}$ follows  Assumption~\ref{assm: vrnn}, and the realizability condition holds, i.e., $f\in \mathcal{H}$ and the regular closedness condition in Assumption~\ref{assm: reg_closed} holds, then the model trained to minimize the risk in \eqref{eqn: risk_min} with $\ell_2$ loss (with $T\geq 2$) achieves length and compositional generalization. 
\end{restatable}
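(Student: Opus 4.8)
The plan is to mirror the SSM proof of \cref{thm: ssm}, except that the nonlinear recurrence forces a genuinely new identifiability argument for the input matrix. Write the true parameters as $(A,\Lambda,B)$ and a risk minimizer's parameters as $(\tilde A,\tilde\Lambda,\tilde B)$, with hidden states $h_t=\sigma(\Lambda h_{t-1}+Bx_t)$, $\tilde h_t=\sigma(\tilde\Lambda\tilde h_{t-1}+\tilde B x_t)$ and outputs $y_t=\sigma(Ah_t)$, $\tilde y_t=\sigma(\tilde A\tilde h_t)$. First I would use realizability so that $f$ is a minimizer of \eqref{eqn: risk_min}; then any minimizer $h$ agrees with $f$ almost everywhere on $\mathsf{supp}(X_{\le i})$ for each $i\le T$, and \cref{lemma1} upgrades this to equality on the whole support. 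Since every output is a composition of sigmoids with linear maps it is real-analytic in $x_{\le i}$, and \cref{assm: reg_closed} guarantees the support has nonempty interior; the identity theorem for real-analytic functions on the connected domain $\mathbb{R}^{ni}$ then extends $\sigma(Ah_i)=\sigma(\tilde A\tilde h_i)$ to all inputs. Note this analyticity step is essential precisely because, unlike \cref{thm: ssm}, no full-support/rank hypothesis is assumed here. Injectivity of $\sigma$ gives $Ah_i=\tilde A\tilde h_i$, i.e. $\tilde h_i=Ch_i$ with $C=\tilde A^{-1}A$.

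The hard part is the length-one equation. At $i=1$ it reads $\sigma(\tilde B x_1)=C\sigma(Bx_1)$ for all $x_1$; substituting $u=Bx_1$ and $M=\tilde B B^{-1}$ yields the functional equation $\sigma(Mu)=C\sigma(u)$ on $\mathbb{R}^n$, with $\sigma$ applied coordinatewise. This is where the nonlinearity must be used rather than mere invertibility: differentiating the $k$-th coordinate in $u_p$ and then in $u_q$ with $p\ne q$ produces $M_{kp}M_{kq}\,\sigma''\big((Mu)_k\big)=0$, and since $(Mu)_k$ ranges over all of $\mathbb{R}$ while $\sigma''$ does not vanish identically, each row of $M$ has at most one nonzero entry. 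As $M$ is invertible it is therefore a generalized permutation matrix. Feeding this back, varying the inactive coordinates forces the off-diagonal entries of $C$ to vanish, evaluating at $u=0$ with $\sigma(0)=\tfrac12$ forces the surviving entries of $C$ to equal $1$, and $\sigma(m_k s)=\sigma(s)$ forces the diagonal scalars to be $1$. Hence $M=C=P$ for a single permutation matrix $P$, giving $\tilde B=PB$ and $\tilde A=AP^{-1}$.

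Next I would pin down $\tilde\Lambda$ using $i=2$. The matched equation is $\sigma(\tilde\Lambda\tilde h_1+\tilde B x_2)=P\sigma(\Lambda h_1+Bx_2)$; substituting $\tilde h_1=Ph_1$ and $\tilde B=PB$, and using that a permutation commutes with the coordinatewise sigmoid, $P\sigma(v)=\sigma(Pv)$, the right-hand side becomes $\sigma(P\Lambda h_1+PBx_2)$. Injectivity of $\sigma$ then gives $\tilde\Lambda P h_1=P\Lambda h_1$ for all $h_1$ in the full-dimensional range of $\sigma(B\,\cdot)$, so $\tilde\Lambda P=P\Lambda$, i.e. $\tilde\Lambda=P\Lambda P^{-1}$.

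Finally I would close by induction. With $\tilde A=AP^{-1}$, $\tilde\Lambda=P\Lambda P^{-1}$, $\tilde B=PB$, the base case $\tilde h_1=\sigma(PBx_1)=P\sigma(Bx_1)=Ph_1$ holds, and if $\tilde h_{t-1}=Ph_{t-1}$ then $\tilde h_t=\sigma(P\Lambda P^{-1}Ph_{t-1}+PBx_t)=P\sigma(\Lambda h_{t-1}+Bx_t)=Ph_t$ for every $t$ and every input. Consequently $\tilde y_t=\sigma(AP^{-1}Ph_t)=\sigma(Ah_t)=y_t$ at all lengths and on the whole hypercube, which is exactly length and compositional generalization. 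I expect the coordinatewise permutation-identifiability step, the analysis of $\sigma(Mu)=C\sigma(u)$, to be the main obstacle, since it is the only place where the recurrence's nonlinearity, rather than invertibility alone as in the SSM case, is indispensable.
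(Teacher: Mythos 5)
Your proposal is correct, and its overall skeleton is the paper's: equate $h$ and $f$ on the support via \cref{lemma1}, use the length-$1$ equation to force the matrices linking $(\tilde A,\tilde B)$ to $(A,B)$ to be a common permutation, use the length-$2$ equation to get $\tilde\Lambda = P\Lambda P^{-1}$, and close with the same induction on hidden states. Where you genuinely diverge is in how the central functional equation is handled, and your route is cleaner. You globalize immediately: all maps involved are real-analytic, the support has nonempty interior by \cref{assm: reg_closed}, so the multivariate identity theorem lets you work on all of $\mathbb{R}^{ni}$ from the outset; the paper instead stays on the support throughout, picking points in the interior of $\tilde B\cdot\mathsf{supp}(X_1)$, and only gets equality on all inputs after the parameters are identified. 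This pays off twice in your analysis of $\sigma(Mu)=C\sigma(u)$: the single mixed-partial identity $M_{kp}M_{kq}\,\sigma''\big((Mu)_k\big)=0$ for $p\neq q$ rules out two nonzero entries in any row in one stroke, and evaluation at $u=0$ (using $\sigma(0)=\tfrac12$) together with injectivity of $\sigma$ pins the surviving entries of $C$ and $M$ to $1$. The paper's counterpart of this step is more laborious: it first matches the zero patterns of the rows of $U$ and $V$ via first partials, disposes of the case of $p\geq 2$ nonzero entries using $p$-th order mixed partials together with \cref{lemma2} and Mityagin's theorem on zeros of analytic functions, and then settles the single-nonzero-entry case by manipulating the scalar equation $\frac{1}{1+e^{-vy}}=\frac{u}{1+e^{-y}}$ with two differentiations --- evaluation at $y=0$ is not available there because the equation is only known on the image of the support. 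Both routes ultimately lean on analyticity of the sigmoid (your identity theorem versus the paper's appeal to Mityagin), so the diversity requirements are identical; your version trades the paper's case analysis and auxiliary \cref{lemma2} for one $\sigma''$ computation, at the cost of invoking the global identity theorem up front, a step that is tied to supports with nonempty interior and hence, like the paper's derivative-based argument, must be replaced wholesale (as the paper does with finite differences in \cref{thm4:disc}) when tokens are discrete.
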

\begin{proof}
We start with the same steps as earlier proofs and equate the prediction of $h$ and $f$ everywhere in the support of the training distribution (using first part of Lemma~\ref{lemma1}). 
 We start with equating label at length 1, i.e., $y_1$. For all $x_1 \in \mathsf{supp}(X_1)$
    \begin{equation}
    \begin{split}
&  \sigma(A\sigma(B x_1)) = \sigma(\tilde{A} \sigma(\tilde{B}x_1)) \implies       A\sigma(B x_1) = \tilde{A} \sigma(\tilde{B}x_1) \implies     \sigma (B \tilde{B}^{-1} \tilde{B}x_1) = A^{-1}\tilde{A}\sigma (\tilde{B}x_1)
    \end{split}
    \end{equation}
    Say $y=\tilde{B}x_1$,  $A^{-1}\tilde{A}=U$, $B \tilde{B}^{-1}=V$.  We substitute these expressions in the simplificaction below.  We pick a $y$ in the interior of $\tilde{B} \cdot \mathsf{supp}(X_1)$.   

    \begin{equation}
        \sigma(Vy) = U \sigma(y)
    \end{equation}
    Take the first row of $V$ and $U$ as $v^{\top}$ and $u^{\top}$ to obtain 
     \begin{equation}
        \sigma(v^{\top}y) = u^{\top} \sigma(y)
    \end{equation}
    Suppose there is some non-zero component of $v$ say $i$ but the corresponding component is zero in $u$.

\begin{equation}
    \begin{split}
        \frac{\partial \sigma(v_i y_i + v_{-i}y_{-i})}{\partial y_i} = \sigma^{'}(v_i y_i + v_{-i}y_{-i})v_i  =  \frac{ \partial u_{-i}^{\top}\sigma(y_{-i})}{\partial y_i} = 0
    \end{split}
\end{equation}
From the above we get $\sigma^{'}(v^{\top}y)=0$. But sigmoid is strictly monotonic on $\mathbb{R}$, $\sigma^{'}(x)>0, \forall x \in \mathbb{R}$ and $v^{\top} y \in \mathbb{R}$. Hence, $\sigma^{'}(v^{\top}y)=0$ is not possible. 
Similarly, suppose some component is non-zero in $u$ and zero in $v$.

\begin{equation}
    \begin{split}
        \frac{\partial \sigma( v_{-i}^{\top}y_{-i})}{\partial y_i} = 0  =  \frac{\partial (u_i \sigma(y_i) +   u_{-i}^{\top}\sigma(y_{-i}))}{\partial y_i} = u_i \sigma^{'}(y_i)
    \end{split}
\end{equation}
Since the derivative of $\sigma$ cannot be zero, the above condition cannot be true.

From the above, we can deduce that both $u$ and $v$ have same non-zero components. 

Let us start with the case where $p\geq 2$  components of $u,v$ are non-zero. Below we equate the partial derivative w.r.t all components of $y$ that have non-zero component in $u$ (since $y$ is in the interior of the image of $\tilde{B}x_1$, we can equate these derivatives).  

\begin{align} 
\begin{split}
      \sigma(v^{\top}y) &= u^{\top} \sigma(y), \\ 
      \frac{\partial^{p} \sigma(s)}{\partial s^{p}} \Pi_{v_i\not=0} v_i &= 0 \implies \frac{\partial^{p} \sigma(s)}{\partial s^{p}}  = 0.\\ 
\end{split}
\end{align}

Since support of $X_1$ has a non-empty interior, the set of values $v^{\top}y$ takes also has a non-empty interior in $\mathbb{R}$. Hence, the above equality is true over a set of values $s$, which have a non-empty interior. Since $\sigma(s)$
 is analytic, $ \frac{\partial^{p} \sigma(s)}{\partial s^{p}}$  is also analytic. From  \citep{mityagin2015zero}, it follows that $ \frac{\partial^{p} \sigma(s)}{\partial s^{p}} =0$ everywhere. From Lemma~\ref{lemma2}, we know this condition cannot be true.

We are left with the case where $u$ and $v$ have one non-zero component each. 

$$\frac{1}{1+e^{-vy}} = \frac{u}{1+e^{-y}} \implies 1+ e^{-y} = u + ue^{-vy}$$

In the simplification above, we take derivative w.r.t $y$ to obtain $e^{-(v-1)y} = 1/uv$. We now  again take derivative again w.r.t $y$ to get $v=1$ and substitute it back to get $u=1$. Note that no other row of $U$ or $V$ can have same non-zero element because that would make matrix non-invertible. From this we deduce that $U$ and $V$ are permutation matrices. From $ \sigma(Vy) = U \sigma(y)$ it follows that $U=V=\Pi$. Thus $B =\Pi \tilde{B}$ and $\tilde{A} = A\Pi$.

Next, we equate predictions for $y_2$ to the ground truth (label $y_2$ exists as $T\geq 2$). For all $x_1,x_2 \in \mathsf{supp}(X_1,X_2)$
    \begin{equation}
    \begin{split} 
& \sigma(A\sigma(\Lambda \sigma(B x_1) + Bx_2)) = \sigma(\tilde{A} \sigma( \tilde{\Lambda} \sigma(\tilde{B}x_1) + \tilde{B}x_2)) \implies         A\sigma(\Lambda \sigma(B x_1) + Bx_2) = \tilde{A} \sigma( \tilde{\Lambda} \sigma(\tilde{B}x_1) + \tilde{B}x_2) \\ 
&  \implies \tilde{A} \sigma( \tilde{\Lambda} \sigma(\tilde{B}x_1) + \tilde{B}x_2) = A \Pi  \sigma(\tilde{\Lambda} \Pi^{\top}\sigma(Bx_1) + \Pi^{\top}Bx_2)= A \sigma(\Pi \tilde{\Lambda} \Pi^{\top}\sigma(Bx_1) + Bx_2). \\ 
    \end{split}
    \end{equation}
We use the simplification in the second step to equate to LHS in the first step as follows. 

\begin{equation}
\begin{split}
   & A \sigma(\Pi \tilde{\Lambda} \Pi^{\top}\sigma(Bx_1) + Bx_2) = A\sigma(\Lambda\sigma(B x_1) + Bx_2)  \implies (\Pi \tilde{\Lambda} \Pi^{\top} - \Lambda)\sigma(Bx_1)  = 0.  \\ 
   \label{eqn:rnnl}
\end{split}
\end{equation}

From Assumption~\ref{assm: reg_closed}, it follows that $\mathsf{supp}(X_1)$ is non-empty and regular closed, which implies $\mathsf{supp}(X_1)$ has a non-empty interior.  Since $\sigma \circ B$ is a homeomorphism, it follows that  
$\sigma(B \mathsf{supp}(X_1))$ spans a set that has a non-empty interior. From equation~\ref{eqn:rnnl} and Lemma~\ref{lemma3}, it follows that $\tilde{\Lambda} = \Pi^{\top} \Lambda \Pi$.

From the above conditions, we have arrived at $\tilde{\Lambda} =\Pi^{\top} \Lambda \Pi, \tilde{B}= \Pi^{\top}B, \tilde{A} = A\Pi$. 

We want to show that for all $k\geq 1$

\begin{equation}
     h_{k} = \Pi\tilde{h}_{k}, 
     \label{eqn: perm_id_rnn}
\end{equation}
where $h_k = \sigma(\Lambda h_{k-1} + Bx_k)$ and $\tilde{h}_k = \sigma(\tilde{\Lambda}\tilde{h}_{k-1} + \tilde{B}x_k)$ and $h_{0}=\tilde{h}_0=0$. In other words, we define $T_k$ as a mapping that takes $x_{\leq k}$ as input and outputs $h_k$, i.e., $T_k(x_{\leq k})= h_k$. Similarly, we write $\tilde{T}_k(x_{\leq k}) = \tilde{h}_k$. We want to show 
\begin{equation}
T_k = \Pi \tilde{T}_k,  \forall k
\label{eqn: tk_cont}
\end{equation}

We show the above by principle of induction. Let us consider the base case below. For all $x_1 \in \mathbb{R}^{n}$

\begin{equation}
   \tilde{A} \sigma(\tilde{B}x_1) =   A \Pi \sigma(\Pi^{\top}B x_1) = A\sigma(Bx_1)  = Ah_1 \implies h_1 = \Pi \tilde{h}_1 \implies T_1(x_1) = \Pi \tilde{T}_1(x_{1}) 
\end{equation}

Suppose $\forall j \leq k, T_j = \Pi \tilde{T}_{j}$.  Having shown the base case and assumed the condition for $j\leq k$, we now consider the mapping $\tilde{T}_{k+1}$
\begin{equation}
\Pi\tilde{T}_{k+1}(x_{\leq k+1 }) = \Pi\sigma(\tilde{\Lambda} \tilde{h}_k + \tilde{B}x_{k+1}) = \Pi \sigma(\Pi^{\top}\Lambda \Pi \tilde{h}_k + \Pi^{\top} Bx_k) =  \sigma(\Lambda h_k +  Bx_k) =  T_{k+1}(x_{\leq k+1 }).
\label{eqn: induction_k_vrnn}
\end{equation}

The prediction from the model $(\tilde{A}, \tilde{\Lambda}, \tilde{B})$ at a time step $k$ is denoted as $\tilde{y}_k$ and it relates to $\tilde{h}_k$ as follows $\tilde{y}_k=  \sigma(\tilde{A}\tilde{h}_k)$. We use the above condition in equation~\eqref{eqn: tk_cont} to arrive at the following result. For all $x_{\leq k} \in \mathbb{R}^{nk}$

$\tilde{y}_k = \sigma(\tilde{A}\tilde{h}_k) = \sigma(\tilde{A} \tilde{T}(x_{\leq k})) = \sigma(A \Pi \tilde{T}(x_{\leq k})) = \sigma(A T (x_{\leq k})) = y_k $

Before we finish the proof, we provide an argument that shows the relation between the learned and true representation.  From \eqref{eqn: induction_k_vrnn} it follows that for all $x_{\leq k} \in \mathbb{R}^{nk}$
\begin{equation}
     T_{k}(x_{\leq k })  = \Pi \tilde{T}(x_{\leq k}) \implies h_k =\Pi \tilde{h}_k
     \label{eqn: perm_id}
\end{equation}

The above implies permutation identification. 
\end{proof}

From equation \eqref{eqn: perm_id} it follows that the learned state $\tilde{h}_t$, and the true hidden state $h_t$, bear a linear relationship, where the linear relationship is a permutation map. We extend Theorem~\ref{thm:vrnn} to discrete tokens in Theorem~\ref{thm4:disc}.

\paragraph{High capacity RNNs} In our result above, similar to previous sections we showed that limited capacity RNNs can achieve length and compositional generalization. How about RNNs with arbitrary capacity, that is, without constraints on $\Lambda, B$ and $\rho$? These systems can approximate sequence-to-sequence models to arbitrary precision \citep{sontag1992neural, guhring2020expressivity}. Hence, we can use the same argument as previous sections to argue that if $\mathcal{H}$ corresponds to RNNs with arbitrary capacity, then there exist solutions to \eqref{eqn: risk_min} that do not achieve length  and compositional generalization.

\paragraph{Role of chain-of-thought data} For RNNs, we can make very similar arguments as we made for SSMs and show that CoT supervision in the form of availability of $h_t's$ at the training time ensures length generalization (provided certain diversity conditions are satisfied).  Further details on this argument can be found in the Appendix. 

In this section, we discussed models with limited capacity (Theorem~\ref{thm:vrnn}) and models with high capacity (with and without CoT), as discussed above. In Section~\ref{sec: ghc}, we present results that capture models that  loosely speaking are between the limited-capacity models of Theorem~\ref{thm:vrnn} and the high-capacity models discussed above.

\subsection{Experiments}
\label{sec:expmts}
We present the empirical evaluation of compositional and length generalization capabilities of the architectures from the previous section.   All the experiments are carried out in the realizable case where $f\in\mathcal{H}$, i.e., depending on the architecture in question, we use a random instance of the architecture to generate the labels. We train a model $h$ from the same architecture class to minimize the  $\ell_2$ loss between $h$ and $f$. Under different scenarios, we ask if $h$ achieves length generalization and compositional generalization. We also seek to understand the relationship between the hidden representations of $h$ and hidden representations of $f$.

\subsubsection{Length generalization}

We sample sequences $x_{\leq t}$ of varying length with a maximum length of $T=10$. Each token $x_i\sim \text{Uniform}[0,1]^{n}$, where $n=20$. The sequences are then fed to the labeling $f$, which comes from the hypothesis class of the architecture, to generate the labels. We minimize the empirical risk version of \eqref{eqn: risk_min} over the same hypothesis class with $\ell_2$ loss. For evaluation, we present the $\ell_2$ loss on the test datasets. We also evaluate $R^2$ of linear regression between the learned hidden representations denoted $\psi(x_i)$ and the true hidden representations $\phi(x_i)$ for all $x_i\in x_{\leq t}$ from the test dataset sequences. This metric is often used to evaluate linear identification claims \citep{khemakhem2020variational}, i.e. the higher this value, the closer the linear relationship. We present results averaged over five seeds for models with \textit{two} hidden layer MLPs for $\rho$ ($\phi$ is two hidden layer MLP for deep sets).  Figure \ref{plot: lg-2} shows a very small test loss of models on increasing sequence lengths when only trained with sequences of up to length $T=10$, which is in agreement with Theorems~\ref{thm1}-\ref{thm:vrnn}. Further, in Figure~\ref{fig:lg-transformer-2}, we show an exemplar sequence from test set and how the trained transformer tracks it.  Table \ref{tab:lg-1-R2} shows the average of $R^2$ score of $\psi(x_i), \phi(x_i)$ across the different positions $i$ at the test time. These results demonstrate a linear relationship between learned and true hidden representations, which agrees with our theoretical claims. In Section~\ref{sec: experiments_details}, we show that when the realizability condition does not hold, i.e., $f\not\in \mathcal{H}$, then length generalization is not achieved. We also present \emph{additional experiments with  failures in the high capacity settings}, and other experimental details in Section~\ref{sec: experiments_details}. 

%Due to space limitations, the experiments with discrete tokens are not presented but they present the same trend as the results in this section.

\begin{figure}
    \centering
    \includegraphics[width=1.0\textwidth]{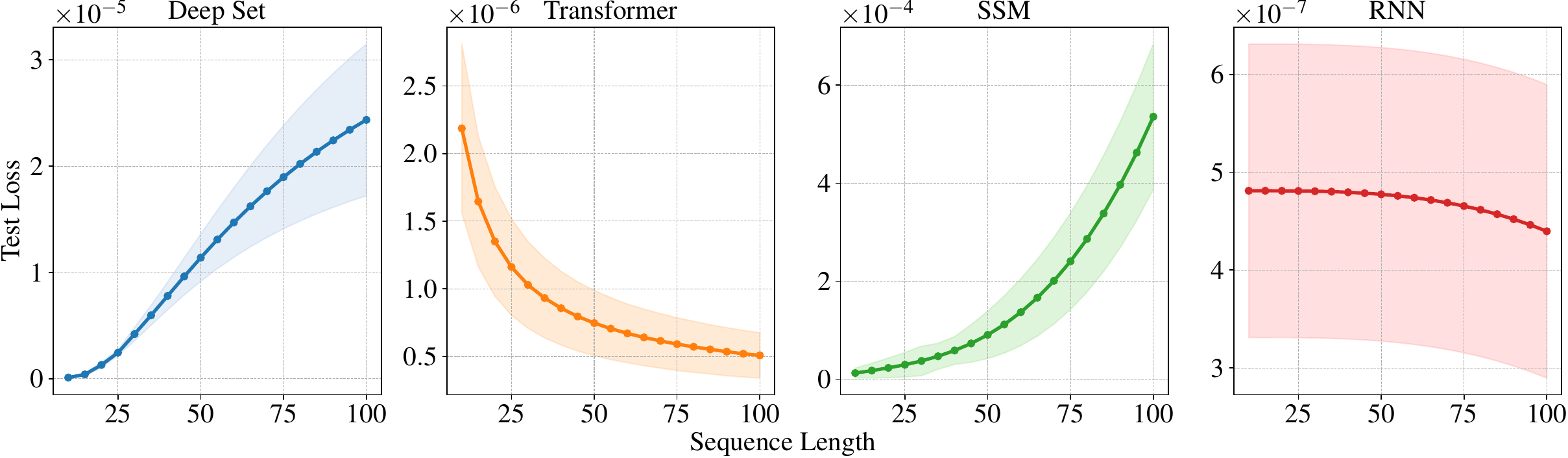}
    \caption{Length generalization: Test $\ell_2$ loss on sequences of different lengths. The models are trained only on sequences of length up to $T=10$. All models achieve small error values $\approx 10^{-4}-10^{-7}$ at all sequence lengths and thus length generalize. Since the error values are already quite small, the increasing or decreasing trends are not numerically significant.}
    \label{plot: lg-2}
\end{figure}
\begin{figure} % 'r' for right, 'l' for left
    \centering
    \includegraphics[width=0.5\textwidth]{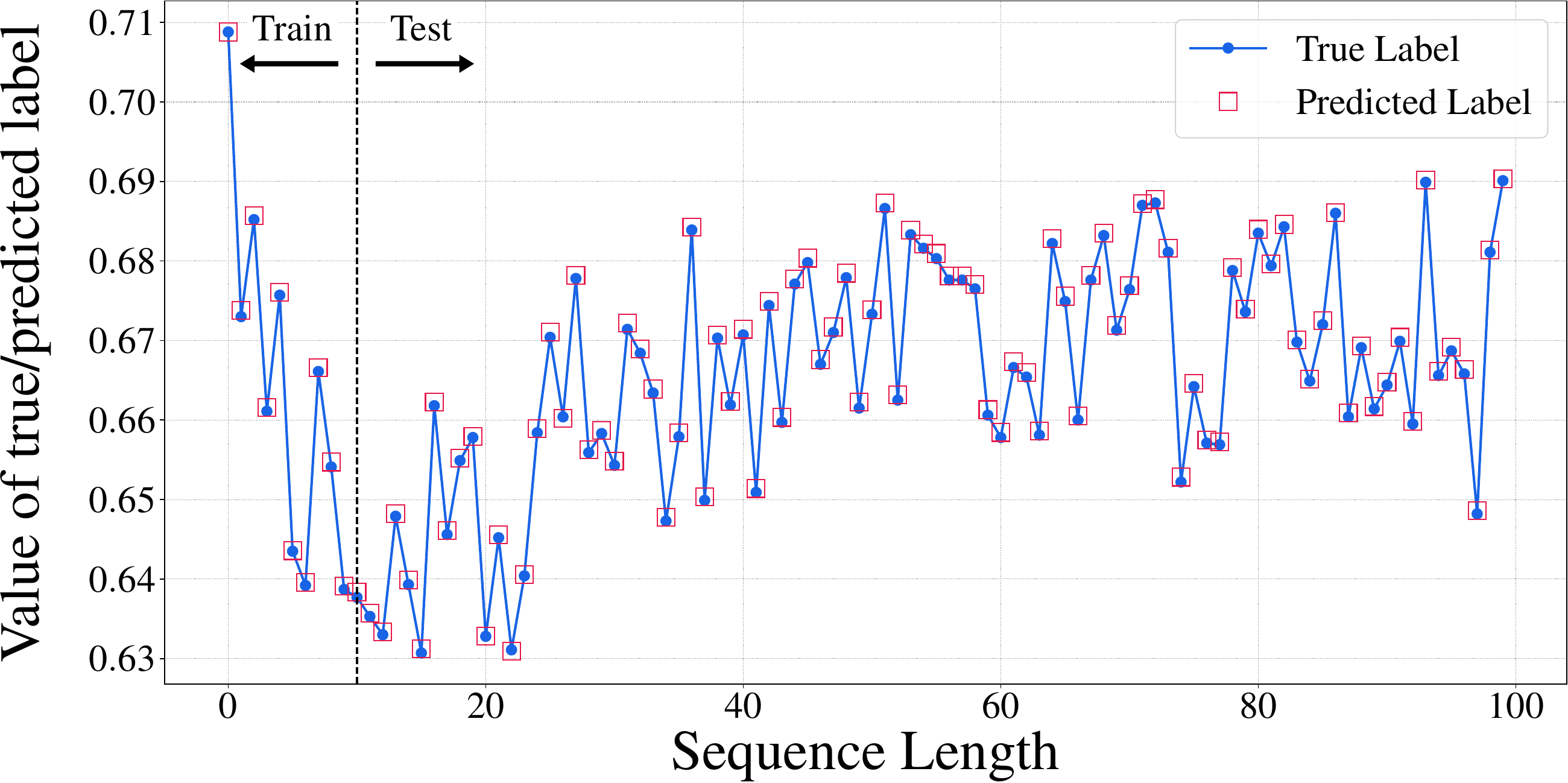}
    \caption{A transformer model with softmax attention with \textit{two} hidden layer MLP for $\omega$ trained on sequences of length up to $T=10$ length generalizes to sequences of length up to $100$.}
    \label{fig:lg-transformer-2}
\end{figure}
\begin{table}[h]
\centering
\renewcommand{\arraystretch}{1.2}

\begin{minipage}[t]{0.45\textwidth}
\centering
\begin{tabular}{lcc}
\hline
\rowcolor{lightblue}
Model & $R^2$ ($t=20$) & $R^2$ ($t=100$) \\
\hline
Deep set & $0.97 \pm 0.01$ & $0.97 \pm 0.01$ \\
Transformer & $0.99 \pm 0.01$ & $0.99 \pm 0.01$ \\
SSM & $0.99 \pm 0.01$ & $0.99 \pm 0.01$\\
RNN & $0.99 \pm 0.01$ & $0.99 \pm 0.01$\\
\hline\\
\end{tabular}
\caption{Average test $R^2$ of true and learned hidden representations $\psi(x_i),\phi(x_i)$ across all positions $i$ at various lengths unseen during training. A strong linear relationship is observed for all models across lengths.}
\label{tab:lg-1-R2}
\end{minipage}%
\hspace{0.05\textwidth} % Adjust horizontal space between the tables
\begin{minipage}[t]{0.45\textwidth}
\centering
\begin{tabular}{lcc}
\hline
\rowcolor{lightblue}
Model & Test Loss $\times 10^{ 6 }$ & $R^2$ \\
\hline
Deep set & $0.08 \pm 0.02$ & $0.96 \pm 0.01$\\
Transformer & $3.06 \pm 1.11$ & $1.00 \pm 0.00$\\
SSM & $5.92 \pm 2.47$ & $1.00 \pm 0.00$\\
RNN & $0.35 \pm 0.17$ & $0.96 \pm 0.01$\\
\hline\\
\end{tabular}
\caption{Compositional generalization: Test $\ell_2$ loss and $R^2$ score for models with \textit{two} hidden layers on sequences of length $T=10$. A strong linear relationship is observed for all models for new sequences made of unseen token combinations.}
\label{tab:cg-2}
\end{minipage}

\end{table}

\subsubsection{Compositional generalization}
\label{subsection:cg}
For compositional generalization, we generate data following the illustration in Figure~\ref{fig1}a.  During training, we sample each component $k$ of a token from $\text{Uniform}[0,1]$ and accept the sampled sequences that satisfy the following for all components $i$:
$ -0.5 \leq \sum_{j=1}^{T}  (x_j^k - 0.5) \leq 0.5 \hspace{2mm} \forall k$, where $x_j^{k}$ is the $k^{th}$ component of token $j$.   During testing, we sample $x_{\leq t}$ from the complementary set of the training set, that is, the corners of the hypercube $[0,1]^{nt}$. We present the $\ell_2$ loss on the test dataset, as well as the mean $R^2$, where the results are averaged over 5 seeds.  The rest of the details are the same as the previous section, i.e.,  $T=10$, $n=20$, $\rho$ is a two-layer MLP ($\phi$ is also a two hidden layer MLP for deep sets). Table \ref{tab:cg-2} shows the test $\ell_2$ loss and $R^2$ scores for linear identification.   
The code to reproduce the experiments in this section is available at \url{https://github.com/facebookresearch/Length-and-Compositional-Generalization}.

\section{General Hypothesis Classes}
\label{sec: ghc}

In the previous sections, we studied the different architectures separately under structural constraints (e.g., one block transformer, limit on the hidden embedding dimension for RNN and SSMs). In this section, we present a more unified approach to studying the different architectures. In this section, we focus only on length generalization guarantees. 

\subsection{Finite Hypothesis Class}

In the discussion so far, we have focused on different hypothesis classes $\mathcal{H}$ of infinite size. In this section, we focus on the finite hypothesis class, i.e., the set $\mathcal{H}$ has a finite size. We can construct such a finite hypothesis class for any architecture by restricting  the parameter vectors (weights, biases, etc.) to assume a finite set of values. Each possible parameter configuration denotes one distinct element in $\mathcal{H}$. Unlike the previous sections, we do not impose any further restrictions on $\mathcal{H}$ other than the finite size. This allows us to consider arbitrary sequence-to-sequence models -- RNNs, deep sets, transformers (e.g., with hard-coded positional encodings as in \citep{vaswani2017attention}) without restrictions on the depth and width as seen in the previous sections.

% We do not restrict ourselves to certain model class, e.g., one block transformer, deep set.

\begin{restatable}{theorem}{finitehypothesis}
\label{thm: fh} If $\mathcal{H}$ is a finite hypothesis class, the realizability condition holds, i.e., $f\in \mathcal{H}$,  then $\exists\; T_0<\infty$ such that the model trained to minimize the risk in \eqref{eqn: risk_min} with $\ell_2$ loss  and $T>T_0$ achieves length generalization. 
\end{restatable}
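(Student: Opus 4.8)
The plan is to exploit the finiteness of $\mathcal{H}$ to reduce ``length generalization'' to a statement about a finite collection of thresholds. First I would record the consequence of realizability: since $f \in \mathcal{H}$ and the loss is $\ell_2$, we have $R(f;T) = \sum_{t=1}^{T} \mathbb{E}\big[\|f(x_{\leq t}) - y_t\|^2\big] = 0$ for every $T$. Because the risk is a sum of non-negative terms, the minimum value of $R(\cdot;T)$ over $\mathcal{H}$ is exactly $0$, and hence every minimizer $h \in \mathcal{H}^{*}$ must satisfy $R(h;T) = 0$, i.e. $h(x_{\leq t}) = f(x_{\leq t})$ almost everywhere under $\mathbb{P}_{X_{\leq t}}$ for all $t \leq T$.

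Next I would partition $\mathcal{H}$ into the hypotheses that length generalize and those that do not. Call $h$ \emph{bad} if there is some length at which it disagrees with $f$ on a set of positive probability, i.e. $\mathbb{E}\big[\|h(x_{\leq t}) - f(x_{\leq t})\|^2\big] > 0$ for some $t \geq 1$; otherwise call it \emph{good}. Since both training and the length-generalization test draw from the same base distribution $\mathbb{P}_X$, a good hypothesis is precisely one that achieves zero error at every length, hence one that length generalizes. For each bad $h$ the set of lengths at which it fails is a non-empty subset of the positive integers, so it has a least element $t_h \coloneq \min\{t \geq 1 : \mathbb{E}[\|h(x_{\leq t}) - f(x_{\leq t})\|^2] > 0\}$, which is finite.

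The crucial step is to set $T_0 \coloneq \max\{t_h : h \in \mathcal{H} \text{ is bad}\}$ (taking $T_0 \coloneq 0$ if there are no bad hypotheses, in which case the claim is immediate); this maximum is finite precisely because $\mathcal{H}$, and hence the set of bad hypotheses, is finite. For any training horizon $T > T_0$ and any bad $h$, we have $t_h \leq T_0 < T$, so the failure length $t_h$ appears among the summands of $R(h;T)$; consequently $R(h;T) \geq \mathbb{E}\big[\|h(x_{\leq t_h}) - f(x_{\leq t_h})\|^2\big] > 0$, so $h$ cannot be a minimizer. Therefore every element of $\mathcal{H}^{*}$ is good and length generalizes. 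The only real obstacle is conceptual rather than technical: one must recognize that finiteness allows a single maximum over all ``first-failure'' lengths, yielding a uniform threshold $T_0$ valid for the whole class. The measure-theoretic detail that a positive-probability disagreement contributes strictly positively to the summed risk is routine, and no continuity, regular-closedness, or diversity assumptions are required for this particular result.
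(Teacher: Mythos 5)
Your proof is correct and follows essentially the same route as the paper's: both identify the set of hypotheses that fail at some finite first length, take the maximum of these first-failure lengths over the finite class to get $T_0$, and observe that training with $T > T_0$ forces every such hypothesis to incur strictly positive risk, excluding it from the minimizer set. The only cosmetic difference is that the paper phrases the exclusion via monotonicity of $R(h,T)$ in $T$, while you phrase it by noting the failing summand appears directly in $R(h;T)$ — these are the same observation.
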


\begin{proof} Due to realizability assumption, observe that $\min_{h \in \mathcal{H}} R(h,T) =0$ for all $T\geq 1$. 
Consider a function $h \in \mathcal{H}$. There can be two possible scenarios for $h$. Either $R(h,T)=0$ for all $T\geq 1$ or $\exists\; T_h, \; 1 \leq T_h<\infty,   R(h,T_h)>0$. Also, due to the definition of $R(h,T)  \geq R(h,T-1)$, we can conclude that $R(h,T)>0$ for all $T\geq T_h$. If we set $T\geq T_h$, then the set of solutions to expected risk minimization cannot contain $h$ (this follows from the observation $\min_{h \in \mathcal{H}} R(h,T) =0$) . Define $\tilde{\mathcal{H}}$ to be the set of all hypothesis classes for which there is some $T_h$ beyond which $R(h,T)>0$. Define $T^{\star} = \max_{h \in \tilde{\mathcal{H}}} T_h$. Since $\tilde{\mathcal{H}}$ is finite, $T^{\star}<\infty$. If we set $T>T^{\star}$, then all the solutions to \eqref{eqn: risk_min} for $T>T^{\star}$ cannot contain any element from $\tilde{\mathcal{H}}$.  
\end{proof}

Observe that the set $\mathcal{H} \setminus \tilde{\mathcal{H}}$ contains many equivalent ways of representing $f$. This set is similar to the set of solutions in the previous section, which have different weight configurations, but generate the same output as $f$ and linearly identify the representations of $f$.  In the above result, the value of the threshold on $T$, i.e., $T_0$, can be very large. In contrast, under the structural restrictions on $\mathcal{H}$ considered in the previous section, we obtain lower thresholds on $T_0$ (e.g., two or three).

\subsection{Infinite Hypothesis Class}

We now extend the discussion from the finite hypothesis class to the infinite hypothesis class $\mathcal{H}$.  We start by explaining the key challenge that leads to the core assumption that we make in this section. Each model $h \in \mathcal{H}$ can be broken down into functions that operate on fixed-dimensional inputs corresponding to each sequence length. We denote these functions by $\{h_t\}_{t=1}^{\infty}$. Using these functions, we can construct hypothesis classes of functions operating on fixed-dimensional inputs denoted $\{\mathcal{H}_t\}_{t=1}^{\infty}$.  From Theorem~\ref{thm: fh}, we know that length generalization is achievable for finite hypothesis classes. We also want to translate similar results for infinite hypothesis classes, but we need to bound the capacity of $\mathcal{H}$. If we can construct a finite cover that covers all hypothesis classes in $\{\mathcal{H}_t\}_{t=1}^{\infty}$ simultaneously, then we can bring forward the intuition from finite hypothesis classes to infinite hypothesis classes. Next, we describe the uniform Lipschitz assumption under which such a cover is possible.

Suppose that each function $h$ is parameterized by $\theta$ written as $h(x_{\leq t} ; \theta)$.  Each $x_j \in \mathcal{X}$ and thus $x_{\leq t} \in \mathcal{X}^{t}$, where $\mathcal{X}^{t} = \mathcal{X}\times \cdots \times\mathcal{X}$ (repeated $t$ times). Suppose that $\Theta$ is the set of all the parameters that correspond to all the functions in $\mathcal{H}$.

\begin{assumption}\textbf{Uniform Lipschitzness:}
\label{asmm: Lipschitz}
    Each function $h$ is Lipschitz continuous in the parameters $\Theta$ and there is a uniform Lipschitz constant that captures all possible sequences, i.e., $\|h(x_{\leq t}; \theta) - h(x_{\leq t}; \theta^{'})\|  \leq L\|\theta-\theta^{'}\|$, $\forall x_{\leq t} \in \mathcal{X}^{t}, \forall t,$ and $\forall \theta, \theta^{'} \in \Theta$.    
\end{assumption}

 Let us construct an $\eta$-cover for $\Theta$ denoted $\Theta_c$. As a result, for every $\theta \in \Theta,$ $\exists$ $\theta_k \in \Theta_c$ such that $\|\theta - \theta_k\| \leq \eta$. If $\Theta$ is a bounded set, i.e., $\forall \theta \in \Theta, \|\theta\| \leq \theta_{\mathsf{sup}} < \infty$, then we can construct a finite cover \citep{shalev2014understanding}. Owing to Lipschitzness, if $\eta \leq \frac{\epsilon}{L}$, then we obtain an $\epsilon$-cover for all $\{\mathcal{H}_t\}_{t=1}^{\infty}$. In Section~\ref{sec: rademacher_assm10}, we show how functions that satisfy Assumption~\ref{asmm: Lipschitz} with a bounded cover have a finite Radamecher complexity that grows in the size of the cover.  In the following, we discuss the conditions under which the different architectures -- deep sets, transformers, SSMs, and RNNs, follow Assumption~\ref{asmm: Lipschitz}.

\textbf{Multiblock transformers:} Let us consider one of the transformer blocks $u(x_{\leq t}; \theta) = \omega\big( \frac{1}{i}\sum_{j=1}^{i}\psi(x_i, x_j)\big)$. We suppose $\omega$ is parametrized by parameter $\theta_{\omega}$ and $\psi$ is parametrized by parameters $\theta_{\psi}$ and $\theta= [\theta_{\omega}, \theta_{\psi}]$. If $\omega$ is Lipschitz in the inputs and the parameter and if $\psi$ is Lipschitz in the parameters, then $u(\cdot)$ satisfies Assumption~\ref{asmm: Lipschitz}. See the justification below. 

Consider the function $u(x_{\leq t}; \theta) = \omega ( \frac{1}{t}\sum_{j=1}^{t} \psi(x_j; \theta_\psi); \theta_{\psi}) $. Let us denote $z_j = \frac{1}{t}\sum_{j=1}^{t} \psi(x_j; \theta_\psi)$. To prove Lipschitzness, consider two parameters, $\theta=[\theta_\omega, \theta_\psi]$ and $\theta^{'}=[\theta_\omega^{'}, \theta_\psi^{'}]$. 

\begin{equation}
\begin{split}
   \Big\|u(x_{\leq t}; \theta) - u(x_{\leq t}; \theta^{'})\Big\| & = \Big\|\omega \Big( \frac{1}{n}\sum_{j=1}^{n} \psi(x_j; \theta_\omega); \theta_{\psi}\Big) - \omega \Big( \frac{1}{n}\sum_{j=1}^{n} \psi(x_j; \theta_\psi^{'}); \theta_{\omega}^{'}\Big)\Big\| \\
 & = \Big\| \omega(z_j; \theta_{\omega}) - \omega(z_j^{'};\theta_{\omega}^{'}) \Big\|  =  \Big\| \omega(z_j; \theta_{\omega}) -   \omega(z_j; \theta_{\omega}^{'}) + \omega(z_j; \theta_{\omega}^{'}) -  \omega(z_j^{'};\theta_{\omega}^{'}) \Big\| \\ 
 & \leq   \Big\| \omega(z_j; \theta_{\omega}) -   \omega(z_j; \theta_{\omega}^{'}) \Big\|  + \Big\|\omega(z_j; \theta_{\omega}^{'}) -  \omega(z_j^{'};\theta_{\omega}^{'}) \Big\|  \\ 
 & \leq L_{\omega}\| \theta_{\omega} - \theta_{\omega}^{'} \| + M_{\omega}\| z_j- z_{j}^{'} \| \\ 
 & \leq  L_{\omega}\| \theta_{\omega} - \theta_{\omega}^{'} \| + M_{\omega}\frac{1}{n}\sum_{j=1}^{n}\| \psi(x_j; \theta_\psi)- \psi(x_j; \theta_\psi^{'}) \| \\ 
 & \leq L_{\omega}\| \theta_{\omega} - \theta_{\omega}^{'} \| + M_{\omega} L_{\psi}\| \theta_{\psi}- \theta_{\psi}^{'} \| \\ 
 & \leq \sqrt{L_{\omega}^2 + M_{\omega}^2 L_{\psi}^2}\|\theta-\theta^{'}\|
\end{split}
\end{equation}

Further, let us consider multiblock transformers obtained by composing each block of the form $u$. Since the composition of Lipschitz functions is Lipschitz \citep{federer2014geometric}, these multiblock transformer classes also satisfy Assumption~\ref{asmm: Lipschitz}. As a special case, the above result can also be applied to deep sets and even multiple blocks of deep sets. 

% Note that this
% includes ReLU transformer and other non-linearities but it does not include softmax based transformer whose
% Lipschitz constant is known to grow with sequence length.

\textbf{Multiblock SSMs:} Next, we discuss one block SSM $u(x_{\leq t}; \theta) = \omega\big(\sum_{j=0}^{i-1}\Lambda^{j}Bx_{i-j}\big)$. If $\|B\| \leq B_{\mathsf{sup}}<\infty$, $\|\Lambda\|\leq \Lambda_{\mathsf{sup}}<1$, $\|x_{j}\|\leq x_{\mathsf{sup}}<\infty$, and $\omega$ are Lipschitz both in the parameters and in the inputs, then one block SSM described above satisfies Assumption~\ref{asmm: Lipschitz}. The proof for this follows the same steps as in the RNN setting, which we present in the next paragraph.  We can construct multiblock SSMs by composing different $u$'s and these multiblock SSMs continue to satisfy Assumption~\ref{asmm: Lipschitz}. 

\textbf{RNNs:} Let us consider RNNs in equation \eqref{eqn: vrnn}. If $\omega$ is Lipschitz in the inputs and the parameters, $\sigma$ is Lipschitz, with $L_{\sigma}$ as Lipschitz constant and $\sigma(0)=0$, $\|\Lambda\|<\Lambda_{\mathsf{sup}}$, $\Lambda_{\mathsf{sup}}L_{\sigma}<1$, $\|B\| \leq B_{\mathsf{sup}}<\infty$, and $\|x_{j}\|\leq x_{\mathsf{sup}}<\infty$,  then under these conditions the RNNs satisfy Assumption~\ref{asmm: Lipschitz}. We justify this as follows.

Recall the dynamics of RNNs: $ h_t = \sigma(\Lambda h_{t-1} + B x_t), \;    y_t = \omega(h_t; \theta_\omega) $. To prove Lipschitzness, let us consider two sets of parameters $\theta = [\theta_{\omega}, \Lambda, B]$ and $\theta^{'}= [\theta_{\omega}', \Lambda', B']$. $\theta$ is a vector comprising of $\theta_{\omega}$ concatenated with matrices $\Lambda$ and $B$ flattened out as vectors $\Lambda$ and $B$; we abuse notation here to keep things simple.

\begin{equation} 
\begin{split}
 \|y_t - y_t^{'}\| =  \|  \omega(h_t; \theta_\omega)  - \omega(h_t^{'}; \theta_\omega^{'}) \| & \leq   \|  \omega(h_t; \theta_\omega)  - \omega(h_t; \theta_{\omega}^{'}) + \omega(h_t; \theta_{\omega}^{'}) - \omega(h_t^{'}; \theta_\omega^{'}) \| \\
&   \leq \|  \omega(h_t; \theta_\omega)  - \omega(h_t; \theta_{\omega}^{'}) \| + \| \omega(h_t; \theta_{\omega}^{'}) - \omega(h_t^{'}; \theta_\omega^{'}) \| \\ 
&  \leq  L_{\omega} \|\theta_{\omega}  - \theta_{\omega}^{'}\| + M_{\omega}\|h_{t}-h_{t}^{'}\|
\label{eqn: lipschitz_ssmm}
\end{split}
\end{equation}

Denote $h_{t}-h_{t}^{'}$ as $\Delta h_{t}$. 

\begin{equation}
\begin{split}
    \| \Delta h_t \|  &  = \sigma(\Lambda h_{t-1} + B x_t) -  \sigma(\Lambda' h_{t-1}' + B^{'} x_t)   \leq L_{\sigma}\|  \Lambda h_{t-1} + B x_t -  \Lambda' h_{t-1}' - B^{'} x_t\| \\ 
                      & \leq L_{\sigma}\|  \Lambda h_{t-1}  -  \Lambda' h_{t-1}' \| + L_{\sigma}\|Bx_{t} - B^{'} x_t\| \\ 
                      & \leq L_{\sigma}\|  \Lambda h_{t-1}  - \Lambda h_{t-1}^{'} + \Lambda h_{t-1}'-  \Lambda' h_{t-1}' \| + L_{\sigma}\|B-B^{'}\| \|x_t\| \\ 
                      & \leq L_{\sigma}\|  \Lambda h_{t-1}  - \Lambda h_{t-1}^{'}\| + L_{\sigma}\| \Lambda h_{t-1}'-  \Lambda' h_{t-1}' \| + L_{\sigma}\|B-B^{'}\| \|x_t\| \\ 
                      & \leq L_{\sigma}\|\Lambda\|\|h_{t-1}-h_{t-1}^{'}\|  + L_{\sigma}\| \Lambda -\Lambda'\|\| h_{t-1}' \|+ L_{\sigma}\|B-B^{'}\| \|x_t\| \\ 
                      & \leq L_{\sigma}\|\Lambda\|\|h_{t-1}-h_{t-1}^{'}\|  + L_{\sigma}\| \Lambda -\Lambda'\|h_{\mathsf{sup}}+ L_{\sigma}\|B-B^{'}\| x_{\mathsf{sup}},
\end{split} 
\label{eqn:liprnn}
\end{equation}

where $h_{\mathsf{sup}}$ is the bound on $\|h_{t-1}\|$, which we derive in a bit. Let $\alpha^{\star}= L_{\sigma}\| \Lambda -\Lambda'\| h_{\mathsf{sup}} + L_{\sigma}\|B-B^{'}\| x_{\mathsf{sup}}$. Substitute $\alpha^{\star}$ in \eqref{eqn:liprnn} to obtain

\begin{equation}
\begin{split}
 \| \Delta h_t \| &\leq L_{\sigma}\|\Lambda\|\|\Delta h_{t-1}\| + \alpha^{\star}, \\ 
                   &\leq \frac{\alpha^{\star}}{1-L_{\sigma}\|\Lambda\|}  \leq \frac{\alpha^{\star}}{1-L_{\sigma}\Lambda_{\mathsf{sup}}}.   \\
 \end{split}
 \label{eqn:liprn2}
\end{equation}

In the above simplification, we use $L_{\sigma}\Lambda_{\mathsf{sup}}<1$.  Substitute the above \eqref{eqn:liprn2} into \eqref{eqn: lipschitz_ssmm} to obtain.

\begin{equation}
\begin{split}
   &      \|  \omega(h_t; \theta_\omega)  - \omega(h_t^{'}; \theta_\omega^{'}) \| \\ & \leq L_{\omega} \|\theta_{\omega}  - \theta_{\omega}^{'}\| + M_{\omega} \frac{\alpha^{\star}}{1-L_{\sigma}\Lambda_{\mathsf{sup}}} 
      = L_{\omega} \|\theta_{\omega}  - \theta_{\omega}^{'}\| + M_{\omega} \frac{L_{\sigma}\| \Lambda -\Lambda'\| h_{\mathsf{sup}}+ L_{\sigma}\|B-B^{'}\| x_{\mathsf{sup}}}{1-L_{\sigma}\Lambda_{\mathsf{sup}}} \\ 
& =  L_{\omega} \|\theta_{\omega}  - \theta_{\omega}^{'}\| + \underbrace{M_{\omega}L_{\sigma}^2\frac{B_{\mathsf{sup}}x_{\mathsf{sup}}}{(1-L_{\sigma}\Lambda_{\mathsf{sup}})^2}}_{\gamma_1} \| \Lambda -\Lambda'\| + \underbrace{\frac{M_{\omega}L_{\sigma}x_{\mathsf{sup}}}{1-L_{\sigma}\Lambda_{\mathsf{sup}}}}_{\gamma_2}\|B-B^{'}\| \\
& \leq L_{\omega} \|\theta_{\omega}  - \theta_{\omega}^{'}\| + \gamma_1 \|\Lambda-\Lambda^{'}\|_F + \gamma_2\|B-B^{'}\|_F \\ 
& \leq \sqrt{L_{\sigma}^2 + \gamma_1^2 + \gamma_2^2 } \|\theta-\theta' \|.  
\end{split}
\end{equation}

In the simplification above, we use $\|A\|\leq \|A\|_F $ and  the expression  $h_{\mathsf{sup}}$  obtained as follows. In the simplification below, we use $\sigma(0)=0$ and $L_{\sigma}\Lambda_{\mathsf{sup}}<1$.

\begin{equation}
\begin{split}
   & h_t = \sigma(\Lambda h_{t-1} + B x_t),  \\
   & \|h_t\| \leq L_{\sigma}\Lambda_{\mathsf{sup}} \|h_{t-1}\| + L_{\sigma} B_{\mathsf{sup}}x_{\mathsf{sup}}  \leq  \frac{L_{\sigma}B_{\mathsf{sup}}x_{\mathsf{sup}}}{1-L_{\sigma}\Lambda_{\mathsf{sup}}}= h_{\mathsf{sup}}.\\
\end{split}
\end{equation}

\begin{lemma}
\label{lemma: seq_set}
If $\{\mathcal{A}_{t}\}_{t=1}^{\infty}$ is a sequence of sets, where $\mathcal{A}_{t+1}\subseteq \mathcal{A}_t$ and each $\mathcal{A}_{t}\subseteq \mathcal{A}$. If $|\mathcal{A}|<\infty$, then there exists $T_0$ such that $\mathcal{A}_{t}=\mathcal{A}^{\star}, \forall t \geq T_0$.  
\end{lemma}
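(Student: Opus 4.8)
The plan is to reduce this statement about a nested (decreasing) sequence of sets to the elementary fact that a non-increasing sequence of non-negative integers must eventually be constant. The key quantity to track is the cardinality $|\mathcal{A}_t|$: since $\mathcal{A}_{t+1} \subseteq \mathcal{A}_t \subseteq \mathcal{A}$ and $|\mathcal{A}| < \infty$, every $\mathcal{A}_t$ is finite and $|\mathcal{A}_{t+1}| \leq |\mathcal{A}_t| \leq |\mathcal{A}|$. Thus $\{|\mathcal{A}_t|\}_{t=1}^{\infty}$ is a non-increasing sequence taking values in the finite set $\{0, 1, \ldots, |\mathcal{A}|\}$.

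First I would invoke the well-ordering principle: the set of values $\{|\mathcal{A}_t| : t \geq 1\}$ is a non-empty subset of $\mathbb{N}$, so it has a least element, attained at some index $T_0$, i.e. $|\mathcal{A}_{T_0}| = \min_t |\mathcal{A}_t|$. Because the cardinality sequence is non-increasing, this minimum is held from $T_0$ onward: $|\mathcal{A}_t| = |\mathcal{A}_{T_0}|$ for all $t \geq T_0$.

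Next I would upgrade equality of cardinalities to equality of the sets themselves using the nesting. For any $t \geq T_0$, iterating the inclusion $\mathcal{A}_{s+1} \subseteq \mathcal{A}_s$ gives $\mathcal{A}_t \subseteq \mathcal{A}_{T_0}$; since both are finite and $|\mathcal{A}_t| = |\mathcal{A}_{T_0}|$, the inclusion forces $\mathcal{A}_t = \mathcal{A}_{T_0}$. Setting $\mathcal{A}^{\star} \coloneqq \mathcal{A}_{T_0}$ (equivalently, $\mathcal{A}^{\star} = \bigcap_{t=1}^{\infty} \mathcal{A}_t$) then yields $\mathcal{A}_t = \mathcal{A}^{\star}$ for all $t \geq T_0$, as claimed.

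The argument involves no genuine obstacle; the only point requiring care is that the stabilization is a consequence of the discreteness and well-ordering of $\mathbb{N}$ rather than any limiting/analytic statement, since a \emph{strictly} decreasing step removes at least one element and can occur at most $|\mathcal{A}|$ times before the chain is forced to become constant.
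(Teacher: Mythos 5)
Your proof is correct, and it takes a genuinely different (and more elementary) route than the paper's. The paper argues element-by-element: it defines $\mathcal{A}^{\star}$ as the limit of the sequence via indicator functions, observes that for each $h\in\mathcal{A}$ the $\{0,1\}$-valued sequence $\boldsymbol{1}_{\mathcal{A}_t}(h)$ converges and must therefore be eventually constant (a convergent sequence with values in $\{0,1\}$ stabilizes once one takes $\epsilon<1$), and then sets $T_0=\sup_{h\in\mathcal{A}} t(h,\epsilon)$, which is finite because $\mathcal{A}$ is finite. You instead track the single integer invariant $|\mathcal{A}_t|$: it is non-increasing and takes values in $\{0,\dots,|\mathcal{A}|\}$, so it attains its minimum at some $T_0$ and is constant from then on, and nesting plus equality of finite cardinalities forces $\mathcal{A}_t=\mathcal{A}_{T_0}$ for all $t\geq T_0$. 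Your version dispenses with limits of sets and the $\epsilon$-bookkeeping entirely, and it identifies $\mathcal{A}^{\star}$ concretely as $\mathcal{A}_{T_0}=\bigcap_{t\geq 1}\mathcal{A}_t$, which coincides with the paper's limit set (and incidentally makes immediate the inclusion $\mathcal{A}^{\star}\subseteq\mathcal{A}_t$ for every $t$, which the paper's Theorem on infinite hypothesis classes later uses). What the paper's element-wise argument buys in exchange is that it uses monotonicity only to guarantee that the limit exists; it would apply verbatim to any convergent sequence of subsets of a finite set, whereas your cardinality step (inclusion plus equal finite cardinality implies equality) genuinely relies on the nesting. For the lemma as stated both hypotheses are available, so either proof is complete.
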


\begin{proof} The sequence $\mathcal{A}_t$  indexed by $t$ is non-increasing, the limit of the above sequence exists and is denoted $\mathcal{A}^{\star}$.  $\boldsymbol{1}_{\mathcal{A}_{t}}(h)$ is indicator function takes a value of one if $h \in \mathcal{A}_t$ and zero otherwise. From the indicator function definition of the limit, we can write $\mathcal{A}^{\star}$ as 

    $$\mathcal{A}^{\star} = \{h \in \mathcal{A} : \lim_{t \rightarrow \infty} \boldsymbol{1}_{\mathcal{A}_{t}}(h) = 1 \}$$

   Since the limit of the sequence $\mathcal{A}_t$ exists, for each $h \in \mathcal{A}$, the limit  $\lim_{t \rightarrow \infty} \boldsymbol{1}_{\mathcal{A}_{t}}(h)$ exists denoted as $p(h)$.  Each element of this sequence $\boldsymbol{1}_{\mathcal{A}_{t}}(h)$ indexed by $t$ takes a value of one or zero. From the standard definition of limit, we know that for each $\epsilon$, there exists $t(h, \epsilon)$ such that $t>t(h, \epsilon)$, $|\boldsymbol{1}_{\mathcal{A}_{t}}(h) - p(h)| <\epsilon$. Both $\boldsymbol{1}_{\mathcal{A}_{t}}(h)$ and $p(h)$ can only take a value of $0$ or $1$. If $p(h)$ takes any other value other than $0$ or $1$,  then the distance of sequence terms $\boldsymbol{1}_{\mathcal{A}_{t}}(h)$ from $p(h)$ will be bounded away from zero, which is not possible. If $\epsilon<1$, then  for all $t>t(h, \epsilon)$, $\boldsymbol{1}_{\mathcal{A}_{t}}(h) = p(h)$. 

    Define $T_0= \sup_{h \in \mathcal{A}}t(h, \epsilon)$. Since $\mathcal{A}$ is finite, $T_0 < \infty$.  
    
    We write the set $\mathcal{A}_t$ as 
    
    $$\mathcal{A}_t = \{h \in \mathcal{A}:  \boldsymbol{1}_{\mathcal{A}_{t}}(h) = 1 \}$$
    
    If $t>T_0$, then 
    $$\mathcal{A}_t = \{h \in \mathcal{A}:  p(h) = 1 \}  = \{h \in \mathcal{A}: \lim_{t \rightarrow \infty} \boldsymbol{1}_{\mathcal{A}_{t}}(h) = 1 \} = \mathcal{A}^{\star}$$

\end{proof}

\paragraph{Constrained Learner}  Define the expected loss for sequence of length $t$, $\tilde{R}(h, t) = \mathbb{E}\big[\ell(y_t, h(x_{\leq t}))\big]$. $\tilde{R}$ only computes the expected loss for fixed length and $R(h,T) = \sum_{t=1}^{T}\tilde{R}(h,t)$.  In the next result, we will use an approximate notion of length generalization, where instead of asking for $\tilde{R}(h, t)=0, \forall t\geq 1$, we require $\tilde{R}(h, t)\leq \epsilon, \forall t\geq 1$. Also, instead of operating with expected risk minimization of the form \eqref{eqn: risk_min}, we work with a constrained learner. We search for a hypothesis in the cover that achieves an error of at most $\epsilon$ for each possible sequence length in the training distribution. We define this set as follows  $\{ \theta \in \Theta_c; \;  \tilde{R}(h(;\theta),t) \leq \epsilon,  \;  1 \leq t\leq T\}$. Observe that this learning model ensures that the error for each sequence length is small. This is closely related to collaborative learning \citep{blum2017collaborative}, where we can interpret the distribution at each sequence length as a player. This is also closely related to the group-distributionally robust learner \citep{sagawa2019distributionally}.  In our result next, we prove length generalization guarantees for this alternate learning model that encourages $\epsilon$-optimal learning at all sequence lengths. This result also suggests that it would be quite exciting to study the role of different optimization objectives and their impact on length generalization.

\begin{restatable}{theorem}{infinitehypothesis}
\label{thm: ifh} Suppose $\mathcal{H}$ satisfies the uniform Lipschitz continuity assumption (Assumption~\ref{asmm: Lipschitz}) and also that the realizability condition holds, i.e., $f\in \mathcal{H}$.  Suppose $\Theta_c$ is the $\eta$-cover of $\Theta$ with $\eta \leq \epsilon/L$. $\exists\; T_0<\infty$ such that for all $T>T_0$  the solutions in the set $\{ \theta \in \Theta_c; \;  \tilde{R}(h(;\theta),t) \leq \epsilon,  \;  1 \leq t\leq T\}$  satisfy $\tilde{R}(h(;\theta),t)\leq \epsilon, \forall t \geq 1$. 
\end{restatable}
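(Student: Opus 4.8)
The plan is to reduce the infinite hypothesis class to the \emph{finite} index set $\Theta_c$ and then replay the stabilization argument of Theorem~\ref{thm: fh}, using Lemma~\ref{lemma: seq_set} in place of the explicit maximum over a finite class. First I would observe that, because $\Theta$ is bounded, the $\eta$-cover $\Theta_c$ is finite; hence the family $\{h(\cdot;\theta):\theta\in\Theta_c\}$ plays the role of a finite hypothesis class, and the constrained-learner output set is simply a subset of this finite collection.

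Next I would record the consequence of the covering radius together with uniform Lipschitzness. Let $\theta^{\star}$ denote a parameter realizing $f$ (realizability). There is a cover point $\theta_k \in \Theta_c$ with $\|\theta_k - \theta^{\star}\| \le \eta \le \epsilon/L$, so Assumption~\ref{asmm: Lipschitz} gives $\|h(x_{\le t};\theta_k) - f(x_{\le t})\| \le L\eta \le \epsilon$ for every $x_{\le t}$ and every $t$ simultaneously. Consequently $\tilde{R}(h(\cdot;\theta_k),t)$ is controlled uniformly in $t$, so $\theta_k$ is feasible at all lengths; this both keeps the constrained set non-empty and exhibits a witness that already length-generalizes. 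The only bookkeeping here is matching the covering bound to the precise form of the $\ell_2$ loss (whether $\tilde{R}$ is the norm or its square), which merely rescales $\epsilon$ and is routine.

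The core of the argument is the set-theoretic stabilization. For each $T$ define $\mathcal{A}_T = \{\theta\in\Theta_c : \tilde{R}(h(\cdot;\theta),t)\le\epsilon,\ 1\le t\le T\}$, which is exactly the constrained-learner solution set. Since imposing the constraints up to $T+1$ only adds to those up to $T$, the sequence is nested, $\mathcal{A}_{T+1}\subseteq\mathcal{A}_T\subseteq\Theta_c$, and $\Theta_c$ is finite. Lemma~\ref{lemma: seq_set} then yields a finite $T_0$ and a limit set $\mathcal{A}^{\star}$ with $\mathcal{A}_T = \mathcal{A}^{\star}$ for all $T\ge T_0$. Identifying this limit as the infinite intersection, $\mathcal{A}^{\star} = \{\theta\in\Theta_c : \tilde{R}(h(\cdot;\theta),t)\le\epsilon\ \forall t\ge 1\}$, I conclude that for any $T>T_0$ every $\theta$ in the solution set $\mathcal{A}_T = \mathcal{A}^{\star}$ satisfies $\tilde{R}(h(\cdot;\theta),t)\le\epsilon$ at all lengths, which is the claimed approximate length generalization.

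The step I expect to be the genuine obstacle is making the reduction to a finite universe airtight: one must invoke boundedness of $\Theta$ to guarantee $\Theta_c$ is finite (so that Lemma~\ref{lemma: seq_set} applies), and then carefully verify that the stabilized limit $\mathcal{A}^{\star}$ coincides with the all-lengths feasible set and not merely with the pointwise limit of the membership indicators. Everything else — nestedness of the $\mathcal{A}_T$ and the existence of a feasible cover point — follows directly from the covering construction and the uniform Lipschitz bound.
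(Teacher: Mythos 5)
Your proposal is correct and follows essentially the same route as the paper's own proof: use realizability plus the $\eta$-cover and uniform Lipschitzness to exhibit a cover point that is $\epsilon$-feasible at every length, define the nested constrained sets $\mathcal{A}_T \subseteq \Theta_c$, and apply Lemma~\ref{lemma: seq_set} (with finiteness of $\Theta_c$ from boundedness of $\Theta$) to get stabilization at some finite $T_0$. Your identification of the stabilized limit with the all-lengths feasible set is exactly what the paper establishes via its short contradiction argument, and your side remarks (finiteness of the cover, the norm-versus-squared-loss rescaling of $\epsilon$) flag the same implicit assumptions the paper makes in the surrounding text.
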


\begin{proof}
From the realizability condition we know that there is a $\theta^{\star} \in \Theta$ such $h( \cdot, \theta^{\star}) = f$. From the definition of the cover we know that there exists a $\theta \in \Theta_c$ such that $\|\theta-\theta^{\star}\| \leq \eta$. From the Assumption~\ref{asmm: Lipschitz}, it follows that $\|h(x_{\leq t}; \theta) - h(x_{\leq t};\theta^{\star})\| \leq L \| \theta - \theta^{\star}\|  \leq L\eta \leq \epsilon, \forall t\geq 1$. As a result, $\theta \in \Theta_c$ achieves  $\tilde{R}(h(;\theta), t) \leq \epsilon, \forall t\geq 1$.

Define $\Theta_c^{T} = \{ \theta \in \Theta_c; \; \tilde{R}(h(;\theta), t) \leq \epsilon,  1 \leq  t \leq T\}$. Observe that $\Theta_c^{T}$ is a non-increasing sequence of sets.  From Lemma~\ref{lemma: seq_set}, we know that there exists a $T_0$ for which $\Theta_c^{t} = \Theta_c^{\star}$ for $t\geq T_0$, where $\Theta_c^{\star}$ is the set to which the sequence of sets $\Theta_c^{t}$ converges.  Now suppose that there exists some $\tilde{\theta} \in \Theta_c^{\star}$ such that it does not satisfy 
$\tilde{R}(h(;\tilde{\theta}), t) \leq \epsilon, \forall t\geq 1$. 
For such a parameter, we know that there exists some length $\tilde{T}$ for which $\tilde{R}(h(;\tilde{\theta}), \tilde{T})>\epsilon$. As a result, $\tilde{\theta} \not \in \Theta_c^{\tilde{T}}$. Since $\Theta_c^{t}$ are non-decreasing sets, $\Theta_c^{\star}\subseteq \Theta_c^{\tilde{T}}$, which implies $\tilde{\theta} \in \Theta_c^{\tilde{T}}$, which leads to a contradiction. Thus, no such $\tilde{\theta}$ exists. Thus, $\tilde{R}(h(;\theta), t) \leq \epsilon, \forall t\geq 1$ for all $\theta \in \Theta_c^{\star}$.

\end{proof}

\paragraph{Comparison with fundamental results in PAC learning} The result above bears a parallel to the fundamental results in PAC learning, which go as follows. If the complexity of the hypothesis class, measured by the VC dimension for classification tasks, is bounded, then in-distribution generalization is possible, provided that the number of samples in the training data is sufficiently large \citep{shalev2014understanding}. In our setting, if the complexity of the hypothesis class, which we measure by the size of the cover as we deal with regression, is bounded, then length generalization is possible, provided that the length of training sequences is sufficiently large.

\section{Discussion and limitations}
\label{sec: discussion}
Our work is a step toward theoretical foundations of successes and failures of length and compositional generalization in sequence-to-sequence models. We prove simplified versions of the recently proposed RASP conjecture. In our analysis, we make certain assumptions, e.g., on the architectures considered, which motivate some of the important conjectures for future work. The main conjectures go as follows -- 
a) \textbf{Conjecture 1:} Theorem \ref{thm2n} and \ref{thm2_transformers} currently incorporate different non-linear attentions but not the softmax attention. We believe that our result in Theorem~\ref{thm: ifh} can be adapted to address softmax-based attention. More recently \cite{huang2024formal}, which came out after the initial versions of our work, also made progress on Conjecture 1.
b) \textbf{Conjecture 2:} Our results focus on the generalization properties of all possible solutions to risk minimization \eqref{eqn: risk_min}. However, in practice the optimization procedure may be biased towards a subset of those. How does accounting for this bias impact the length generalization guarantees? Does accounting for the bias allow for lower sequence length thresholds $T_0$ than those required by our results? 
Finally, most of our results have relied on the realizability assumption. Our last result in Theorem~\ref{thm: ifh} can be adapted to the non-realizable case. We believe a more general framework that captures non-realizable length and compositional generalization guarantees would be a promising future work.

\bibliography{main_jmlr}

\newpage 

\appendix

\section*{Appendix}

% \addtocontents{toc}{\protect\setcounter{tocdepth}{3}}
% \section{Appendix}
% \addcontentsline{toc}{section}{Appendix Table of Contents}

% \section*{Table of Contents for the Appendix}
% \tableofcontents  % This will list only the sections of the appendix

\label{sec:appendix}

% \section{Illustration of the test support for compositional generalization}

% The notion of compositional generalization we study requires us to evaluate the model on the Cartesian product of the support of individual token distributions. 
% In Figure~\ref{fig:illus_supp}, we give some additional examples besides the ones shown in Figure~\ref{fig1}a to illustrate the difference between the Cartesian product set and the observed support. These examples illustrate the support of the observed training data distribution can be much smaller than the Cartesian product of the support of the individual tokens. 
% \begin{figure}
%     \centering
%     \includegraphics[width=4in]{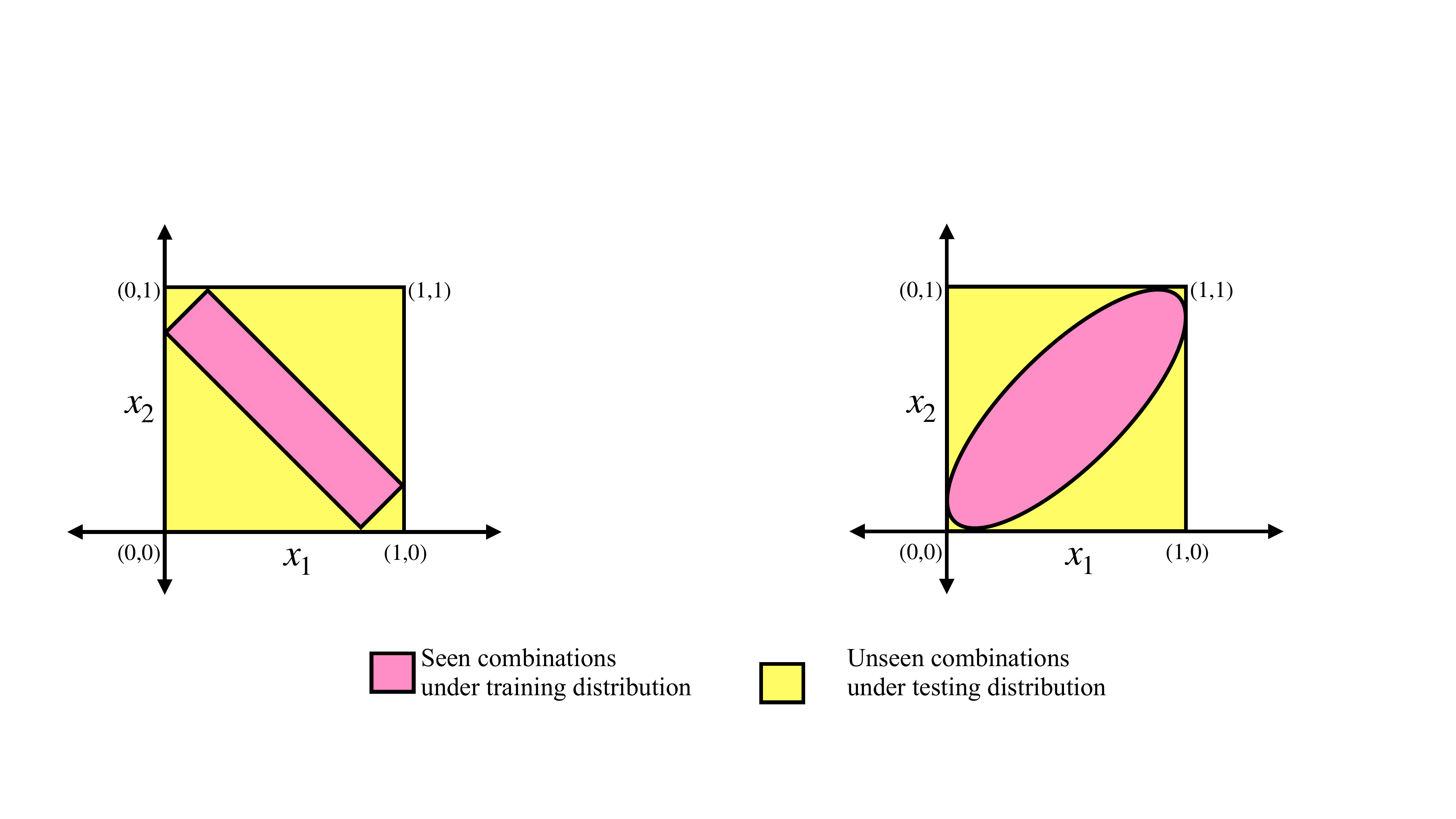}
%     \caption{Illustration of observed support and its Cartesian product. These examples illustrate the support of the observed training data distribution can be much smaller than the Cartesian product of the support of the individual tokens. }
%     \label{fig:illus_supp}
% \end{figure}

\section{Proofs and Extensions} 
\label{sec: proofs}

\subsection{Proof of Lemma~\ref{lemma1}}
\label{sec: proofoflemma1}

\begin{proof}
    Let us consider the interior of $\mathcal{X}$ and denote it as $\mathcal{X}^{\mathsf{int}}$.  We first argue that the two functions $f$ and $g$ are equal at all points in the interior. Suppose there exists a point $x \in \mathcal{X}^{\mathsf{int}}$  at which $f(x)\not=g(x)$. 
    Consider a ball centered at $x$ of radius $r$ denoted as $B(x,r) \subset \mathcal{X}^{\mathsf{int}}$ (such a ball exists as this point is in the interior of $\mathcal{X}$.). We argue that there exists at least one point $x_1$ in this ball at which $f(x_1)=g(x_1)$.    If this were not the case, then the equality will not hold on the entire ball, which would contradict the condition that the equality $f(x)=g(x)$ can only be violated on a set of measure zero. Note this condition holds true for all $r>0$. Suppose the distance of $x_1$ from $x$ is $r_1\leq r$. Consider another ball with radius $r_2<r_1$ and let $x_2 \in B(x, r_2)$ where the equality holds. By repeating this argument, we can construct a sequence $\{x_k\}_{k \in \mathbb{N}}$ that converges to $x$, where $\mathbb{N}$ is the set of natural numbers. On this sequence, the following conditions hold.
    \begin{equation}
    \begin{split}
      &  f(x_k) = g(x_k), \forall k \in \mathbb{N} 
    \end{split}
    \end{equation}
        Further, from the continuity of $f$ and $g$ it follows that  
    \begin{equation}
        \begin{split}
        \lim_{k \rightarrow \infty} f(x_k) = f(x), \lim_{k \rightarrow \infty} g(x_k) = g(x) \\
    \end{split}
    \end{equation}    
    Combining the above two conditions, we get that $f(x)=g(x)$. This leads to a contradiction since we assumed that $f(x)\not=g(x)$. Thus there can be no such $x$ in the interior at which $f(x)\not=g(x)$. From this it follows that $f(x)=g(x)$ for all $x \in \mathcal{X}^{\mathsf{int}}$. Now let us consider the closure of $\mathcal{X}^{\mathsf{int}}$, which is $\mathcal{X}$ itself since it is a regular closed set. Every point $x \in \mathcal{X}$ in the closure can be expressed as limit of points in  $\mathcal{X}^{\mathsf{int}}$. 
    Consider an $x \in \mathcal{X}$ and from the definition of regular closed set it follows that $\lim_{k\rightarrow \infty} x_k  =x$, where $x_k \in \mathcal{X}^{\mathsf{int}}$. We already know from the fact that $f$ and $g$ are equal in the interior 
    \begin{equation}
        f(x_k) = g(x_k), \forall k \in \mathbb{N} 
    \end{equation}
     From the continuity of $f$ and $g$ it follows 
     \begin{equation}
    \begin{split}
        \lim_{k \rightarrow \infty} f(x_k) = f(x), \lim_{k \rightarrow \infty} g(x_k) = g(x) \\
    \end{split}
    \end{equation}
     Combining the above two we get that $f(x)=g(x)$ for all $x \in \mathcal{X}$. 
     After this we can use Lemma 6 from \citep{lachapelle2023additive} to conclude that 
    $\nabla f(x)= \nabla g(x), \forall x \in \mathcal{X}$. 
    We repeat their proof here for completeness. For all points in the interior of $\mathcal{X}$, it follows that $\nabla f(x)=\nabla g(x), \forall x \in \mathcal{X}^{\mathsf{int}}$. 
    
    Now consider any point $x \in \mathcal{X}$. Since $\mathcal{X}$ is a regular closed set, $\lim_{k \rightarrow \infty} x_k = x$. 
    Since each $x_k$ is in the interior of $\mathcal{X}$ it follows that 
         \begin{equation}
    \begin{split}
        \nabla f(x_k) = \nabla g(x_k), \forall k \in \mathbb{N} 
    \end{split}
    \end{equation}

    From the continuity of $\nabla f$ and $\nabla g$ it follows that 
    \begin{equation}
                \lim_{k \rightarrow \infty} \nabla f(x_k) = \nabla f(x), \lim_{k \rightarrow \infty} \nabla g(x_k) = \nabla g(x) 
    \end{equation}
     Combining the above conditions, we get that $\nabla f(x)=\nabla g(x)$. This completes the proof. 
    % Since $Jf$ and $Jg$ are continuous, we 

\end{proof}

\subsection{Transformers}
\label{sec: transformers_proofs}
We present an extension of Theorem~\ref{thm2n} to incorporate positional encoding in Theorem~\ref{thm2}.  In the next part of this section, we prove Theorem~\ref{thm2_transformers}.  Theorem~\ref{thm_pe_gn}  adapts Theorem~\ref{thm2_transformers} to incorporate positional encodings. Lastly, we also discuss extension of Theorem~\ref{thm2n} and Theorem~\ref{thm2_transformers} to multiple attention heads.

\subsubsection{Extension of Theorem~\ref{thm2n} to incorporate positional encodings}

In what follows, we extend Theorem~\ref{thm2n} to incorporate positional encoding. We start with extension of the hypothesis class to incorporate positional encoding. 

\begin{assumption}
Each function in the hypothesis class $\mathcal{H}$ used by the learner is given as $h(x_1,\cdots, x_i) = \omega \Big(\sum_{j\leq i} \frac{1}{i} \psi_{i-j}(x_i, x_j)  \Big)$, where $\omega$ is a single layer perceptron with continuously differentiable bijective activation (e.g., sigmoid) and each $\psi_k$ is a map that is differentiable. Also, $\psi_{k}=0$ for $k\geq  T_{\mathsf{max}}$, i.e., two tokens that are sufficiently far apart do not interact.  
\label{assm: perm_inv_pe}
\end{assumption}

In the above assumption, we incorporate relative positional encodings by making the function $\psi_{i-j}$ depend on the relative positional difference between token $x_i$ and token $x_j$. We would like to emphasize the reasons why we assume that the tokens that are sufficiently far apart do not interact. Suppose $T_{\mathsf{max}}=\infty$, which implies tokens at all positions interact. As a result, during training since we only see sequences of finite length $T$, we will not see the effect of interactions of tokens that are separated at a distance larger than $T$ on the data generation, which makes it impossible to learn anything about $\phi_{i-j}$, where $i-j \geq T-1$.

In the theorem that follows, we show that we can achieve length and compositional generalization for the above hypothesis class.

\begin{restatable}{theorem}{transformers_pe}
    \label{thm2}
If $\mathcal{H}$ follows  Assumption~\ref{assm: perm_inv_pe}, the realizability condition holds, i.e., $f\in \mathcal{H}$, $\mathsf{supp}(X_i, X_j)=[0,1]^{2n},\; \forall i \not= j \in \{1, \cdots, \infty\}$, the regular closedness condition in Assumption~\ref{assm: reg_closed} holds and $T\geq T_{\mathsf{max}}\geq 2$, then the model trained to minimize the risk in \eqref{eqn: risk_min} with $\ell_2$ loss generalizes to all sequences in the hypercube $[0,1]^{nt}, \; \forall t$ and thus achieves length and compositional generalization. 
\end{restatable}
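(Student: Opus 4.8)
The plan is to mirror the proof of Theorem~\ref{thm2n}, adding the bookkeeping forced by the relative-position maps $\psi_{i-j}$ and the truncation $\psi_k=0$ for $k\geq T_{\mathsf{max}}$. First, realizability together with the $\ell_2$ loss makes $f$ a minimizer of \eqref{eqn: risk_min}, so any solution $h$ equals $f$ almost everywhere on $\mathsf{supp}(X_{\leq i})$ for every $i\leq T$. Each such support is regular closed and $h,f$ are continuously differentiable, so Lemma~\ref{lemma1} upgrades this to exact equality of both values and Jacobians on the support. Writing $\omega(z)=\sigma(Az)$, $\rho(z)=\sigma(Bz)$ and stripping the bijective activation $\sigma$ (the $1/i$ weights cancel), I obtain for every $i\leq T$ and every $x_{\leq i}$ in the support
\begin{equation}
A\sum_{k=0}^{m_i}\psi_k(x_i,x_{i-k}) = B\sum_{k=0}^{m_i}\phi_k(x_i,x_{i-k}),
\end{equation}
where $m_i=\min(i-1,T_{\mathsf{max}}-1)$, since the truncation kills all relative positions $k\geq T_{\mathsf{max}}$.

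Next I would isolate each relative-position map with $k\geq 1$. In a length-$i$ sequence the key token $x_{i-k}$ appears in the single term $\psi_k(x_i,x_{i-k})$ only, so differentiating the identity above with respect to $x_{i-k}$ gives $A\nabla_{x_{i-k}}\psi_k(x_i,x_{i-k})=B\nabla_{x_{i-k}}\phi_k(x_i,x_{i-k})$. This gradient identity depends only on the pair $(x_i,x_{i-k})$, and as $x_{\leq i}$ sweeps the support the pair sweeps $\mathsf{supp}(X_i,X_{i-k})=[0,1]^{2n}$ (using $i\neq i-k$); hence it holds on all of $[0,1]^{2n}$. Integrating componentwise over the key argument by the fundamental theorem of calculus, exactly as in Theorem~\ref{thm1}, yields for each $1\leq k\leq T_{\mathsf{max}}-1$ and all $(a,b)\in[0,1]^{2n}$
\begin{equation}
A\psi_k(a,b) = B\phi_k(a,b) + C_k(a),
\end{equation}
with $C_k(a)$ independent of the key $b$. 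Each such relation only needs a training length $k+1\leq T_{\mathsf{max}}\leq T$, so all of them are available.

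Substituting these $k\geq 1$ relations back into the stripped identity cancels the corresponding $\phi_k$ terms and leaves a relation in the query alone, $A\psi_0(x_i,x_i)+\sum_{k=1}^{m_i}C_k(x_i)=B\phi_0(x_i,x_i)$, valid for all $x_i\in[0,1]^n$. For a test sequence $x_{\leq \tilde T}\in[0,1]^{n\tilde T}$ I would then substitute the $k\geq 1$ relations into $A\sum_k\psi_k$ and replace this query-only term by the matching relation --- the length-$\tilde T$ one if $\tilde T\leq T$, and for $\tilde T>T$ the length-$T_{\mathsf{max}}$ one, which is legitimate because the window saturates ($m_{\tilde T}=m_{T_{\mathsf{max}}}=T_{\mathsf{max}}-1$) and, since $T\geq T_{\mathsf{max}}$, that saturated relation is actually observed in training and so holds for every query in $[0,1]^n$. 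This recovers $A\sum_k\psi_k(x_{\tilde T},x_{\tilde T-k})=B\sum_k\phi_k(x_{\tilde T},x_{\tilde T-k})$ and hence $h(x_{\leq\tilde T})=f(x_{\leq\tilde T})$. I expect the crux to be this last matching step rather than any calculation: unlike Theorem~\ref{thm2n}, where the number of attention terms grows with length and forces the offset to vanish (giving exact linear identification), here the window saturates at $T_{\mathsf{max}}$, so the offsets $C_k$ need not vanish; length generalization instead rests on the fact that the same lumped query relation recurs at every length $\geq T_{\mathsf{max}}$, which is exactly where the hypotheses $\psi_k=0$ for $k\geq T_{\mathsf{max}}$ and $T\geq T_{\mathsf{max}}$ enter.
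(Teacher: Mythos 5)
Your proof is correct, but it takes a genuinely different route from the paper's. The paper never differentiates or integrates: it telescopes across training lengths. The length-$1$ identity gives $A\psi_0(x,x)=B\phi_0(x,x)$ on the diagonal; subtracting it from the length-$2$ identity isolates $A\psi_1(x_2,x_1)=B\phi_1(x_2,x_1)$ on all of $[0,1]^{2n}$; iterating, each training length $i\leq T$ isolates the exact relation $A\psi_{i-1}=B\phi_{i-1}$, so every relative-position map is identified exactly (no offsets), and since $\psi_k=\phi_k=0$ for $k\geq T_{\mathsf{max}}$ with $T_{\mathsf{max}}\leq T$, every term appearing at any test length matches term by term. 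Your route --- Jacobian equality with respect to the key token, integration to get $A\psi_k(a,b)=B\phi_k(a,b)+C_k(a)$, then cancellation of the lumped quantity $A\psi_0(q,q)+\sum_k C_k(q)-B\phi_0(q,q)$ via the query-only relation observed at the saturated window $T_{\mathsf{max}}-1$ --- carries more bookkeeping but isolates a weaker sufficient condition: you show that generalization needs only the recurrence of the saturated-window relation at every length $\geq T_{\mathsf{max}}$, not exact identification of each $\psi_k$. What the paper's argument buys is precisely that stronger conclusion, term-wise linear identification $A\psi_k=B\phi_k$, which it highlights as a structural takeaway; note that the paper's telescoping applies to any minimizer, so your offsets $C_k$ in fact must vanish --- your closing remark that they ``need not vanish'' is accurate only as a statement about what your argument requires, not as a claim that minimizers with nonzero offsets exist. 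One shared caveat: both you and the paper pass from ``the identity holds as $x_{\leq i}$ ranges over $\mathsf{supp}(X_{\leq i})$'' to ``it holds for all pairs in $\mathsf{supp}(X_i,X_{i-k})=[0,1]^{2n}$''; this projection step rests on continuity and density of the projected support, and the paper glosses it at exactly the same level of rigor, so it is not a gap relative to the paper.
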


\begin{proof}

  Consider any $h$ that solves \eqref{eqn: risk_min}. Since $\ell$ is $\ell_2$ loss and realizability condition holds, $f$ is a solution to \eqref{eqn: risk_min}. For all $i\leq T$ and for all $x_{\leq i} \in \mathsf{supp}(X_{\leq i})$ except over a set of measure zero the following condition holds

\begin{equation}
  h(x_{\leq i}) = f(x_{\leq i}) . 
\end{equation}
The above follows from the fact that $h$  solves \eqref{eqn: risk_min}, i.e., $\mathbb{E}[\|h-f\|^2]=0$ and from Theorem 1.6.6. \citep{ash2000probability}.  Since $\mathsf{supp}(X_{\leq i})$ is regular closed, $f,h$ are both continuously differentiable, we can use Lemma \ref{lemma1}, it follows that the above equality holds for all $x_{\leq i} \in \mathsf{supp}(X_{\leq i})$.
From realizability condition it follows that true $f(x_{\leq i}) = \rho\Big(\sum_{k\leq i} \phi_{i-k}(x_i, x_k)\Big)$.  We substitute the parametric forms from Assumption~\ref{assm: perm_inv1} to get

\begin{equation}
    \begin{split}
        \omega \Big(\sum_{k\leq i} \frac{1}{i} \cdot \psi_{i-k}(x_i,x_k)  \Big) =  \rho\Big(\sum_{k\leq i}\frac{1}{i} \cdot \phi_{i-k}(x_i, x_k) \Big). \\  
    \end{split}
\end{equation}

Since $\omega$ and $\rho$ are single layer perceptron with bijective activation $\sigma$. We substitute the parametric form of $\omega$ and $\rho$ to obtain the following condition. For all $x_{\leq i} \in \mathsf{supp}(X_{\leq i})$, 

\begin{equation}
    \begin{split}
       & \sigma \Big(A \sum_{k\leq i} \frac{1}{i} \cdot  \psi_{i-k}(x_i, x_k) \Big) = \sigma\Big(B \sum_{k\leq i}\frac{1}{i} \cdot \phi_{i-k}(x_i, x_k)  \Big)  \implies  \\ 
       & A \sum_{k\leq i} \psi_{i-k}(x_i, x_k)  = B \sum_{k\leq i}\phi_{i-k}(x_i, x_k). \\ 
    \end{split}
\end{equation}

The second equality follows from the fact that the activation $\sigma$ is bijective and hence the inputs to $\sigma$ are equal.  We take the derivative of the expressions above w.r.t $x_j$ to get the following (follows from Lemma~\ref{lemma1}). The equality holds true for all $i\leq T$. 

From the above, we can use $i=1$ and obtain

$$A\psi_{0}(x_1,x_1) = B \phi_{0}(x_1,x_1), \forall x_1 \in [0,1]^n.$$

From $i=2$, we obtain 

$$A\psi_{0}(x_2,x_2) + A\psi_{1}(x_2,x_1) = B \phi_{0}(x_2,x_2) + B\phi_{1}(x_2,x_1), \forall x_1\in [0,1]^n, x_2\in [0,1]^n$$

Combining the two conditions we get $$A\psi_{1}(x_2,x_1)= B\phi_{1}(x_2,x_1), \forall x_1\in [0,1]^n, x_2\in [0,1]^n .$$ 

We can use this argument and arrive at  $$A\psi_{i-1}(x_i,x_1)= B\phi_{i-1}(x_i,x_1), \forall x_i\in [0,1]^n, x_1\in [0,1]^n, \forall i \leq T.$$

Thus we obtain 
\begin{equation}
\forall i-j \leq T-1, \forall x_i \in [0,1]^n, x_j \in [0,1]^n,  \hspace{2mm}  A\psi_{i-j}(x_i,x_j) =   B\phi_{i-j}(x_i,x_j).
\label{eqn: psi_phi_eq_pe}
\end{equation}

From Assumption~\ref{assm: perm_inv_pe} and $T\geq T_{\mathsf{max}}$, we already know that 

\begin{equation}
\forall i-j\geq  T, \forall x_i \in [0,1]^n, x_j \in [0,1]^n,  \hspace{2mm}  A\psi_{i-j}(x_i,x_j) =   B\phi_{i-j}(x_i,x_j)=0.
% \label{eqn: psi_phi_eq_pe}
\end{equation}

If $A$ is left invertible, then the above condition implies that linear representation identification is necessary for both compositional and length generalization.

We now consider any sequence $x_{\leq \tilde{T}} \in [0,1]^{n\tilde{T}}$. The prediction made by $h$ is

\begin{equation}
    \begin{split}
      h(x_{\leq \tilde{T}}) =   \sigma \Big(A \frac{1}{\tilde{T}}\sum_{j\leq \tilde{T}} \psi_{\tilde{T}-j}(x_{\tilde{T}}, x_j)\Big) =  \sigma \Big(B \frac{1}{\tilde{T}}\sum_{j\leq \tilde{T}} \phi_{\tilde{T}-j}(x_{\tilde{T}}, x_j)\Big) = f(x_{\leq \tilde{T}})
    \end{split}
\end{equation}

We use \eqref{eqn: psi_phi_eq_pe} in the simplification above. From the above, we can conclude that $h$ continues to be optimal for all sequences in 
$[0,1]^{n\tilde{T}}$.

\end{proof}

\subsubsection{Proof of Theorem~\ref{thm2_transformers}}

\begin{proof}
We start with the same steps as earlier proofs and equate the prediction of $h$ and $f$. We first use the fact $h(x_{\leq i}) = f(x_{\leq i}), \forall i \leq T$ almost everywhere in the support. We can use the continuity of $h,f$ and regular closedness of the support to extend the equality to all points in the support (follows from the first part of Lemma~\ref{lemma1}) to obtain the following. For all $x_{\leq i} \in \mathsf{supp}(X_{\leq i})$ 

\begin{equation}
    \begin{split}
       &  \omega \Big(\sum_{j <  i} \frac{1}{i-1}\cdot \psi(x_i, x_j) \Big) =  \rho\Big(\sum_{j < i} \frac{1}{i-1} \cdot \phi(x_i, x_j) \Big) \implies \\ & \sum_{j < i} \frac{1}{i-1} \psi(x_i, x_j) = \omega^{-1} \circ \rho  \Big(\sum_{j < i} \frac{1}{i-1} \cdot  \phi(x_i, x_j) \Big) \implies   \\
       & \sum_{j < i} \frac{1}{i-1} \psi(x_i, x_j) = a  \Big(\sum_{j <  i} \frac{1}{i-1} \phi(x_i, x_j) \Big), 
    \end{split}
    \label{eqn1_transformers_nl}
\end{equation}

where $a= \omega^{-1}\circ \rho$.  In the above simplification, we used the parametric form for the true labeling function and the learned labeling function and use the invertibility of $\omega$. Let us consider the setting when $i=2$. In that case summation involves only one term. Substitute $x_1=y$ and $x_2=x$. We obtain $\forall x\in [0,1]^n, y \in [0,1]^n,$

\begin{equation}
    \psi(x,y) = a( \phi(x,y)). 
     \label{eqn2_transformers_nl}
\end{equation}

The above expression implies that $\psi$ bijectively identifies $\phi$. Let us consider the setting when $i=3$ (this is possible since $T\geq 3$).  We substitute $x_3=x$, $x_2=y$, $x_1=z$ and obtain

\begin{equation}
    \frac{1}{2} \Big[a (\phi(x, y)) +a (\phi(x, z))\Big]  = a  \big( \frac{1}{2}\big(\phi(x,y) + \phi(x,z)\big) \big).   
      \label{eqn3_transformers_nl}
\end{equation}
 Substitute $\phi(x,y) = \alpha$ and $\phi(x,z) = \beta$. In the simplification that follows, we use the assumption that $[\phi(x,y), \phi(x,z)]$ spans $\mathbb{R}^{2m}$, where $\phi(x,y)$ and $\phi(x,z)$ individually span $\mathbb{R}^{m}$.

\begin{equation}
   \frac{1}{2}(a (\alpha) +a (\beta))  = a  \big( \frac{1}{2}(\alpha  + \beta) \big) . 
   \label{eqn: convex_a_comb}
\end{equation}

Observe that $a(0)=0$ because $\omega^{-1} \circ \rho(0) = 0$ because $\omega^{-1}(0) = \rho(0)= 0$.

\begin{equation}
\begin{split}
&      \frac{1}{2}(a (2\alpha) +a (0))  = a  \big( \frac{1}{2}(2\alpha  + 0) \big)   \\  
& a(2\alpha) = 2 a(\alpha )
\end{split}  
\end{equation}

Next, substitute $\alpha$ with $2\alpha$ and $\beta $ with $2\beta$ in \eqref{eqn: convex_a_comb} to obtain

\begin{equation}
\begin{split}
 &   \frac{1}{2}(a (2\alpha) +a (2\beta))  = a  \big( \frac{1}{2}(2\alpha  + 2\beta) \big) \\ 
&     a(\alpha + \beta) = a(\alpha) + a(\beta)
\label{eqn: linear_comb_A}
\end{split}    
\end{equation}

We use \eqref{eqn: linear_comb_A} to show that $a$ is linear. To show that, we need to argue that $a(c \alpha) = c a(\alpha)$ as we already know $a$ satisfies additivity condition. 

Suppose $c$ is some rational number, i.e., $c = p/q$, where $p$ and $q$ are non-zero integers. 

From the identity it is clear that $a(p \alpha) = p a(\alpha)$, where $p$ is some integer.

$a(q \frac{1}{q} \alpha) = q a(\frac{1}{q}\alpha) \implies a(\frac{1}{q}\alpha) = \frac{1}{q}a(\alpha)$, where $q$ is some integer.

Now combine these $a(p/q \alpha) = p a(1/q\alpha ) = \frac{p}{q}a(\alpha)$. We have established the homogeneity condition for rationals. 

We will now use the continuity of the function $a$ and density of rationals to extend the claim for irrationals. Suppose $c$ is some irrational. Define a sequence of rationals that approach $c$ (this follows from the fact that rationals are dense in $\mathbb{R}$). 

$a(c \alpha) = a(\lim_{n\rightarrow \infty} q_n \alpha) = \lim_{n \rightarrow \infty}  a(q_n \alpha). $

In the second equality above, we use the definition of continuity ($a$ is continuous since composition of continuous functions is continuous). We can also use the property that we already showed for rationals to further simplify

$\lim_{n \rightarrow \infty}  a(q_n \alpha)  = a(\alpha) \lim_{n \rightarrow \infty}  q_n  = ca(\alpha). $

Observe that $a:\mathbb{R}^{m}\rightarrow \mathbb{R}^{m}$ and for any $\alpha, \beta \in \mathbb{R}^{m}$ $a(\alpha+\beta) = a(\alpha) + a(\beta)$ and $a(c\alpha) = c a(\alpha)$.  From the definition of a linear map it follows that $a$ is linear. As a result, we can write $\forall x \in [0,1]^{n}, y\in [0,1]^{n}$

\begin{equation}
     \psi(x,y) = A( \phi(x,y))
     \label{eqn: trans_nl_lin_id}
\end{equation}

Observe that $a$ is invertible because both $\rho$ and $\omega$ are invertible. As a result, we know that $A$ is an invertible matrix. From this we get 

\begin{equation}
  \forall x \in [0,1]^{n}, y\in [0,1]^{n},   \phi(x,y) = A^{-1}\psi(x,y)= C(\psi(x,y))
     \label{eqn: linear_id_txformer}
\end{equation}

For all $z\in \mathbb{R}^{m}$, we obtain

$$a(z) = \rho^{-1}\circ \omega (z) = Cz  \implies \omega(z) = \rho(Cz)$$

Let us consider any sequence $x_{\leq \tilde{T}} \in [0,1]^{n\tilde{T}}$. 
We use the above conditions  
$$\omega\big( \sum_{j < \tilde{T}} \psi(x_{\tilde{T}}, x_j) \big) = \rho ( C\sum_{j < \tilde{T}} \psi(x_{\tilde{T}}, x_j) ) = \rho \big( \sum_{j < \tilde{T}} \phi(x_{\tilde{T}}, x_j) \big). $$ 

 Thus we obtain length and compositional generalization. 

\end{proof}

From \eqref{eqn: trans_nl_lin_id}, we again observe a linear relationship between the learned and the true representation, which implies linear identification.  
\paragraph{On absence of labels at all lengths from $1$ to $T$} We argue that the above proof can be adapted to the setting where we do not observe labels at all lengths from $1$ to $T$. Suppose we only observe label at length $T$. Take equation \eqref{eqn1_transformers_nl} and substitute $x_i=x$ and $x_j=y$ for all $j<i$ to obtain the same condition as equation \eqref{eqn2_transformers_nl}. 
Suppose $T$ is odd and larger than or equal to $3$. Fix $x_i=x$, $x_{2j-1}=y, \forall j \in \{1, \cdots, (T-1)/2\}$, $x_{2j}=z, \forall j \in \{1, \cdots, (T-1)/2\}$. We obtain the same condition as equation \eqref{eqn3_transformers_nl}. Rest of the proof can be adapted using a similar line of reasoning.

\paragraph{Remark on Assumption~\ref{assm: perm_inv_gn}} We require that the support of $[\phi(X_1,X_2), \phi(X_1,X_3)]$ is $\mathbb{R}^{2m}$. This assumption is used in the proof in equation \eqref{eqn: linear_comb_A}. We used this assumption to arrive at $a(\alpha + \beta) =a(\alpha) + a(\beta), \forall \alpha, \beta \in \mathbb{R}^{m}$. We then used continuity of $a$ to conclude $a$ is linear. Now suppose $[\phi(X_1,X_2), \phi(X_1,X_3)]$ is some subset $\mathcal{Z} \subseteq \mathbb{R}^{2m}$. We believe that it is possible to extend the result to more general $\mathcal{Z}$, it might still be possible to arrive at the linearity of $a$. We leave this investigation to future work. 

\subsubsection{Extending Theorem~\ref{thm2_transformers} to incorporate positional encodings} We next present the result when $\omega$ is continuously differentiable and invertible.

\begin{assumption}
Each function in the hypothesis class $\mathcal{H}$ used by the learner is given as $h(x_1,\cdots, x_i) = \omega \Big(\sum_{j\leq i} \psi_{i-j}(x_i, x_j)  \Big)$, where $\omega$ is a $C^{1}$-diffeomorphism. Also, $\psi_{i-j}=0$ for $i-j> T_{\mathsf{max}}-1$, i.e., two tokens that are sufficiently far apart do not interact.  For all $k\leq T_{\mathsf{max}}-1$ each $x \in [0,1]^{n},$ $\exists\; y \in [0,1]^{n}$ where $\psi_{k}(x,y)=0$. 
\label{assm: perm_inv_pe_gn}
\end{assumption}

In the theorem that follows, we require the support of training distribution under consideration is already sufficiently diverse and hence we only seek to prove length generalization guarantees. 

\begin{assumption}
\label{assm: supp2_gn}
    The joint support $\mathsf{supp}(X_{\leq T})=[0,1]^{T}$. The support of $[\phi_1(X_1,X_2), \phi_2(X_1,X_3)]$ is  $\mathbb{R}^{2k}$, where $\phi_{i-j}$ is the embedding function for the labeling function $\rho (\sum_{j\leq i} \phi_{i-j}(x_i,x_j))$. 
\end{assumption}

\begin{restatable}{theorem}{transformers_pe_gn}
    \label{thm_pe_gn}
If $\mathcal{H}$ follows  Assumption~\ref{assm: perm_inv_pe_gn}, the realizability condition holds, i.e., $f\in \mathcal{H}$, Assumption~\ref{assm: supp2_gn}  holds and $T\geq T_{\mathsf{max}}$, then the model trained to minimize the risk in \eqref{eqn: risk_min} (with $T\geq 2$) with $\ell_2$ loss  achieves length generalization. 
\end{restatable}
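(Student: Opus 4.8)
The plan is to follow the template of the proof of Theorem~\ref{thm2_transformers}, first reducing the optimality of $h$ to a pointwise functional identity, then upgrading $a=\omega^{-1}\circ\rho$ to an affine map, and finally peeling off the positional terms one position-difference at a time. Since $\ell$ is the $\ell_2$ loss and $f\in\mathcal{H}$, any minimizer $h$ of \eqref{eqn: risk_min} agrees with $f$ almost everywhere on $\mathsf{supp}(X_{\leq i})$ for every $i\leq T$; because this support is the regular closed cube $[0,1]^{ni}$ (Assumption~\ref{assm: supp2_gn}) and both maps are continuous, Lemma~\ref{lemma1} extends the equality to every point. Inverting the $C^1$-diffeomorphism $\omega$ turns $\omega(\sum_{j\leq i}\psi_{i-j}(x_i,x_j))=\rho(\sum_{j\leq i}\phi_{i-j}(x_i,x_j))$ into $\sum_{j\leq i}\psi_{i-j}(x_i,x_j)=a(\sum_{j\leq i}\phi_{i-j}(x_i,x_j))$ with $a=\omega^{-1}\circ\rho$, which is again a $C^1$-diffeomorphism and hence injective.

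Next I would establish that $a$ is affine. Fixing the query token $x_i=x$ and writing $c=\phi_0(x,x)$, $\beta=\phi_1(x,u)$, $\alpha=\phi_2(x,v)$, the cases $i=1,2,3$ give $\psi_0(x,x)=a(c)$ and two three-token identities in which one positional term is forced to vanish. Concretely, I would use the zeroing clause of Assumption~\ref{assm: perm_inv_pe_gn} (for each $x$ there is $y$ with $\psi_k(x,y)=0$) together with injectivity of $a$ to first deduce that the matching $\phi_k(x,y)$ also vanishes, and thereby isolate $\psi_1(x,u)=a(\beta+c)-a(c)$ and $\psi_2(x,v)=a(\alpha+c)-a(c)$. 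Substituting these and $\psi_0(x,x)=a(c)$ into the full $i=3$ identity yields $a(\alpha+c)+a(\beta+c)-a(c)=a(\alpha+\beta+c)$, i.e. $b(z):=a(z+c)-a(c)$ satisfies the Cauchy equation $b(\alpha)+b(\beta)=b(\alpha+\beta)$. By Assumption~\ref{assm: supp2_gn} the pair $(\alpha,\beta)$ ranges over all of $\mathbb{R}^{2k}$, so combining additivity with the continuity of $a$ and the density of the rationals (exactly as in the proof of Theorem~\ref{thm2_transformers}) shows $b$ is linear; hence $a(w)=Bw+d$ is affine with $B$ invertible, since $a$ is a diffeomorphism.

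Finally I would peel off the positional terms. With $a$ affine the identity reads $\sum_{j\leq i}\psi_{i-j}(x_i,x_j)=B\sum_{j\leq i}\phi_{i-j}(x_i,x_j)+d$ for all $i\leq T$. Differencing the identities for consecutive lengths (the device used for the single-layer case in the proof of Theorem~\ref{thm2}) gives $\psi_k(x,y)=B\phi_k(x,y)$ for all $1\leq k\leq T_{\mathsf{max}}-1$ and $\psi_0(x,x)=B\phi_0(x,x)+d$, while $\psi_k=\phi_k=0$ for $k\geq T_{\mathsf{max}}$; here $T\geq T_{\mathsf{max}}$ guarantees that every difference $k$ up to $T_{\mathsf{max}}-1$ is realized in training. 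Since $\rho(w)=\omega(Bw+d)$, for any length $\tilde T$ the model computes $h(x_{\leq\tilde T})=\omega(\sum_j\psi_{\tilde T-j}(x_{\tilde T},x_j))=\omega(B\sum_j\phi_{\tilde T-j}(x_{\tilde T},x_j)+d)=\rho(\sum_j\phi_{\tilde T-j}(x_{\tilde T},x_j))=f(x_{\leq\tilde T})$, which is the claimed length generalization, and the relations $\psi_k=B\phi_k$ record the usual linear identification.

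The main obstacle is the middle step: isolating the individual positional contributions so that a clean additive equation for $a$ emerges. The self-interaction term $\psi_0(x,x)$ is present in every sum and cannot be removed by the positional structure, so the functional equation is naturally centered at $c=\phi_0(x,x)$ and yields an affine $a$ rather than an exactly linear one. The key technical device is that the zeroing clause of Assumption~\ref{assm: perm_inv_pe_gn}, fed through injectivity of $a$, simultaneously forces the matching $\phi_k$ to vanish, which is precisely what allows solving for each $\psi_k$ separately; verifying that the residual constant $d$ cancels consistently, because $\rho=\omega(B\,\cdot+d)$, is what ultimately makes length generalization go through despite $a$ not being exactly linear.
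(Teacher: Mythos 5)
Your proposal is correct in substance and follows the same skeleton as the paper's proof: pointwise equality via Lemma~\ref{lemma1}, inversion of $\omega$ to obtain $a=\omega^{-1}\circ\rho$, a Cauchy-type functional equation extracted from the short-length identities together with the zeroing clause, linearity of the resulting map by continuity and density of the rationals, and propagation to every position difference $k\le T_{\mathsf{max}}-1$. The genuine differences lie in how you treat the self-interaction term and the constant. The paper's proof quietly replaces the model of Assumption~\ref{assm: perm_inv_pe_gn} by the normalized sum $\frac{1}{i-1}\sum_{j<i}\psi_{i-j}(x_i,x_j)$, i.e., it drops the $j=i$ term $\psi_0(x_i,x_i)$; its $i=2$ case then gives $\psi_1=a\circ\phi_1$ outright, the zeroing clause is applied to the true $\phi_1$ (leaving an $a(0)$ term that is implicitly set to zero, even though, unlike Assumption~\ref{assm: perm_inv_gn}, Assumption~\ref{assm: perm_inv_pe_gn} never posits $\omega(0)=0$), and the $i=3$ identity yields a Jensen-type equation with no offset, whence $a$ is linear. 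You instead keep the self-term, as the assumption literally states, which forces the shifted equation $a(\alpha+c)+a(\beta+c)-a(c)=a(\alpha+\beta+c)$ and an affine $a(w)=Bw+d$. Your two observations that make this work --- transferring zeros of $\psi_1$ to zeros of $\phi_1$ through injectivity of $a$ (thereby never needing $a(0)=0$), and the fact that $d$ enters each prediction exactly once, through the single self-term, so that $\rho(w)=\omega(Bw+d)$ cancels it at every length --- are both correct and are in fact more careful than the paper on these points. Your differencing-in-length argument giving $\psi_k=B\phi_k$ for all $1\le k\le T_{\mathsf{max}}-1$ matches the device used in the proof of Theorem~\ref{thm2}.

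One caveat worth recording: because your functional equation is centered at $c=\phi_0(x,x)$, which moves with the query, running the Cauchy argument at a fixed $c$ requires that the pair $\bigl(\phi_1(x,\cdot),\phi_2(x,\cdot)\bigr)$ span $\mathbb{R}^{2k}$ for some fixed query $x$; this is a mildly stronger reading of Assumption~\ref{assm: supp2_gn} than the stated joint-support condition, whereas the paper's self-term-free formulation avoids the issue because its equation contains no $c$. This is a gap of the same character as the looseness already present in the paper's own support assumptions, so it does not invalidate your route, but if you keep the self-term you should state the conditional-span requirement explicitly.
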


\begin{proof}
We start with the same steps as earlier proofs and equate the prediction of $h$ and $f$. We first use the fact $h(x_{\leq i}) = f(x_{\leq i})$ almost everywhere in the support. We can use the continuity of $h,f$ and regular closedness of the support to extend the equality to all points in the support (follows from the first part of Lemma~\ref{lemma1}) to obtain the following. For all $x_{\leq i} \in \mathsf{supp}(X_{\leq i})$ 

\begin{equation}
    \begin{split}
       &  \omega \Big(\sum_{j <  i} \frac{1}{i-1}\psi_{i-j}(x_i, x_j) \Big) =  \rho\Big(\sum_{j < i} \frac{1}{i-1}\phi_{i-j}(x_i, x_j) \Big), \\  
       &  \sum_{j < i} \frac{1}{i-1} \psi_{i-j}(x_i, x_j) = \omega^{-1} \circ \rho  \Big(\sum_{j < i} \frac{1}{i-1} \phi_{i-j}(x_i, x_j) \Big),  \\
       & \sum_{j < i} \frac{1}{i-1} \psi_{i-j}(x_i, x_j) = a  \Big(\sum_{j <  i} \frac{1}{i-1} \phi_{i-j}(x_i, x_j) \Big), 
    \end{split}
    \label{eqn0_proof_trans_nl1}
\end{equation}

where $a = \omega^{-1}\circ \rho$.  In the above simplification, we used the parametric form for the true labeling function and the learned labeling function. We also used the invertibility of $\rho$. Let us consider the setting when $i=2$. In that case summation involves only one term. Substitute $x_1=y$ and $x_2=x$. We obtain $\forall x\in [0,1]^n, y \in [0,1]^n,$

\begin{equation}
    \psi_{1}(x,y) = a( \phi_{1}(x,y)). 
    \label{eqn1_proof_trans_nl1}
\end{equation}

For $i=3$, substitute $x_1=x$, $x_3=z$ and set $x_2=y$ in such a way that $\phi_1(x,y)=0$ (follows from Assumption~\ref{assm: perm_inv_pe_gn}). Thus we obtain

\begin{equation}
    \psi_{2}(x,y) = a( \phi_{2}(x,y)). 
    \label{eqn1_proof_trans_nl}
\end{equation}

Similarly, we can  obtain the following. For all $k\leq T_{\mathsf{max}}$
\begin{equation}
    \psi_{k}(x,y) = a( \phi_{k}(x,y)). 
    \label{eqn1_proof_trans_nl1}
\end{equation}

The above expression implies that $\psi$ bijectively identifies $\phi$. Let us consider the setting when $i=3$ (this is possible since $T\geq 3$). We substitute $x_3=x$, $x_2=y$, $x_1=z$ to give

\begin{equation}
    \frac{1}{2}\big(a (\phi_1(x, y)) +a (\phi_2(x, z))\big)  = a \big( \frac{1}{2}(\phi_1(x,y) + \phi_2(x,z)) \big).   
        \label{eqn2_proof_trans_nl1}
\end{equation}

We now use the assumption $[\phi_1(x,y), \phi_2(x,z)]$ spans $\mathbb{R}^{2k}$ and substitute $\phi_1(x,y) = \alpha$ and $\phi_2(x,z) = \beta$

\begin{equation}
  \frac{1}{2} (a (\alpha) +a (\beta))  = a  \big( \frac{1}{2}(\alpha  + \beta) \big) . 
\end{equation}

Rest of the proof follows the same strategy as proof of Theorem~\ref{thm2_transformers}.
\end{proof}

\subsubsection{Extending Theorem~\ref{thm2n}, Theorem~\ref{thm2_transformers} to incorporate multiple attention heads} Our choice of the archictecture did not use multiple attention heads. Let us consider multiple attention heads. First observe that in Theorem~\ref{thm2n} since the output layer is a single layer perceptron, adding multiple attention heads $\psi_1,\psi_2\cdots$ can equivalently be seen as increasing the output dimension of $\psi$, thus the extension to multiple heads follows trivially.  For extending Theorem~\ref{thm2_transformers}, let us consider the model class with two attention heads $\psi_1, \psi_2$ can be stated as follows  $\omega \Big(\sum_{j <  i} A [\psi_1(x_i, x_j), \psi_2(x_i, x_j)]^{\top}  \Big)$, where $A$ combines the outputs of the attention heads linearly. Following the same steps of proof of Theorem~\ref{thm2_transformers}, we obtain the following.

\begin{equation}
    \begin{split}
       &  \omega \Big(\sum_{j <  i} A [\psi_1(x_i, x_j), \psi_2(x_i, x_j)]^{\top}  \Big) =  \rho\Big(\sum_{j < i} B [\phi_1(x_i, x_j), \phi_2(x_i, x_j)]^{\top}  \Big), \\  
       &  \omega \Big(\sum_{j <  i} \tilde{\psi}(x_i,x_j) \Big) =  \rho\Big(\sum_{j < i} \tilde{\phi}(x_i,x_j)  \Big), \\   
       & \sum_{j <  i} \tilde{\psi}(x_i,x_j) = a \Big(\sum_{j < i} \tilde{\phi}(x_i,x_j)  \Big),  
    \end{split}
\end{equation}

where $a=\omega^{-1}\circ \rho$. In the above simplification, the RHS shows the labeling function and the LHS is the function that is learned. We can follow the same strategy as the proof of Theorem~\ref{thm2_transformers} for the rest of the proof. We set $i=2$ and obtain a condition similar to \eqref{eqn2_transformers_nl} and for $i=3$ we obtain a condition similar to \eqref{eqn3_transformers_nl}. Following a similar proof technique, we obtain $a$ is linear and  the proof extends to multiple attention heads.

\subsection{State space models}
\label{sec: ssm_proofs}

\subsection{Vanilla RNNs}
\label{sec: vrnn_proofs}
We provide proofs to Lemma~\ref{lemma2} and \ref{lemma3} that are used to prove Theorem~\ref{thm:vrnn}. Next, in Theorem~\ref{thm4:disc}, we present the discrete token counterpart to Theorem~\ref{thm:vrnn}.

\begin{lemma}
\label{lemma2}
   The $k^{th}$ derivative of sigmoid function denoted $\frac{\partial^k \sigma(s)}{\partial s^k}$ is not zero identically.
\end{lemma}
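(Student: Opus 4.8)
The plan is to argue by contradiction using the analyticity and boundedness of the sigmoid, rather than computing its derivatives explicitly. Recall $\sigma(s) = \tfrac{1}{1+e^{-s}}$, which is real-analytic on all of $\mathbb{R}$ since it is a ratio of entire functions with a nowhere-vanishing denominator. The one elementary fact about $\sigma$ that I will use repeatedly is the identity $\sigma'(s) = \sigma(s)\bigl(1-\sigma(s)\bigr)$, which shows $\sigma'(s) > 0$ for all $s$ because $\sigma(s) \in (0,1)$; in particular $\sigma$ is strictly increasing and hence non-constant.

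Now suppose, toward a contradiction, that for some $k \geq 1$ we had $\frac{\partial^k \sigma(s)}{\partial s^k} = 0$ for every $s \in \mathbb{R}$. Integrating this identity $k$ times (equivalently, peeling off one antiderivative at a time: $\sigma^{(k)} \equiv 0$ forces $\sigma^{(k-1)}$ constant, then $\sigma^{(k-2)}$ affine, and so on) shows that $\sigma$ coincides with a polynomial of degree at most $k-1$ on $\mathbb{R}$.

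The contradiction comes from boundedness. Since $\sigma$ maps $\mathbb{R}$ into the open interval $(0,1)$, it is a bounded function; but a polynomial that is bounded on all of $\mathbb{R}$ must be constant, since a non-constant polynomial has unbounded magnitude as $|s| \to \infty$. Hence $\sigma$ would be constant, contradicting the strict monotonicity established above from $\sigma'(s) > 0$. Therefore no such $k$ exists, i.e.\ $\frac{\partial^k \sigma(s)}{\partial s^k}$ is not identically zero for any $k \geq 1$; the case $k = 0$ is immediate since $\sigma(s) > 0$ everywhere.

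There is essentially no hard step here: the only points requiring care are the two standard facts invoked---that a function whose $k$-th derivative vanishes identically is a polynomial of degree less than $k$, and that a bounded polynomial on $\mathbb{R}$ is constant---both of which are routine. If one preferred a route avoiding the ``bounded polynomial'' fact, an alternative would be to exhibit a concrete nonvanishing value: writing $p = \sigma(s)$, one shows by induction that $\sigma^{(k)}(s) = P_k(p)$ for a nonzero polynomial $P_k$ satisfying the recurrence $P_{k+1}(p) = p(1-p)\,P_k'(p)$, and checks $P_k \not\equiv 0$. However, the analyticity/boundedness argument above is shorter and cleaner, so that is the one I would present.
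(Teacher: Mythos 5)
Your proof is correct, and it takes a genuinely different route from the paper's. The paper argues by induction on the recurrence $\sigma' = \sigma(1-\sigma)$ that the $k$-th derivative has the form $P_{k+1}(\sigma(s))$ for a polynomial $P_{k+1}$ of degree $k+1$ whose leading coefficient is tracked explicitly (it picks up a factor $-(k+1)$ at each step and so stays nonzero); since a nonzero polynomial cannot vanish on the infinite range $(0,1)$ of $\sigma$, the derivative is not identically zero. You instead argue by contradiction from two soft facts: a function whose $k$-th derivative vanishes identically on $\mathbb{R}$ is a polynomial of degree at most $k-1$, and a polynomial bounded on all of $\mathbb{R}$ is constant --- which clashes with $\sigma'(s) = \sigma(s)(1-\sigma(s)) > 0$. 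Your argument is shorter and avoids the bookkeeping of leading coefficients entirely (and, as a side note, the analyticity you invoke at the outset is never actually used --- only integration, boundedness, and strict monotonicity do any work). What the paper's computation buys in exchange is structural information: the explicit representation of every derivative as a fixed polynomial in $\sigma(s)$, which is the kind of fact that can be reused elsewhere, whereas your argument certifies only non-vanishing. For the lemma as stated, either proof suffices, and yours is arguably the cleaner one. (You even sketch the paper's route as your ``alternative'' at the end, with the correct recurrence $P_{k+1}(p) = p(1-p)P_k'(p)$.)
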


\begin{proof}
 The first derivative of the sigmoid function $\frac{\partial \sigma(s)}{\partial s} = \sigma(s)(1-\sigma(s))$. 
 We argue that the $\frac{\partial^k \sigma(s)}{\partial s^k}$ is a polynomial in $\sigma(s)$ with degree $k+1$. Consider the base case of $k=1$.  This condition is true as $\frac{\partial \sigma(s)}{\partial s} = \sigma(s)(1-\sigma(s))$. 
Now let us assume that $\frac{\partial^k \sigma(s)}{\partial s^k}$ is a polynomial of degree at most $k+1$ denoted as $P_{k+1}(\sigma(s))$. We simplify 
$$\frac{\partial^k \sigma(s)}{\partial s^k} = P_{k+1}(\sigma(s)) = \sum_{j=1}^{k+1} a_{j} (\sigma(s))^{j}$$

We take another derivative of the term above as follows. 
$$\frac{\partial^{k+1} \sigma(s)}{\partial s^{k+1}} = \frac{\partial P_{k+1}(\sigma(s))}{\partial s} = \sum_{j=1}^{k+1} a_{j} \frac{\partial (\sigma(s))^{j}}{\partial s} = \sum_{j=1}^{k+1} a_{j} j \sigma(s)^{j-1} (\sigma(s)(1-\sigma(s)))$$

Observe that the $\frac{\partial^{k+1} \sigma(s)}{\partial s^{k+1}}$ is also a polynomial in $\sigma(s)$. Observe that the degree $k+2$ term has one term with coefficient $-a_{k+1}\cdot (k+1)$. Since $a_{k+1}\not=0$, the coefficient of degree $k+2$, $-a_{k+1}\cdot (k+1)$, is also non-zero. Since $\frac{\partial^k \sigma(s)}{\partial s^k}$ is a polynomial in $\sigma(s)$ with degree $k+1$ and hence, it cannot be zero identically.

\end{proof}

\begin{lemma}
\label{lemma3}
Let $x\in \mathbb{R}^n$ and $A\in \mathbb{R}^{n\times n}$. Suppose $Ax=0, \forall x \in \mathcal{X}$, where $\mathcal{X}$ has a non-empty interior. Under these conditions $A=0$. 
\end{lemma}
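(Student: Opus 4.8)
The plan is to exploit the elementary fact that a set with non-empty interior contains an open ball, and that a linear map annihilating a full-dimensional (hence spanning) family of points must be identically zero. First I would invoke the hypothesis that $\mathcal{X}$ has non-empty interior to fix a point $x_0 \in \mathcal{X}$ and a radius $r > 0$ such that the open ball $B(x_0, r) \subseteq \mathcal{X}$. By assumption $Ax = 0$ for every $x \in \mathcal{X}$, and in particular $A x_0 = 0$.

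Next I would perturb $x_0$ along each coordinate axis. For any standard basis vector $e_i \in \mathbb{R}^n$ and any scalar $t$ with $|t| < r$, the point $x_0 + t e_i$ still lies in $B(x_0, r) \subseteq \mathcal{X}$, so $A(x_0 + t e_i) = 0$. Subtracting $A x_0 = 0$ and using linearity of $A$ gives $t\, A e_i = 0$; choosing any $t \neq 0$ forces $A e_i = 0$. Since $A e_i$ is precisely the $i$-th column of $A$, ranging $i$ over $\{1,\dots,n\}$ shows that every column of $A$ vanishes, and therefore $A = 0$.

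There is essentially no serious obstacle here: the only step requiring a moment's care is the passage from the abstract statement ``$\mathcal{X}$ has non-empty interior'' to the concrete existence of a point together with all its small coordinate-axis perturbations inside $\mathcal{X}$, which follows immediately from the definition of an interior point. The argument relies only on linearity and the openness of the interior, so it is agnostic to whether $\mathcal{X}$ arises as the (regular closed) support of a continuous token distribution or any other set with a non-degenerate interior; this is exactly why it slots cleanly into the proof of Theorem~\ref{thm:vrnn}, where Assumption~\ref{assm: reg_closed} guarantees the requisite non-empty interior of $\mathsf{supp}(X_1)$.
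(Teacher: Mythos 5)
Your proof is correct and is essentially the same argument as the paper's: both fix a point in the interior, perturb it along each coordinate axis within a ball contained in $\mathcal{X}$, and use linearity plus differencing to conclude that each column $Ae_i$ vanishes. The only cosmetic difference is that the paper phrases the column-wise step as a proof by contradiction (assuming some column $a_j \neq 0$) and uses an $\ell_\infty$ ball, while you argue directly with a Euclidean ball.
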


\begin{proof}
Since $\mathcal{X}$ has a non-empty interior, we can construct a $\ell_{\infty}$ ball centered on $\theta$, defined as follows -- $\tilde{\mathcal{X}} = \{\theta + \sum_{j=1}^{n}\alpha_je_j\; | \|\alpha\|_{\infty} \leq \alpha_{\mathsf{max}} \; \}$, where $e_j$ is a vector that is zero in all components and one on the $j^{th}$ component. 
Suppose $A$ was non-zero. One of the columns say $a_j$ is non-zero. Consider two points in the ball $\tilde{\mathcal{X}}$ such that  
 $j^{th}$ coefficients are non-zero but rest of the coefficients are zero. We denote the $j^{th}$ components for the two components as  $\alpha_j$ and $\tilde{\alpha}_j$, where $\alpha_j\not=\tilde{\alpha}_j$.  
 We now plug these two points into the condition that $Ax=0$
 \begin{equation}
 \begin{split}
     &     A(\theta + \alpha_j e_j) = 0 \implies A\theta = \alpha_ja_j, \\ 
     &     A(\theta + \tilde{\alpha}_j e_j) = 0 \implies A\theta = \tilde{\alpha}_ja_j, \\ 
 \end{split}
 \end{equation} 
We take a difference of the two steps above and obtain 
$$(\alpha_j-\tilde{\alpha_j})a_j =0 \implies a_j =0$$
This is a contradiction.  Hence, $A=0$. 
\end{proof}
% \vrnn*

\subsubsection{Extending Theorem~\ref{thm:vrnn} to discrete tokens} 
In our discussion, we have focused on settings where the support of each token has a non-empty interior (Assumption~\ref{assm: reg_closed}). In practice of language modeling, we use discrete tokens and hence Assumption~\ref{assm: reg_closed} does not hold anymore. In this section, we discuss the adaptation of results for vanilla RNNs to setting when the the support of tokens is a finite set.

Define $\mathcal{S} = \{y=Bx \; | \; x \in \mathcal{X}\}$, where $\mathcal{X}$ is the marginal support of each token.

\begin{assumption}
		\label{assm: disc_vrnn}
	 a) For each component $i$ of $y$, $\mathcal{S}$ contains two pairs where the first coordinate differs by the same amount. Mathematically stated, the two pairs are $\Big((y_i, y_{-i}), (y_i+\delta, y_{-i}))\Big)$ and $\Big((y_i^{'}, y_{-i}^{'}), (y_i^{'}+\delta, y_{-i}^{'}))\Big)$. 
  
		b)For every pair of components $i,j$ of $y$, $\mathcal{S}$ contains a point $y$ that satisfies the following. There exists three points in $\mathcal{S}$ such that they only differ in $y_i, y_j$, and form a rectangle, $(y_i, y_j),$  $(y_i^{'}, y_j),$ $(y_i, y_j^{'}),$ $(y_i^{'}, y_j^{'})$. Similarly, there exists another set of points where $y_i^{'}<y_i$ and $y_j^{'}<y_j$. 
 	
\end{assumption}

\begin{theorem}
\label{thm4:disc}
  If  $\mathcal{H}$ follows  Assumption~\ref{assm: vrnn}, and the realizability condition holds, i.e., $f\in \mathcal{H}$ and regular closedness condition in Assumption~\ref{assm: reg_closed} holds, then the model trained to minimize the risk in \eqref{eqn: risk_min} with $\ell_2$ loss (with $T\geq 2$) achieves length  and compositional generalization.
\end{theorem}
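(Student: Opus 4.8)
The plan is to reproduce the proof of Theorem~\ref{thm:vrnn} line by line, replacing every argument that invokes differentiation or analyticity (Lemma~\ref{lemma2}, Lemma~\ref{lemma3}, and the Jacobian-matching half of Lemma~\ref{lemma1}) by a \emph{finite-difference} argument that is powered by the combinatorial diversity of Assumption~\ref{assm: disc_vrnn} (the operative hypothesis here; in the discrete regime Assumption~\ref{assm: reg_closed} is vacuous and is not what does the work). Write the true parameters as $(A,\Lambda,B)$ and those of an arbitrary risk minimizer $h$ as $(\tilde A,\tilde\Lambda,\tilde B)$. Since the tokens are discrete and $X_{\leq t}$ is absolutely continuous w.r.t.\ the counting measure, $\Expectation[\|h-f\|^2]=0$ already forces $h=f$ at \emph{every} point of $\mathsf{supp}(X_{\leq t})$; this is the discrete substitute for the closure/interior extension of Lemma~\ref{lemma1}, so no topological argument is needed.

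First I would handle the length-$1$ labels. Equating $\sigma(\tilde A\sigma(\tilde Bx_1))=\sigma(A\sigma(Bx_1))$ and cancelling the bijective outer $\sigma$ gives $\sigma(Vy)=U\sigma(y)$ for all $y=Bx_1\in\mathcal{S}$, with $V=\tilde BB^{-1}$ and $U=\tilde A^{-1}A$; row-wise, $\sigma(v^\top y)=u^\top\sigma(y)$. For a fixed coordinate $i$, subtract the identity at $y$ from the identity at $y+\delta e_i$ (such a pair lies in $\mathcal{S}$ by Assumption~\ref{assm: disc_vrnn}(a)). Because $\sigma$ is strictly monotone, $\sigma(y_i+\delta)\neq\sigma(y_i)$, and one reads off $v_i=0\Leftrightarrow u_i=0$, so $u$ and $v$ share their sparsity pattern. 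This is the discrete replacement for the two derivative computations used in Theorem~\ref{thm:vrnn} to match the supports of $u$ and $v$.

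The delicate step, and the one I expect to be the main obstacle, is ruling out rows with two nonzero entries. If $v_i,v_j\neq0$, apply the second-order mixed difference over a rectangle in coordinates $(i,j)$ supplied by Assumption~\ref{assm: disc_vrnn}(b). The right-hand side $u^\top\sigma(y)=\sum_k u_k\sigma(y_k)$ is additively separable, so its mixed second difference vanishes identically; hence $D(s,a,b):=\sigma(s+a+b)-\sigma(s+a)-\sigma(s+b)+\sigma(s)$ must be $0$ with $a=v_i\Delta_i\neq0$ and $b=v_j\Delta_j\neq0$. Using $\sigma(t)=1-\sigma(-t)$ one gets the antisymmetry $D(s,a,b)=-D(-s-a-b,a,b)$, so (for fixed nonzero $a,b$) $D$ vanishes \emph{only} when the rectangle is centered at the inflection point $s=0$ of the sigmoid. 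The two oppositely-oriented rectangles of Assumption~\ref{assm: disc_vrnn}(b) are precisely what is needed to escape this single degenerate configuration and force $D\neq0$, a contradiction. This finite-difference replacement of the higher-order-derivative/analyticity argument (Lemma~\ref{lemma2}) is where the real work lies, and carefully certifying non-degeneracy is the crux. It yields that each row of $V$ (hence of $U$) has exactly one nonzero entry, so invertibility makes $V,U$ monomial matrices.

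To pin the surviving entries, restrict to the single active coordinate so that each scalar relation becomes $\sigma(vt)=u\sigma(t)$; first-differencing across a common increment $\delta$ gives $e^{-(1-v)t}=uC_2(v)/C_1$, a quantity independent of $t$. Evaluating this at the two distinct base points provided by the two pairs in Assumption~\ref{assm: disc_vrnn}(a) forces $(1-v)(t-t')=0$, hence $v=1$ and then $u=1$; therefore $U=V=\Pi$ is a permutation matrix, giving $\tilde B=\Pi B$ and $\tilde A=A\Pi^\top$. I would then equate the length-$2$ predictions exactly as in Theorem~\ref{thm:vrnn}, using $\sigma(\Pi z)=\Pi\sigma(z)$, to arrive at $(\Pi^\top\tilde\Lambda\Pi-\Lambda)\sigma(Bx_1)=0$ for all $x_1$. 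Taking first differences of $\sigma(Bx_1)$ across the coordinate pairs of Assumption~\ref{assm: disc_vrnn}(a) produces vectors proportional to each $e_i$, so the matrix annihilates every basis vector and must vanish, i.e.\ $\tilde\Lambda=\Pi\Lambda\Pi^\top$; this is the finite-set substitute for Lemma~\ref{lemma3}. The remaining induction showing $\tilde h_k=\Pi h_k$ for all $k$, and hence $\tilde y_k=\sigma(A\Pi^\top\Pi h_k)=y_k$ at every length, is purely algebraic and transfers verbatim from Theorem~\ref{thm:vrnn}, establishing both length and compositional generalization.
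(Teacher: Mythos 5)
Your proposal is correct (at the same level of rigor as the paper's own proof) and follows the same overall skeleton: pointwise equality on the discrete support, reduction of the length-$1$ constraint to $\sigma(Vy)=U\sigma(y)$ with $U=\tilde A^{-1}A$, $V=\tilde B B^{-1}$, matching the sparsity patterns of $u$ and $v$ via the $\delta$-pairs of Assumption~\ref{assm: disc_vrnn}(a), eliminating rows with $p\ge 2$ nonzero entries via the rectangles of part (b), pinning the surviving scalar relation to $u=v=1$ using the two base points, then the length-$2$ step and the permutation-conjugation induction; you are also right that Assumption~\ref{assm: disc_vrnn} is the operative hypothesis (the statement's reference to Assumption~\ref{assm: reg_closed} is a carryover from Theorem~\ref{thm:vrnn}). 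Two of your local arguments differ from the paper's. For the $p\ge 2$ case the paper applies the mean value theorem to the rectangle identity, obtaining $\sigma'(\tilde s)=\sigma'(s^\dagger)$ at two distinct points on the same side of the origin and contradicting strict monotonicity of $\sigma'$ there; your exact description of the zero set of the mixed difference $D(s,a,b)$ via the symmetry $\sigma(t)=1-\sigma(-t)$ proves the same thing and is sharper, since it isolates the unique degenerate configuration (the rectangle centered where $v^{\top}y$ crosses zero) that the data must avoid. More importantly, at the $\Lambda$ step the paper simply repeats the continuous-case argument: it asserts that $\sigma(Bx_1)$ spans a set with non-empty interior and invokes Lemma~\ref{lemma3}, a claim that is vacuous when the token support is finite. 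Your finite-difference replacement --- differencing $(\Pi^{\top}\tilde\Lambda\Pi-\Lambda)\sigma(Bx_1)=0$ across the coordinate pairs of part (a) so that the matrix annihilates every basis vector $e_i$ --- is the argument the paper should have given, and it is a genuine improvement.

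One caveat, which you partly anticipated and which applies equally to the paper: the rectangle step is airtight only when the two nonzero entries $v_i,v_j$ of the row have the same sign. The paper's orderings $s_1>s_2>s_4$ and $s_1>s_3>s_4$ silently assume $v_i,v_j>0$, and in your version, if $v_iv_j<0$ one can have $v_i\bigl(\Delta_i^{(1)}-\Delta_i^{(2)}\bigr)=-v_j\bigl(\Delta_j^{(1)}-\Delta_j^{(2)}\bigr)$, in which case the upper and the lower rectangle are simultaneously centered at the degenerate point and no contradiction follows. So your remark that certifying non-degeneracy is the crux is accurate: closing the mixed-sign case requires slightly more diversity than Assumption~\ref{assm: disc_vrnn}(b) literally provides, in both your proof and the paper's.
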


\begin{proof}
	We start with the same steps as earlier proofs and equate the prediction of $h$ and $f$ everywhere in the training support. 
	We start with equating label at length 1, i.e., $y_1$. For all $x_1 \in \mathsf{supp}(X_1)$
	\begin{equation}
		\begin{split}
			&  \sigma(A\sigma(B x_1)) = \sigma(\tilde{A} \sigma(\tilde{B}x_1)) \implies       A\sigma(B x_1) = \tilde{A} \sigma(\tilde{B}x_1) \implies \\ 
			&         \tilde{A}^{-1} A\sigma (Bx_1) = \sigma (\tilde{B}  B^{-1} B x_1) 
		\end{split}
	\end{equation}
	Say $y=Bx_1$,  $\tilde{A}^{-1}A=U$, $\tilde{B} B^{-1}=V$.  We substitute these expressions in the simplificaction below.  We pick a $y$ in the interior of $\tilde{B} \cdot \mathsf{supp}(X_1)$.   
	% Take a ball and map it under an invertible transform, it leads to some form of ellipsoid. Consider $\tilde{B}^{-1}$, take inverse image of an open ball from a ball in x space and that should be an open set in $\mathbb{R}^d$
	
	\begin{equation}
		\sigma(Vy) = U \sigma(y)
	\end{equation}
	Take the first row of $V$ and $U$ as $v^{\top}$ and $u^{\top}$ to obtain 
	\begin{equation}
		\sigma(v^{\top}y) = u^{\top} \sigma(y)
		\label{eqn: sigma_vrnn_disc}
	\end{equation}
	 
	Say $v_i\not=0$ and $u_i=0$.  We consider  a $(y_i, y_{-i})$ and $(y_i^{'}, y_{-i})$ satisfying Assumption \ref{assm: disc_vrnn} a.  We substitute these points in \eqref{eqn: sigma_vrnn_disc} and take the difference of the LHS and RHS in \eqref{eqn: sigma_vrnn_disc} to obtain. 
	
	\begin{equation}
		\sigma(v_iy_i^{'} + v_{-i}y_{-i}) - 	\sigma(v_iy_i + v_{-i}y_{-i})  =0 
	\end{equation}
	
$\sigma$ is strictly monotonic and thus the above cannot be true. Similarly, we can rule out the case when $u_i\not=0$ and $v_i=0$. Thus we can deduce that both $u$ and $v$ have same non-zero components.

	Let us start with the case where $p\geq 2$  components of $u,v$ are non-zero. Without loss of generality say the first two components are among coordinates that are non-zero. Pick a $y \in \mathcal{S}$ that satisfies Assumption~\ref{assm: disc_vrnn} b. Suppose $v^{\top}y\geq 0$. We select the neighbors of $y$ that form the rectangle such that each coordinate is greater than $y$.  We substitute these points in \eqref{eqn: sigma_vrnn_disc} and the simplification procedure works as follows.  
	Let 
	
	\begin{equation}
		\begin{split}
		&	s_1 = v_1 y_1^{'} + v_2 y_2^{'} + \cdots v_n y_n,  \;\; s_3 = v_1 y_1^{'} + v_2 y_2 + \cdots v_n y_n \\ 
		&	s_2 = v_1 y_1 + v_2 y_2^{'} + \cdots v_n y_n,  \;\; s_4 = v_1 y_1 + v_2 y_2 + \cdots v_n y_n
		\end{split}
	\end{equation}

	Observe that $s_1>s_2>s_4$ and $s_1>s_3>s_4$. It is possible that $s_2\geq s_3$ or $s_3>s_2$. Suppose $s_2\geq s_3$. 
	
	We can write 
	
	\begin{equation}
		\begin{split}
			 & \sigma(s_1) = u_1 \sigma(y_1^{'}) + u_2\sigma(y_2^{'}) + \cdots + u_n \sigma(y_n), 	\sigma(s_2) = u_1 \sigma(y_1) + u_2\sigma(y_2^{'}) + \cdots + u_n \sigma(y_n)  \\ 
			 &  \sigma(s_3) = u_1 \sigma(y_1^{'}) + u_2\sigma(y_2) + \cdots + u_n \sigma(y_n), 	\sigma(s_4) = u_1 \sigma(y_1) + u_2\sigma(y_2) + \cdots + u_n \sigma(y_n) 
		\end{split}
	\end{equation}
	
	We take a difference of the first two and the latter two, and subtract these differences to get 
	
	\begin{equation}
		\Big( \sigma(s_1) - \sigma(s_2)\Big) - 	\Big( \sigma(s_3) - \sigma(s_4)\Big) = 0
	\end{equation}

	From mean value theorem, we get that $\sigma^{'}(\tilde{s}) = \sigma^{'}(s^{\dagger})$, where $\sigma^{'}$ is the derivative of $\sigma$, $\tilde{s}$ is a value between $s_1$ and $s_2$, and $s^{\dagger}$ is a value between $s_3$ and $s_4$. Since $s_1>s_2>s_3>s_4>0$, $\tilde{s}>s^{\dagger}>0$. Since $\sigma^{'}$ strictly decreases on positive values, the above equality $\sigma^{'}(\tilde{s}) = \sigma^{'}(s^{\dagger})$ is not possible.   Similarly, we can tackle the case $v^{\top} y <0$.

	We are left with the case where $u$ and $v$ have one non-zero component each. From Assumption~\ref{assm: disc_vrnn}a, we select two pairs that differe exactly in the non-zero component. We can resort to dealing with scalars as follows.  We start with first pair $(y,y+\delta)$. 
	
	\begin{equation}
		\begin{split}
		& \sigma(v y) = u \sigma(y)  \implies \frac{1}{1+e^{-vy}} = \frac{u}{1+e^{-y}}  \implies 1-u = ue^{-vy}-e^{-y}\\ 
		& 	\sigma(v (y+\delta)) = u \sigma(y+\delta) \implies  1-u = ue^{-v(y+\delta)}-e^{-(y+\delta)}
	    \end{split}
	\end{equation}
	
	By equating the RHS in the above, we obtain 
	
	\begin{equation}
		\frac{1-e^{-\delta}}{1-e^{-v\delta}} = ue^{-(v-1)y}
		\label{eqn: simp_rnn_1}
	\end{equation}
	
	For the second pair $(y^{'}, y^{'}+\delta)$, we obtain 
	
	\begin{equation}
		\frac{1-e^{-\delta}}{1-e^{-v\delta}} = ue^{-(v-1)y^{'}}
			\label{eqn: simp_rnn_2}
	\end{equation}

If we compare the RHS of \eqref{eqn: simp_rnn_1} and \eqref{eqn: simp_rnn_2}, we obtain $ ue^{-(v-1)y} =  ue^{-(v-1)y^{'}}$. Since $u$ is non-zero, we obtain that $v=1$.  Substituting this into $\sigma(vy) = u \sigma(y)$, we also obtain $u=1$.

	Note that no other row of $U$ or $V$ can have same non-zero element because that would make matrix non invertible. From this we deduce that $U$ and $V$ are permutation matrices. From $ \sigma(Vy) = U \sigma(y)$ it follows that $U=V=\Pi$. Thus $B =\Pi \tilde{B}$ and $\tilde{A} = A\Pi$.

	Next, we equate predictions for $y_2$ to the ground truth (label $y_2$ exists as $T\geq 2$). For all $x_1 \in \mathsf{supp}(X_1)$
	\begin{equation}
		\begin{split} 
			& \sigma(A\sigma(\Lambda \sigma(B x_1) + Bx_2)) = \sigma(\tilde{A} \sigma( \tilde{\Lambda} \sigma(\tilde{B}x_1) + \tilde{B}x_2)) \implies       A\sigma(\Lambda \sigma(B x_1) + Bx_2) = \tilde{A} \sigma( \tilde{\Lambda} \sigma(\tilde{B}x_1) + \tilde{B}x_2) \implies \\ 
			& \tilde{A} \sigma( \tilde{\Lambda} \sigma(\tilde{B}x_1) + \tilde{B}x_2) = A \Pi  \sigma(\tilde{\Lambda} \Pi^{\top}\sigma(Bx_1) + \Pi^{\top}Bx_2)= A \sigma(\Pi \tilde{\Lambda} \Pi^{\top}\sigma(Bx_1) + Bx_2). \\ 
		\end{split}
	\end{equation}
	We use the simplification in the second step to equate to LHS in the first step as follows. 
	
	\begin{equation}
		\begin{split}
			& A \sigma(\Pi \tilde{\Lambda} \Pi^{\top}\sigma(Bx_1) + Bx_2) = A\sigma(\Lambda\sigma(B x_1) + Bx_2) \\     
			&  \implies (\Pi \tilde{\Lambda} \Pi^{\top} - \Lambda)\sigma(Bx_1)  = 0.  \\ 
		\end{split}
	\end{equation}
	
	Since $\sigma(Bx_1)$ spans a set that has a non-empty interior, we get that $\tilde{\Lambda} = \Pi^{\top} \Lambda \Pi $ (from Lemma~\ref{lemma3}).

	From the above conditions, we have arrived at $\tilde{\Lambda} =\Pi^{\top} \Lambda \Pi, \tilde{B}= \Pi^{\top}B, \tilde{A} = A\Pi$. 
	
	We want to show that for all $k\geq 1$
	
	\begin{equation}
		h_{k} = \Pi\tilde{h}_{k}, 
		\label{eqn: perm_id_rnn}
	\end{equation}
	where $h_k = \sigma(\Lambda h_{k-1} + Bx_k)$ and $\tilde{h}_k = \sigma(\tilde{\Lambda}\tilde{h}_{k-1} + \tilde{B}x_k)$ and $h_{0}=\tilde{h}_0=0$. In other words, we define $T_k$ as a mapping that takes $x_{\leq k}$ as input and outputs $h_k$, i.e., $T_k(x_{\leq k})= h_k$. Similarly, we write $\tilde{T}_k(x_{\leq k}) = \tilde{h}_k$. We want to show 
	\begin{equation}
		T_k = \Pi \tilde{T}_k,  \forall k
		\label{eqn: tk}
	\end{equation}

	We show the above by principle of induction. Let us consider the base case below. For all $x_1 \in \mathbb{R}^{n}$
	
	\begin{equation}
		\tilde{A} \sigma(\tilde{B}x_1) =   A \Pi \sigma(\Pi^{\top}B x_1) = A\sigma(Bx_1)  = Ah_1 \implies h_1 = \Pi \tilde{h}_1 \implies T_1(x_1) = \Pi \tilde{T}_1(x_{1}) 
	\end{equation}

	Suppose $\forall j \leq k, T_j = \Pi \tilde{T}_{j}$.

	Having shown the base case and assumed the condition for $j\leq k$, we now consider the mapping $\tilde{T}_{k+1}$
	\begin{equation}
		\Pi\tilde{T}_{k+1}(x_{\leq k+1 }) = \Pi\sigma(\tilde{\Lambda} \tilde{h}_k + \tilde{B}x_{k+1}) = \Pi \sigma(\Pi^{\top}\Lambda \Pi \tilde{h}_k + \Pi^{\top} Bx_k) =  \sigma(\Lambda h_k +  Bx_k) =  T_{k+1}(x_{\leq k+1 }).
		\label{eqn: induction_k_vrnn_disc}
	\end{equation}

	The prediction from the model $(\tilde{A}, \tilde{\Lambda}, \tilde{B})$ at a time step $k$ is denoted as $\tilde{y}_k$ and it relates to $\tilde{h}_k$ as follows $\tilde{y}_k=  \sigma(\tilde{A}\tilde{h}_k)$. We use the above condition in equation~\eqref{eqn: tk} to arrive at the following result. For all $x_{\leq k} \in \mathcal{X}^{k}$
	
	$\tilde{y}_k = \sigma(\tilde{A}\tilde{h}_k) = \sigma(\tilde{A} \tilde{T}(x_{\leq k})) = \sigma(A \Pi \tilde{T}(x_{\leq k})) = \sigma(A T (x_{\leq k})) = y_k $

	This completes the proof. 
\end{proof}

\subsection{Role of chain-of-thought in length generalization}
\label{sec:cot_appendix}

\begin{assumption}
\label{assm: cot_deepset}
Each function $h\in \mathcal{H}$ is of the form $h(x_1,\cdots, x_i) =  \omega \Big(\sum_{j\leq i} \psi(x_j) \Big)$,  $\psi:\mathbb{R}^{n} \rightarrow \mathbb{R}^m$ is a map that is differentiable everywhere.  
Also, there exists $t \leq T$, such that $\mathsf{supp}(x_j)=\mathbb{R}^n$ and $\mathsf{supp}(\sum_{j\leq i} \phi(x_j))=\mathbb{R}^k$. 
\end{assumption}

\begin{theorem}

If $\mathcal{H}$ follows the first part of Assumption~\ref{assm: cot_deepset} and the support of the embeddings follows the second part of Assumption~\ref{assm: cot_deepset}, then a learner trained to minimize prediction error w.r.t both true labels and CoT achieves length generalization.
\end{theorem}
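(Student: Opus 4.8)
The plan is to mirror the proof of Theorem~\ref{thm1}, but to exploit that the chain-of-thought (CoT) signal pins down the aggregated embedding \emph{directly} rather than only through the output nonlinearity. Write the learned model as $h(x_{\leq i}) = \omega\big(\sum_{j\leq i}\psi(x_j)\big)$ and the labeling function as $f(x_{\leq i}) = \rho\big(\sum_{j\leq i}\phi(x_j)\big)$. The learner now minimizes a sum of two $\ell_2$ terms accumulated over all lengths $i\leq T$: the usual label term $\ell\big(\omega(\sum_{j\leq i}\psi(x_j)), y_i\big)$ together with the CoT term $\ell\big(\sum_{j\leq i}\psi(x_j), \sum_{j\leq i}\phi(x_j)\big)$. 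By realizability the pair $(\phi,\rho)$ drives both nonnegative terms to zero, so every minimizer $h$ attains zero on both terms simultaneously; in particular, on the training support and for each $i\le T$ we obtain, almost everywhere,
\begin{equation}
\sum_{j\leq i}\psi(x_j) = \sum_{j\leq i}\phi(x_j), \qquad \omega\Big(\sum_{j\leq i}\psi(x_j)\Big) = \rho\Big(\sum_{j\leq i}\phi(x_j)\Big).
\end{equation}

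Second, I would upgrade the CoT equality to a pointwise identification of the embeddings. Since $\psi$ and $\phi$ are continuous and $\mathsf{supp}(X_j)=\mathbb{R}^n$ is regular closed, the continuity-based first half of Lemma~\ref{lemma1} promotes the almost-everywhere CoT equality to an equality everywhere on the support. Evaluating at length $i=1$ (or, more generally, subtracting the identities at consecutive lengths $i$ and $i-1$, which isolates a single free token whose marginal support is $\mathbb{R}^n$) then yields $\psi(x)=\phi(x)$ for every $x\in\mathbb{R}^n$. This is the central payoff of CoT: without it we could only conclude $A\psi = B\phi$ up to an unknown linear map as in Theorem~\ref{thm1}, whereas the intermediate supervision forces \emph{exact} equality of the embeddings.

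Third, I would identify the readout. Substituting $\psi=\phi$ into the label identity gives $\omega(z)=\rho(z)$ for every $z$ in the support of $\sum_{j\leq i}\phi(x_j)$, and the second part of Assumption~\ref{assm: cot_deepset} guarantees this support is all of $\mathbb{R}^m$, so $\omega=\rho$ as functions on the entire embedding space. Combining $\psi=\phi$ on $\mathbb{R}^n$ with $\omega=\rho$ on $\mathbb{R}^m$, for any length $\tilde T$ and any sequence $x_{\leq \tilde T}$ we get
\begin{equation}
h(x_{\leq \tilde T}) = \omega\Big(\sum_{j\leq \tilde T}\psi(x_j)\Big) = \rho\Big(\sum_{j\leq \tilde T}\phi(x_j)\Big) = f(x_{\leq \tilde T}),
\end{equation}
which is exactly length (and compositional) generalization.

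I expect the main obstacle to be the two support-coverage steps rather than the algebra. Recovering $\psi=\phi$ \emph{pointwise} requires that varying a single token sweeps out all of $\mathbb{R}^n$, so that the per-token embeddings and not merely their sums are determined; and extending $\omega=\rho$ to the full domain $\mathbb{R}^m$ requires the aggregated embeddings seen during training to already fill $\mathbb{R}^m$. The latter is precisely what ensures that at test time a novel aggregate $\sum_{j\leq \tilde T}\phi(x_j)$---which could otherwise fall outside the range of aggregates observed during training---is still mapped correctly by $\omega$. Some care is also needed when invoking Lemma~\ref{lemma1}, since here $\psi$ is only assumed differentiable (hence continuous), which suffices for the continuity-based half of the lemma that we use but not for its gradient conclusion.
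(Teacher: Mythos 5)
Your proof is correct and shares the paper's overall skeleton: drive both loss terms to zero by realizability, upgrade the almost-everywhere identities to everywhere via the continuity half of Lemma~\ref{lemma1}, identify $\psi=\phi$, use the full-support condition on the aggregate $\sum_{j\le i}\phi(x_j)$ to force $\omega=\rho$ on the whole embedding space, and then conclude for arbitrary lengths. The one place you genuinely diverge is the identification step $\psi=\phi$: the paper replays the gradient machinery of Theorem~\ref{thm1} (take Jacobians of the summed CoT identity with respect to a single token, integrate via the fundamental theorem of calculus, and kill the integration constant), specialized to $A=B=I$, whereas you obtain $\psi=\phi$ directly by evaluating the CoT identity at length $i=1$, or equivalently by telescoping the identities at consecutive lengths $i$ and $i-1$ to isolate a single token. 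Your route is more elementary and, as you yourself note, needs only the continuity part of Lemma~\ref{lemma1}; the paper's route nominally invokes the gradient conclusion of that lemma, which requires continuous differentiability, while Assumption~\ref{assm: cot_deepset} only posits differentiability of $\psi$ --- so your variant operates under strictly weaker regularity and also sidesteps the constant-of-integration bookkeeping. Both arguments land on the same pair of identities, $\psi=\phi$ on $\mathbb{R}^n$ and $\omega=\rho$ on the full range of the aggregated embeddings, and your final display establishing generalization to any length $\tilde T$ is identical to the paper's.
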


\begin{proof}
The expected risk minimization objective under both forms of supervision is $$\sum_{i\leq T}\mathbb{E}\Big[\Big(\omega \Big(\sum_{j\leq i} \psi(x_j) \Big)-y_i\Big)^2\Big] +\mathbb{E}\Big[\Big(\sum_{j\leq i} \psi(x_j)-h_i\Big)^2\Big], $$ 

where $y_i$ is the true label and $h_i$ is the intermediate embedding $\sum_{j \leq i }\phi(x_j)$. 
Under perfect minimization of the above objective, we obtain 
$\sum_{j\leq i} \psi(x_j) = \sum_{j\leq i} \phi(x_j)$ and $\omega\Big(\sum_{j\leq t} \psi(x_j)\Big) = \rho\Big(\sum_{j\leq t} \phi(x_j)\Big)$. Select $t$ from Assumption~\ref{assm: cot_deepset}, take the gradient w.r.t some $x_r$ to conclude that $\psi(x_r) = \phi(x_r), \forall x_r \in \mathbb{R}^n$ following the same steps as equation \eqref{eqn: psi_phi_eq_deepset} with $A=B= I$, where $I$ is identity matrix.
From equating the labels part, we obtain $\omega\Big(\sum_{j\leq t} \psi(x_j)\Big) = \rho\Big(\sum_{j\leq t} \phi(x_j)\Big)$. Substitute $\sum_{j\leq t} \psi(x_j) = \sum_{j\leq t} \phi(x_j) =h_t$ to obtain $\omega(h_t) = \rho(h_t)$ for all $h_t \in \mathbb{R}^n$. 

Now let us consider any new length $\tilde{T}>T$, it follows that $\sum_{j\leq \tilde{T}} \psi(x_j) = \sum_{j\leq \tilde{T}} \phi(x_j)$. Also, since $\omega = \rho$, it follows that $\omega(\sum_{j\leq \tilde{T}} \psi(x_j)) = \rho(\sum_{j\leq \tilde{T}}\phi(x_j))$.

\end{proof}

The argument for transformers can be adapted exactly on the same lines as above, which is why we straight turn to SSMs.

\begin{assumption}
\begin{itemize}
	\item Each function in the hypothesis class $\mathcal{H}$ takes a sequence $\{x_1,\cdots, x_i\}$ as input and output $h(x_1, \cdots, x_{i}) = \omega \Big(\sum_{j=0}^{i-1}\Lambda^{j} B x_{i-j} \Big)$. 
   \item  There exists $t \leq T$, such that  $\mathsf{rank}([B, \Lambda B, \cdots, \Lambda^{t}B])=k$ and $\mathsf{supp}(X_i)=\mathbb{R}^n, \forall i \leq t$. Further, a subset $\mathcal{S}$ of training data satisfies $\mathsf{rank}([h_{t-1}^{(j)}\; ; x_t^{(j)}]_{j \in \mathcal{S}})= k+n$.
\end{itemize}
	\label{assm: ssm_cot}
\end{assumption}

\begin{theorem}
If $\mathcal{H}$ satisfies the first part of Assumption~\ref{assm: ssm_cot}, and the diversity assumption in second part of Assumption~\ref{assm: ssm_cot} is satisfied, then a learner trained to minimize prediction error w.r.t both true labels and CoT achieves length generalization.

\end{theorem}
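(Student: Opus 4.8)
The plan is to mirror the CoT argument given earlier for deep sets, adapting it to the recurrent structure of the SSM. First I would write down the combined risk that supervises both the label and the intermediate hidden state,
\begin{equation}
\sum_{i \leq T} \mathbb{E}\Big[\big\|\omega(\tilde{h}_i) - y_i\big\|^2\Big] + \mathbb{E}\Big[\big\|\tilde{h}_i - h_i\big\|^2\Big],
\end{equation}
where $h_i = \sum_{j=0}^{i-1}\Lambda^{j} B x_{i-j}$ is the true hidden state (which is exactly the CoT signal) and $\tilde{h}_i = \sum_{j=0}^{i-1}\tilde{\Lambda}^{j}\tilde{B} x_{i-j}$ is the learned one. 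Under realizability the true parameters attain zero risk, so any minimizer drives both terms to zero; combining the almost-everywhere equalities with continuity and regular closedness via Lemma~\ref{lemma1} yields, on the training support, both the CoT match $\tilde{h}_i = h_i$ and the label match $\omega(h_i) = \rho(h_i)$ at every step $i \leq T$.

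Next I would exploit the recurrence. Since CoT matching holds at every step, in particular $\tilde{h}_{t-1} = h_{t-1}$; substituting this into the one-step update $\tilde{\Lambda}\tilde{h}_{t-1} + \tilde{B} x_t = \Lambda h_{t-1} + B x_t$ gives the linear identity
\begin{equation}
(\tilde{\Lambda} - \Lambda)\, h_{t-1} + (\tilde{B} - B)\, x_t = 0
\end{equation}
for every $(h_{t-1}, x_t)$ realized in training. The diversity condition in Assumption~\ref{assm: ssm_cot} supplies a subset $\mathcal{S}$ with $\mathsf{rank}\big([h_{t-1}^{(j)}\,;\, x_t^{(j)}]_{j\in\mathcal{S}}\big) = k+n$, so the concatenated vectors span $\mathbb{R}^{k+n}$. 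Since the block row $[\,\tilde{\Lambda}-\Lambda,\; \tilde{B}-B\,]$ annihilates a spanning set, it must vanish, forcing $\tilde{\Lambda} = \Lambda$ and $\tilde{B} = B$. This identifies the recurrent dynamics \emph{exactly} rather than merely up to a linear transform, which is precisely the simplification CoT buys over Theorem~\ref{thm: ssm}.

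I would then promote the label match to a global statement about the readout. Writing $h_t = [\,B,\, \Lambda B,\, \dots,\, \Lambda^{t-1}B\,]\,[x_t\,;\,\dots\,;\,x_1]$ and invoking the controllability-type rank condition $\mathsf{rank}([B,\Lambda B,\dots,\Lambda^{t}B]) = k$ together with the full token support $\mathsf{supp}(X_i)=\mathbb{R}^n$, the attainable hidden states $h_t$ cover the entire column space, namely all of $\mathbb{R}^k$. Hence $\omega(h) = \rho(h)$ for every $h \in \mathbb{R}^k$, i.e.\ $\omega = \rho$ globally. Combining $\tilde{\Lambda}=\Lambda$, $\tilde{B}=B$, and $\omega=\rho$, the learned model coincides with $f$ on sequences of every length: for any $\tilde{T}$ and any $x_{\leq \tilde{T}}$ we get $\omega\big(\sum_{j}\tilde{\Lambda}^{j}\tilde{B} x_{\tilde{T}-j}\big) = \rho\big(\sum_{j}\Lambda^{j} B x_{\tilde{T}-j}\big) = f(x_{\leq \tilde{T}})$, which is length generalization.

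The main obstacle I anticipate is this last step: establishing that the hidden states truly range over all of $\mathbb{R}^k$ so that $\omega = \rho$ holds everywhere and not merely on a proper subset. The dynamics identification is essentially linear algebra once CoT pins down the states, but the extrapolative content of length generalization lives in the readout, and that hinges on the controllability rank condition interacting correctly with the token supports to fill out $\mathbb{R}^k$. A secondary subtlety is bookkeeping the index at which the two rank conditions hold: the statement only guarantees that some $t \leq T$ works, so I would apply both conditions at a single compatible $t$ and confirm the resulting global equalities are independent of that choice.
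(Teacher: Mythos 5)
Your proposal is correct and follows essentially the same route as the paper's proof: perfect minimization of the combined label-plus-CoT risk yields the linear identity $(\tilde{\Lambda}-\Lambda)h_{t-1} + (\tilde{B}-B)x_t = 0$, the rank-$(k+n)$ diversity condition forces exact recovery $\tilde{\Lambda}=\Lambda$, $\tilde{B}=B$, and then the controllability rank condition with full token support makes the attainable hidden states span $\mathbb{R}^k$, giving $\omega=\rho$ globally and hence agreement at all lengths. Your additional care (making the substitution $\tilde{h}_{t-1}=h_{t-1}$ explicit and flagging the index bookkeeping for the two rank conditions) only tightens steps the paper leaves implicit.
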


\begin{proof}
    The expected risk minimization objective under both forms of supervision is $$\sum_{i\leq T}\mathbb{E}\Big[\Big(\omega \Big(\sum_{j=0}^{i-1}\tilde{\Lambda}^{j} \tilde{B} x_{i-j} \Big)-y_i\Big)^2\Big] +\mathbb{E}\Big[\Big(\sum_{j=0}^{i-1}\tilde{\Lambda}^{j} \tilde{B} x_{i-j} -h_i\Big)^2\Big], $$ 
    where $y_i$ is the true label, and $h_i$ is the hidden state. 

Under perfect minimization of the above objective, we obtain 
$\tilde{\Lambda} h_{t-1} + \tilde{B}x_t =\Lambda h_{t-1} + Bx_t \implies  [h_{t-1}^{\top}, x_t^{\top}][\Lambda-\tilde{\Lambda} ;B -\tilde{B}] =0$. Under the rank condition in Assumption~\ref{assm: ssm_cot}, we obtain $\Lambda = \tilde{\Lambda}$ and $B=\tilde{B}$. 

From equating the labels part, we obtain $\omega\Big(\sum_{j=0}^{i-1}\tilde{\Lambda}^{j} \tilde{B} x_{i-j}\Big) = \rho\Big(\sum_{j=0}^{i-1}\Lambda^{j} B x_{i-j}\Big)$. Since $\tilde{\Lambda} = \Lambda$, $\tilde{B}=B$, we obtain $\omega(h_t) = \rho(h_t)$. From the rank condition in Assumption~\ref{assm: ssm_cot}, we obtain that the set of all possible values $h_t$ is $\mathbb{R}^k$. From this we obtain that $\omega=\rho$.

Now let us consider any length  $\tilde{T}>T$, it follows that $\sum_{j=0}^{\tilde{T}-1}\tilde{\Lambda}^{j} \tilde{B} x_{i-j} = \sum_{j=0}^{\tilde{T}-1}\Lambda^{j} B x_{i-j}$. Also, since $\omega = \rho$, it follows that $\omega(\sum_{j=0}^{\tilde{T}-1}\tilde{\Lambda}^{j} \tilde{B} x_{i-j}) = \rho(\sum_{j=0}^{\tilde{T}-1}\Lambda^{j} B x_{i-j})$.

\end{proof}

\begin{assumption}
\label{assm: rnn_cot}
\begin{itemize}
\item Each function in the hypothesis class $\mathcal{H}$  is a vanilla RNN of the form \eqref{eqn: vrnn}, where the position-wise non-linearity that acts on hidden state is a single layer perceptron $\sigma \circ A$,  and $\Lambda, B$ govern the hidden state dynamics (\eqref{eqn: vrnn}).

\item A subset $\mathcal{S}$ of training data satisfies $\mathsf{rank}([h_{t-1}^{(j)}\; ; x_t^{(j)}]_{j \in \mathcal{S}})= k+n$.
\item A subset $\mathcal{R}$ of training data satisfies $\mathsf{rank}([h_t^{(j)}]_{j \in \mathcal{R}})= k$
\end{itemize}
\end{assumption}

\begin{theorem}
    If $\mathcal{H}$ satisfies the first part of Assumption~\ref{assm: rnn_cot}, and the diversity assumption in second and third parts of Assumption~\ref{assm: rnn_cot} is satisfied, then a learner trained to minimize prediction error w.r.t both true labels and CoT achieves length generalization.
\end{theorem}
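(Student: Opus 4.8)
The plan is to follow the template of the state-space chain-of-thought argument (the theorem under Assumption~\ref{assm: ssm_cot}), the one new ingredient being that the nonlinearity $\sigma$ appearing inside the recurrence and the readout must be peeled off using its injectivity. The combined objective couples a label term and a CoT term,
\[
\sum_{i\leq T}\mathbb{E}\Big[\big\|\sigma(\tilde A \tilde h_i) - y_i\big\|^2\Big] + \mathbb{E}\Big[\big\|\tilde h_i - h_i\big\|^2\Big],
\]
where $\tilde h_i = \sigma(\tilde\Lambda \tilde h_{i-1} + \tilde B x_i)$ is the learner's hidden state and $h_i$ is the true hidden state supplied as privileged CoT information. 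Since $f\in\mathcal{H}$, the true parameters $(A,\Lambda,B)$ drive this objective to zero, so any minimizer must match the CoT targets, $\tilde h_i = h_i$, and the labels, $\sigma(\tilde A \tilde h_i) = \sigma(A h_i)$, almost everywhere on the training support; as in the earlier proofs, regular closedness together with Lemma~\ref{lemma1} upgrades these to equalities on the whole support.

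First I would recover the recurrence dynamics. Fix a time step $t$ at which the diversity condition on $\mathcal{S}$ holds. Because $\tilde h_{t-1}=h_{t-1}$ and $\tilde h_t=h_t$ by CoT matching, the learner's update reads $\sigma(\tilde\Lambda h_{t-1} + \tilde B x_t) = \sigma(\Lambda h_{t-1} + B x_t)$. Since $\sigma$ is the sigmoid, hence strictly monotone and injective, the pre-activations must coincide, giving $(\tilde\Lambda - \Lambda) h_{t-1} + (\tilde B - B) x_t = 0$, i.e.\ $[\tilde\Lambda - \Lambda,\ \tilde B - B]\,[h_{t-1}^{\top}, x_t^{\top}]^{\top} = 0$. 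Evaluating this over the subset $\mathcal{S}$ whose stacked vectors $[h_{t-1}^{(j)}; x_t^{(j)}]$ have rank $k+n$ forces the $k\times(k+n)$ block $[\tilde\Lambda-\Lambda,\ \tilde B-B]$ to vanish, so $\tilde\Lambda = \Lambda$ and $\tilde B = B$. Next I would recover the readout: using $\tilde h_i=h_i$, label matching becomes $\sigma(\tilde A h_i) = \sigma(A h_i)$, and injectivity of $\sigma$ again yields $(\tilde A - A) h_i = 0$; applying this over the subset $\mathcal{R}$ whose hidden states $h_t^{(j)}$ span $\mathbb{R}^k$ gives $\tilde A = A$.

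With all three parameter blocks now equal to the ground truth, I would close by induction on length: if $\tilde h_{i-1}=h_{i-1}$ then $\tilde h_i = \sigma(\tilde\Lambda \tilde h_{i-1} + \tilde B x_i) = \sigma(\Lambda h_{i-1} + B x_i) = h_i$ for every $i$, including $i>T$, so hidden states and hence predictions $\sigma(\tilde A \tilde h_i) = \sigma(A h_i) = y_i$ coincide at all lengths, which is exactly length generalization. The step I expect to require the most care is justifying that the CoT loss genuinely pins the hidden states so that the two rank conditions can be applied as stated. The payoff of CoT is precisely here: unlike Theorem~\ref{thm:vrnn}, where the absence of intermediate supervision forced a permutation ambiguity $\tilde\Lambda=\Pi^{\top}\Lambda\Pi$ that had to be resolved through a delicate sigmoid-derivative analysis (Lemmas~\ref{lemma2}--\ref{lemma3}), supplying the hidden states collapses the whole identification problem to the two linear inversions above. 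The only place the argument genuinely uses the shape of the activation is the injectivity of $\sigma$, which is what lets us strip the nonlinearity from both the recurrence and the readout.
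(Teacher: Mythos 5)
Your proposal is correct and follows essentially the same route as the paper's proof: use CoT matching $\tilde h_i = h_i$ plus injectivity of the sigmoid to reduce to the linear system $[\tilde\Lambda-\Lambda,\ \tilde B-B][h_{t-1}^{\top},x_t^{\top}]^{\top}=0$, invoke the rank condition on $\mathcal{S}$ to get $\tilde\Lambda=\Lambda,\ \tilde B=B$, then use label matching and the rank condition on $\mathcal{R}$ to get $\tilde A=A$, from which generalization at all lengths follows. Your explicit closing induction and the remark on upgrading almost-everywhere equality via Lemma~\ref{lemma1} merely spell out steps the paper leaves implicit.
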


\begin{proof}
    The expected risk minimization objective under both forms of supervision is $$\sum_{i\leq T}\mathbb{E}\Big[\Big(\omega(\tilde{h}_i)-y_i\Big)^2\Big] +\mathbb{E}\Big[\Big(\tilde{h}_i -h_i\Big)^2\Big], $$ 
    where $y_i$ is the true label, and $h_i$ is the hidden state. 

Under perfect minimization of the above objective, we obtain 
$\tilde{h}_t = h_t \implies  \sigma(\tilde{\Lambda} h_{t-1} + \tilde{B}x_t) =\sigma(\Lambda h_{t-1} + Bx_t) \implies  [h_{t-1}^{\top}, x_t^{\top}][\Lambda-\tilde{\Lambda} ;B -\tilde{B}] =0$, where we use bijectivity of sigmoid activation $\sigma$.    Under the rank condition in Assumption~\ref{assm: rnn_cot}, we obtain $\Lambda = \tilde{\Lambda}$ and $B=\tilde{B}$.  From equating the labels part, we obtain $\sigma(A h_t) = \sigma(\tilde{A} \tilde{h}_t)$. Substituting $\tilde{h}_t = h_t$ leads to $(A-\tilde{A}) h_t = 0$.  From the third part of the Assumption~\ref{assm: rnn_cot}, it follows that  $A=\tilde{A}$. Owing to the exact match of the parameters between the learned and the true model length generalization follows. 
\end{proof}

\subsection{Rademacher complexity for hypothesis classes satisfying Assumption~\ref{assm: vrnn}}
\label{sec: rademacher_assm10}

Let $S = \{\{(x_{\leq \tau}^{i}, y_{\tau}^{i})\}_{\tau=1}^{T}\}_{i=1}^{m}$ be the set of $m$ sequences of length $t$.  Let $z_i^t=(x_{\leq t}^{i}, y_t^{i})$. Suppose $h$ is parametrized by $\theta$ and it follows Assumption~\ref{asmm: Lipschitz}. Next, we will compute the Rademacher complexity of $g(x_{\leq t}, y_t) = \ell(h(x_{\leq t}; \theta), y_t)$. Define $\mathcal{G}$ as the set of all the functions $g$. If $\ell$ is Lipschitz, we can show that $g$ is Lipschitz in $\theta$. 

\begin{equation}
\begin{split}
  |g(z_t; \theta) - g(z_t; \theta') |=  |\ell(h(x_{\leq t}; \theta), y_t) - \ell(h(x_{\leq t}; \theta'), y_t)|  &  \leq L_{\ell}\|h(x_{\leq t}; \theta)- h(x_{\leq t}; \theta')\| \\ 
  & \leq L_{\ell}L\|\theta-\theta'\|
\end{split}
\end{equation}

Recall that the Rademacher complexity is defined as 

$\mathcal{R}(\mathcal{G} \circ S) = \frac{1}{mT} \mathbb{E}_{\boldsymbol{\sigma}=\{\pm 1\}^m}\Big[ \sup_{g \in \mathcal{G}}\sum_{i,t} \sigma_i g(z_i^{t};\theta)\Big]$

Let us consider $\frac{1}{mT}\sum_{i,t} \sigma_i g(z_i^t)$.  Suppose that $\Theta_c$ is the $\eta$-cover of $\Theta$. Denote the set of functions in $\mathcal{G}$ corresponding to the elements in $\Theta_c$ as $\mathcal{G}_c$.

Set $\eta \leq \frac{\epsilon}{L L_{\ell}}$. For parameter $\theta$, select $\theta_c$ from the cover which is within $\eta$ distance of $\theta$.

$$|\frac{1}{mT}\sum_{i,t} \sigma_i g(z_i^t;\theta) - \frac{1}{mT}\sum_{i,t} \sigma_i g(z_i^t;\theta_c)| \leq  \frac{1}{mT}\sum_{i,t} | g(z_i^t;\theta) -  g(z_i^t;\theta')| \leq \epsilon $$

From the above, we obtain the following. For every $\theta$ there is a $\theta'\in \Theta_c$

$$\frac{1}{mT}\sum_{i,t} \sigma_i g(z_i^t;\theta) \leq \frac{1}{mT}\sum_{i,t} \sigma_i g(z_i^t;\theta') + \epsilon  $$

$$\forall \theta \in \Theta, \frac{1}{mT}\sum_{i,t} \sigma_i g(z_i^t;\theta) \leq \sup_{\theta^{'}\in \Theta_c}\frac{1}{mT}\sum_{i,t} \sigma_i g(z_i^t;\theta') + \epsilon  $$

$$ \sup_{\theta\in \Theta}\frac{1}{mT}\sum_{i,t} \sigma_i g(z_i^t;\theta) \leq \sup_{\theta^{'}\in \Theta_c}\frac{1}{mT}\sum_{i,t} \sigma_i g(z_i^t;\theta') + \epsilon  $$

From the above we obtain
\begin{equation}
\mathcal{R}(\mathcal{G} \circ S)\leq \mathcal{R}(\mathcal{G}_c \circ S) + \epsilon 
\label{eqn1:rad}
\end{equation}

Let $a = [g(z_1^1), g(z_1^2),  \cdots, g(z_m^T)]$, where $a$  is the vector of loss across all data points for hypothesis $g$. If each $h \in \mathcal{H}$ is bounded, the realizability condition holds, that is, loss $f\in \mathcal{H}$ and $\ell = \ell_2$, then $g$ is also bounded. Observe that $g(x_{\leq t},y) = \|h(x_{\leq t}) -y \| \leq \| h(x_{\leq t})\| + \|y\|$. Under realizability conditions, the label is bounded as well. As a result, $\|g\|\leq c$. From this we obtain that $\|a\| \leq c\sqrt{mT}$.    Let $\mathcal{A}$ be the set of all the possible vectors computed from $\mathcal{G}$ and let $\mathcal{A}_c$ be the set of all the possible vectors computed from $\mathcal{G}_c$.  

From Massart lemma,  $\mathcal{R}(\mathcal{G}_c \circ S) \leq \max_{a \in \mathcal{A}_c}\|a -\bar{a} \| \frac{\sqrt{\log(N)}}{mT}$, where $\bar{a} = \frac{1}{N}\sum_{i=1}^{N}a_i$ and $N$ is the size of the cover. We can simplify this as follows.

$$\mathcal{R}(\mathcal{G}_c \circ S) \leq \max_{a \in \mathcal{A}_c}(\|a\| + \|\bar{a} \|)\frac{\sqrt{\log(N)}}{mT} \leq 2c \sqrt{\frac{\log(N)}{mT}}$$

Combining the above with \eqref{eqn1:rad}

\begin{equation}
    \mathcal{R}(\mathcal{G} \circ S) \leq 2c \sqrt{\frac{\log(N)}{mT}} + \epsilon
\end{equation}

Since the size of the cover $N$ is finite, Rademacher complexity decreases with increase in the number of samples $m$.

\section{Experiments}
\label{sec: experiments_details}
Here we provide additional experimental results as well as the training details.

\paragraph{Model Architecture}
In all the architectures, there are two types of non-linearities, $\omega$ that generates the target label, $\psi$ that operates on inputs (used in deep sets and transformers).  We use MLPs to implement these non-linearities. We instantiate MLPs with $l$ hidden layers, and the input, output, and hidden dimensions are all the same $m=n=k$. Recall that under the realizability assumption $f\in \mathcal{H}$. Therefore, we need to select the labeling function from $\mathcal{H}$. To do so, the weights of MLP are initialized according to $\mathcal{N}(\mu,\sigma^2)$, where $\mu=0.0, \sigma=0.6$. For RNNs and SSMs, $A,B,\Lambda$ are initialized separately for the learner and true generating process as orthogonal matrices. All hidden layers, as well as the output layer are followed by a sigmoidal activation function.

\paragraph{Training Details and Hyperparameter Selection}
\label{appendix: hp_search}
We train all models with AdamW optimizer \citep{loshchilov2018decoupled} with a learning rate of $10^{-3}$, weight decay of $0.01$, $\epsilon=10^{-8}, \beta_1=0.9, \beta_2=0.95$. We reduce the learning rate by a factor of $0.8$ if the validation loss is not improved more than $10^{-6}$ for 1 epoch. This drop is followed by a cool-down period of 1 epoch, and the learning rate cannot decrease to lower than $10^{-7}$. For all datasets we use a streaming dataset where each epoch contains 100 batches of size 256 sampled online from the specified training and test distributions, and we train all models for 100 epochs.  Therefore, the size of the training dataset is $256\times 10^4$ and the size of the testing dataset is $256\times 10^2$. Since our models are generally small, running the experiments is rather inexpensive, and we carried out each experiment on 4 CPU cores using 20 GB of RAM. For inference, specially for SSM and RNN with very long sequences, we use RTX8000 GPUs.

\subsection{Failure Cases}
\label{sec:failure_cases}

So far we focused on  success scenarios for length and compositional generalization, here we provide examples to show how a model might fail.

\paragraph{$f$ is realizable in a high capacity $\mathcal{H}$} For a given $\mathcal{H}$, if all solutions to \ref{eqn: risk_min} achieve length generalization or compositional generalization, then we can guarantee length or compositional generalization regardless of the training procedure. When the capacity of $\mathcal{H}$ becomes very large, it continues to contain the right solutions but it starts to contain many incorrect solutions that match the true solution only on the support of training distribution. In such a case, there is no reason to presume that our learning procedure picks the right solution to \ref{eqn: risk_min} that also achieves length and compositional generalization. Figure~\ref{fig:lg-failure-arbitrary-expressive-H} show experiments illustrating the above. We experiment with the following scenarios for deep sets and transformers:
\begin{itemize}
    \item Deep set: We use the labeling function that takes the following form $f=\rho(\sum_{i\leq t}\phi(x_i))$ for $t\leq T$ and $f=\rho(\sum_{i\leq t}\phi(x_i))+c$ for $t>T$ with $c=0.2, T=5$. We use 1 hidden layer MLPs for $\rho,\phi$ (with no activation on the output of $\rho$). We use 2 hidden layer MLPs for $\omega,\psi$ for $h$ so that it can express the above labeling function. The input, hidden, and output dimensions are all equal $m=n=k=20$ for $f, h$. We train on sequences of length longer than $T$ to demonstrate this expressivity claim. When the model is trained on sequences of length less than $T$, due to the simplicity bias of the training procedure model learns $\rho(\sum_{i\leq t}\phi(x_i))$ and uses it on longer sequences and hence fails.
    \item Transformer: We use the labeling function that takes the following form $f=\rho(\sum_{j=1}^{i}\frac{1}{i}\phi(x_i,x_j))$ for $t\leq T$ and $f=\rho(\sum_{j=1}^{i}\frac{1}{i}\phi(x_i,x_j))+c$ for $t>T$ with $c=0.1, T=10$. We use 1 hidden layer MLPs for $\rho$ (with no activation on the output of $\rho$). We use a Transformer with 3 hidden layer MLPs for $\omega$ so that it can express the above labeling function. The input, hidden, and output dimensions are all equal $m=n=k=20$ for $f, h$. We train on sequences of length longer than $T$ to demonstrate this expressivity claim. When the model is trained on sequences of length less than $T$, due to the simplicity bias of the training procedure model learns $f=\rho(\sum_{j=1}^{i}\frac{1}{i}\phi(x_i,x_j))$ and uses it on longer sequences and hence fails.
\end{itemize}
The failures of such degenerate solutions can be visualized in Figure~\ref{fig:lg-failure-arbitrary-expressive-H} (right), where the predictions diverge from the true values when the model is only trained on sequences shorter than $T_0$. Figure~\ref{fig:lg-failure-arbitrary-expressive-H} (left) shows that when the model is trained on sequences longer than $T_0$, it can successfully generalize to longer lengths. Table~\ref{tab:lg-failure-arbitrary-expressive-H} further validates this observation numerically. It presents the test loss of each model at lengths shorter and longer than $T_0$ under the two training schemes: a) When trained only on sequences of length shorter than $T_0$ (rows corresponding to Fig~\ref{fig:deepset-arbitrary-expressive-failure} and Fig~\ref{fig:transformer-arbitrary-expressive-failure} which result in failure due to degenerate solution), b) when trained on sequences of length longer than $T_0$ (rows corresponding to Fig~\ref{fig:deepset-arbitrary-expressive-success} and Fig~\ref{fig:transformer-arbitrary-expressive-success} which result in successful generalization).

\begin{table}[]
\centering
\begin{tabular}{lcc}
\hline
Deep set & Loss ($t<T_0$) &	Loss ($t\geq T_0$) \\
Fig~\ref{fig:deepset-arbitrary-expressive-success} & $0.001\pm 10^{-4}$ & $0.002\pm 3\times10^{-4}$ \\
Fig~\ref{fig:deepset-arbitrary-expressive-failure}	& $0.0007\pm 10^{-4}$ & $0.007\pm0.001$ \\
\hline\hline
Transformer & Loss ($t<T_0$) &	Loss ($t\geq T_0$) \\
Fig~\ref{fig:transformer-arbitrary-expressive-success}	& $0.0006 \pm10^{-4}$ &	$0.006 \pm3\times10^{-3}$ \\
Fig~\ref{fig:transformer-arbitrary-expressive-failure}	& $10^{-5}\pm10^{-6}$ & $0.01\pm0.003$\\
\hline\\
\end{tabular}
\caption{Length generalization of different architectures when the hypothesis class $\mathcal{H}$ is highly expressive. For further details see Fig.~\ref{fig:lg-failure-arbitrary-expressive-H}}
\label{tab:lg-failure-arbitrary-expressive-H}
\end{table}

\begin{figure}
    \centering
    \begin{subfigure}{0.45\textwidth}
        \centering
        \includegraphics[width=\textwidth]{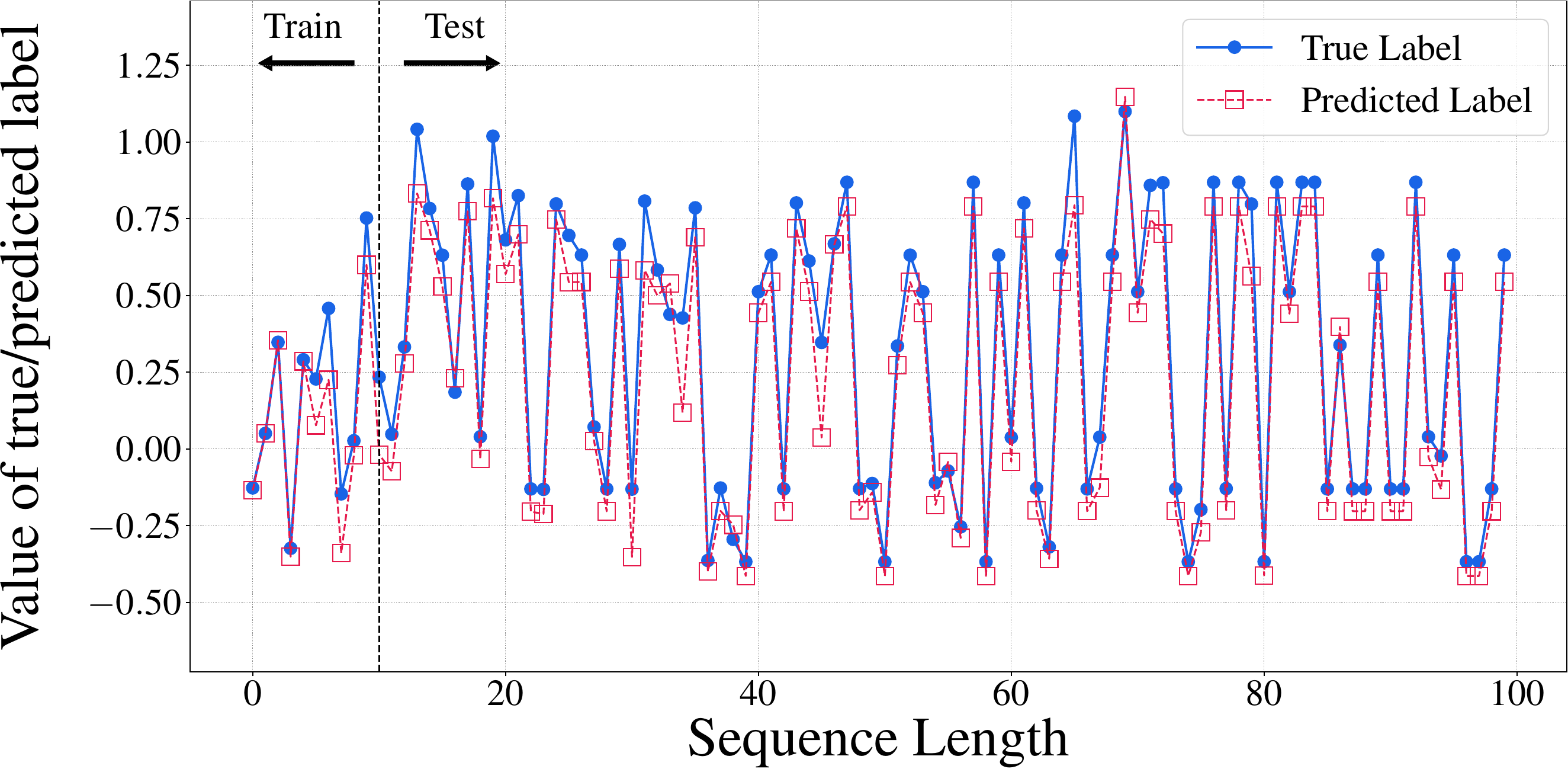}
        \caption{}
        \label{fig:deepset-arbitrary-expressive-success}
    \end{subfigure}
    \hfill
    \begin{subfigure}{0.45\textwidth}
        \centering
        \includegraphics[width=\textwidth]{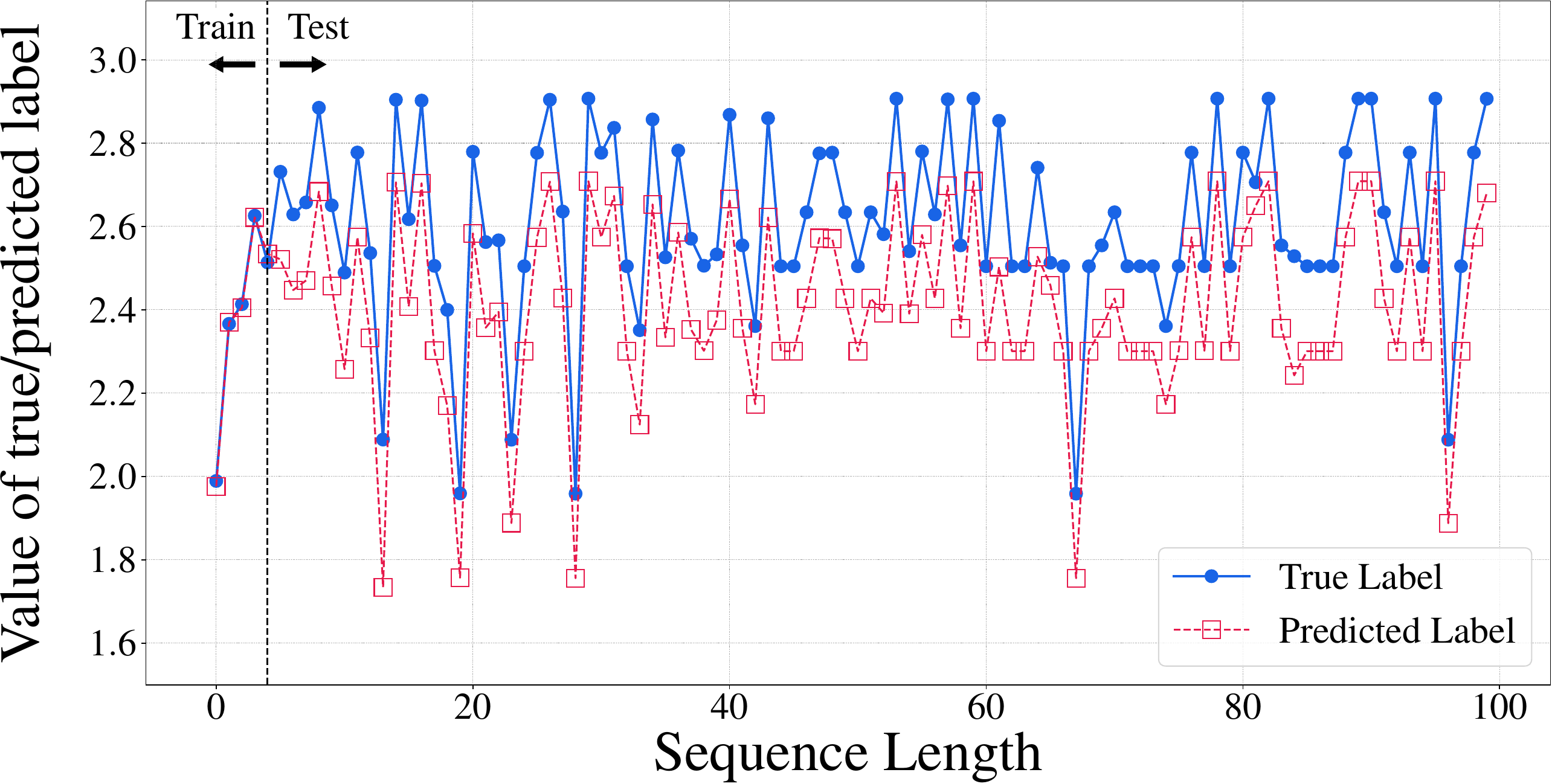}
        \caption{}
        \label{fig:deepset-arbitrary-expressive-failure}
    \end{subfigure}
    \label{fig:arbitrary-expressive-deepset}
    \begin{subfigure}{0.45\textwidth}
        \centering
        \includegraphics[width=\textwidth]{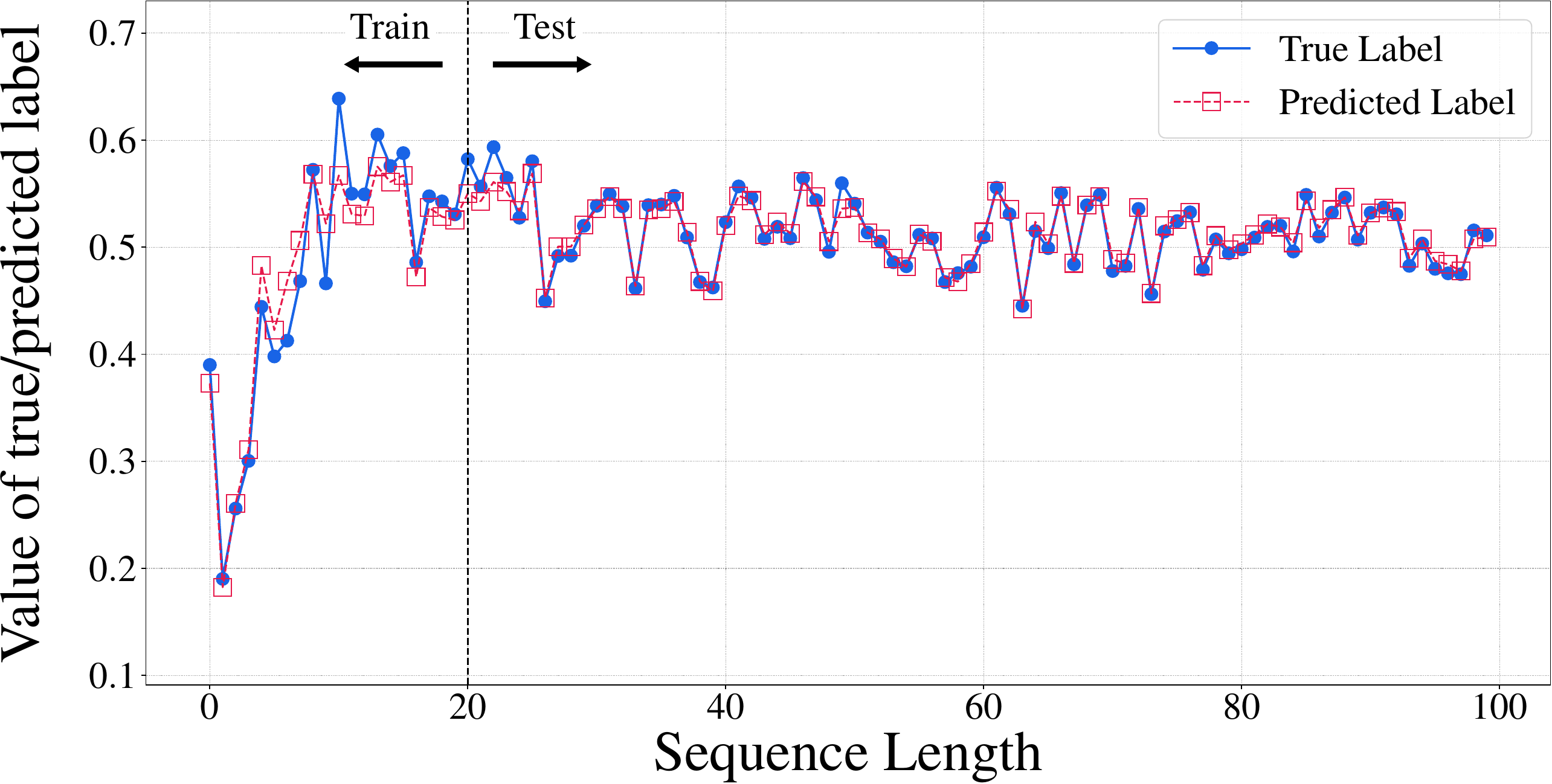}
        \caption{}
        \label{fig:transformer-arbitrary-expressive-success}
    \end{subfigure}
    \hfill
    \begin{subfigure}{0.45\textwidth}
        \centering
        \includegraphics[width=\textwidth]{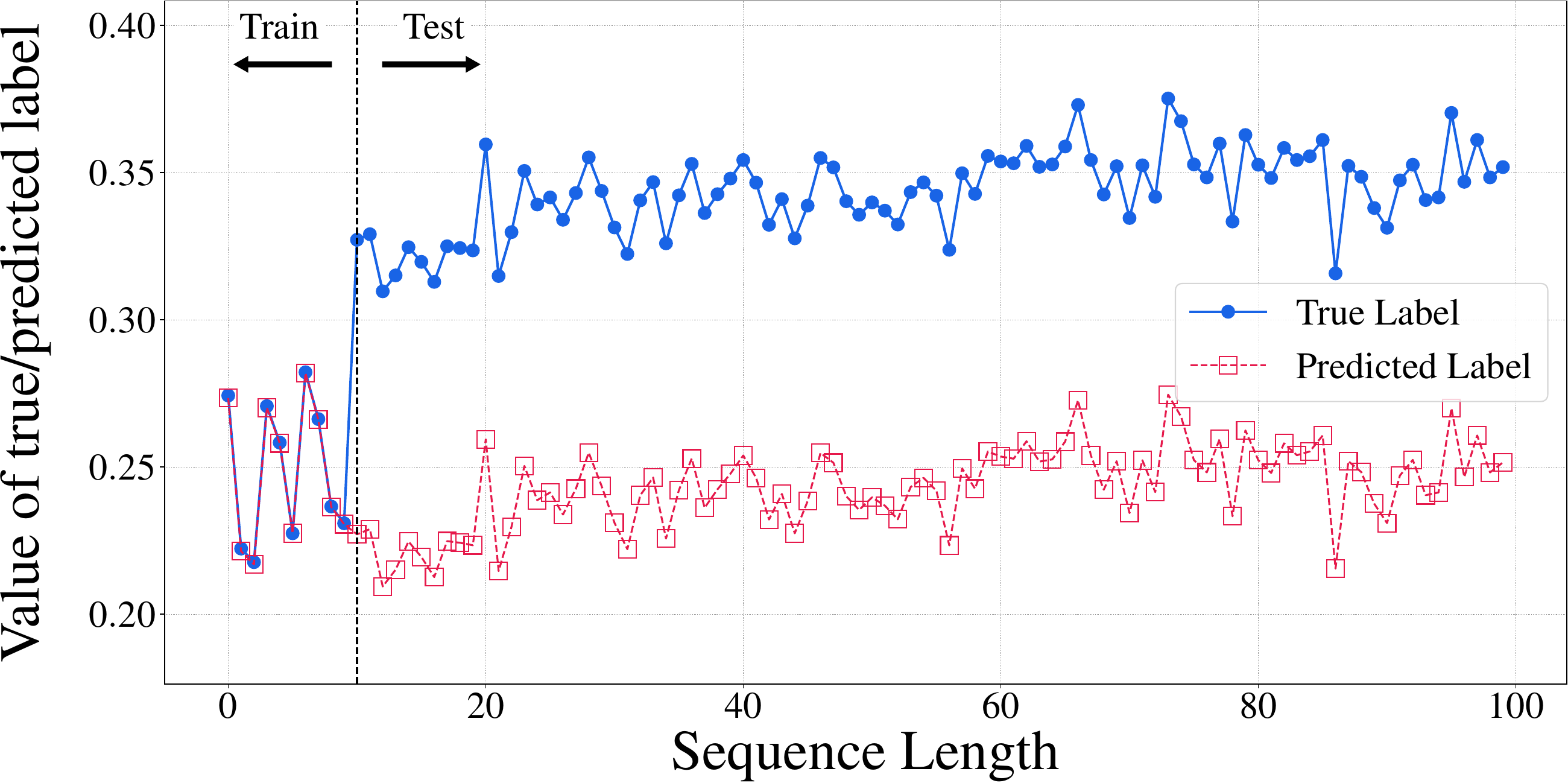}
        \caption{}
        \label{fig:transformer-arbitrary-expressive-failure}
    \end{subfigure}
    \caption{A failure case of length generalization under arbitrary expressive generative model with (a,b) Deep sets, (c,d) and Transformer. The generative function on both cases introduces an offset to sequences longer than some critical length ($T_0$). The learner is once trained on sequences longer than $T_0$ and successfully generalizes (a,c), and once is trained only on sequences shorter than $T_0$ where the offset never appears, and hence fails to generalize beyond that.}
    \label{fig:lg-failure-arbitrary-expressive-H}
\end{figure}

\end{document}